\def\eqref#1{equation~\ref{#1}}
\def\1{\bm{1}}
\def\eps{{\varepsilon}}
\DeclareMathAlphabet{\mathsfit}{\encodingdefault}{\sfdefault}{m}{sl}
\SetMathAlphabet{\mathsfit}{bold}{\encodingdefault}{\sfdefault}{bx}{n}
\renewcommand{\E}{\mathbb{E}}
\renewcommand{\R}{\mathbb{R}}
\theoremstyle{plain}
\newtheorem{theorem}{Theorem}[section]
\newtheorem{proposition}[theorem]{Proposition}
\newtheorem{lemma}[theorem]{Lemma}
\theoremstyle{definition}
\theoremstyle{remark}
\newcommand{\mc}[1]{\ensuremath{\mathcal{#1}}\xspace}
\newcommand{\mb}[1]{\ensuremath{\mathbf{#1}}\xspace}
\newcommand{\tn}[1]{\ensuremath{\textnormal{#1}}\xspace}
\newcommand{\ol}[1]{\ensuremath{\overline{#1}}\xspace}
\DeclareMathAlphabet\mathbfcal{OMS}{cmsy}{b}{n}
\newcommand{\mbc}[1]{\ensuremath{\mathbfcal{#1}}\xspace}
\newcommand{\bX}{{\mb{X}}}
\newcommand{\bc}{{\mb{c}}}
\newcommand{\bZ}{{\mb{Z}}}
\newcommand{\bU}{{\mb{U}}}
\newcommand{\bV}{{\mb{V}}}
\newcommand{\bR}{{\mb{R}}}
\newcommand{\bN}{{\mb{N}}}
\newcommand{\bd}{{\mb{d}}}
\newcommand{\bu}{{\mathbf{u}}}
\newcommand{\bn}{{\mathbf{n}}}
\newcommand{\bv}{{\mathbf{v}}}
\newcommand{\br}{{\mathbf{r}}}
\newcommand{\bD}{{\mathbf{D}}}
\title{Algorithmic Guarantees for Distilling \\ Supervised and Offline RL Datasets}
\author{%
Aaryan Gupta \\ Google DeepMind India \\  {\tt aaryangupta@google.com}
\and
Rishi Saket \\ Google DeepMind India \\ {\tt rishisaket@google.com}
\and
Aravindan Raghuveer \\ Google DeepMind India \\ {\tt araghuveer@google.com}
}
\begin{document}

\maketitle

\begin{abstract}
Given a training dataset, the goal of dataset distillation is to derive a synthetic dataset such that models trained on the latter perform as well as those trained on the training dataset. In this work, we develop and analyze an efficient dataset distillation algorithm for supervised learning, specifically regression in $\mathbb{R}^d$, based on matching the losses on the training and synthetic datasets with respect to a fixed set of randomly sampled regressors without any model training. Our first key contribution is a novel performance guarantee proving that our algorithm needs only $\tilde{O}(d^2)$ sampled regressors to derive a synthetic dataset on which the MSE loss of any bounded linear model is approximately the same as its MSE loss on the given training data. In particular, the model optimized on the synthetic data has close to minimum loss on the training data, thus performing nearly as well as the model optimized on the latter. Complementing this, we also prove a matching lower bound of $\Omega(d^2)$ for the number of sampled regressors showing the tightness of our analysis.

Our second contribution is to extend our algorithm to offline RL dataset distillation by matching the Bellman loss, unlike previous works which used a behavioral cloning objective. This is the first such method which leverages both, the rewards and the next state information, available in offline RL datasets, without any policy model optimization. We show similar guarantees: our algorithm generates a synthetic dataset whose Bellman loss with respect to any linear action-value predictor is close to the latter’s Bellman loss on the offline RL training dataset. Therefore, a policy associated with an action-value predictor optimized on the synthetic dataset performs nearly as well as that derived from the one optimized on the training data. We conduct extensive experiments to validate our theoretical guarantees and observe performance gains on real-world RL environments with offline training datasets and supervised regression datasets.
\end{abstract}

\section{Introduction}

Reinforcement learning (RL) which is increasingly used to train very large machine learning models has two training paradigms: online and offline. While in online RL a policy model is trained while interacting with the environment, offline RL trains a model on data points collected from multiple trajectories of interactions with the environment and external entities (e.g. humans, AI agents). In many applications 
online RL is either not possible, very expensive or not scalable due the requirement of a dedicated agent interacting with the environment for the duration of model training. A key benefit of offline RL is its ability to ingest large amounts of diverse training data~\citep{fu2021d4rldatasetsdeepdatadriven}, and consequently, offline RL has become popular for training large models for tasks in for e.g. natural language processing, computer vision, robotics etc~\citep{levine2020offlinereinforcementlearningtutorial}.

However, the use of large scale datasets presents challenges related to their storage, management as well the computational expense incurred in their use for training multiple models with different hyperparameters and architectures.  
One way to mitigate this issue is to create a smaller synthetic dataset derived from the training dataset i.e., a \emph{distillation} of the latter. Dataset distillation (DD) for supervised learning has been extensively studied in previous works (e.g. \cite{Wangetal18,sachdeva2023data,10273632}) which have developed a variety of methods based on matching the loss gradients or feature-embeddings between the training and synthetic datasets by optimizing the latter. This is done either with respect to networks being trained simultaneously using bi-level optimization, or a fixed collection of sampled networks. 
Distinct from sampling, coreset or random projection methods, DD techniques generate synthetic data via optimization with respect to networks which may also be trained in the process. These DD techniques have been shown to perform well on real-world supervised datasets in terms of the quality as well as the size of the generated syntheic dataset as the latter is explicitly optimized.  

While some works (e.g. \cite{nguyen2021dataset,chen2024provable}) have also provided theoretical performance guarantees, most DD techniques are heuristics albeit with empirically observed gains.

For offline RL, a few recent works have proposed methods using behavioral cloning (BC)  for distilling an offline training dataset comprising trajectories of (state, action)-pairs. In \cite{lei2024offline}, the authors propose optimizing the synthetic dataset using policy-based BC loss leveraging action-value weights learnt from offline RL. On the other hand, the method proposed by \cite{light2024datasetdistillationofflinereinforcement} extends the matching loss gradients approach to offline RL. In particular, it optimizes a synthetic dataset to match the BC loss gradients on the offline and synthetic data, where the gradients are with respect to the parameters of sampled action predictor networks.

As can be seen, the state of research into offline RL dataset distillation, while nascent, is also unsatisfying. Firstly, the BC does not leverage the observed reward that is usually available in offline RL datasets, and typically only perform well when the training dataset is generated by expert policies. Secondly, there is a sparsity of theoretical performance guarantees for the DD techniques for offline RL as well as supervised learning DD. For linear regression, the work of \cite{chen2024provable} proves efficiency guarantees assuming however that a trained model is available, while the work of \cite{nguyen2021dataset} -- while not training a model -- proves convergence only of the synthetic feature-vectors for given synthetic labels. For offline RL, the work of \cite{lei2024offline} provides analytical guarantees, while requiring an trained policy model on the given training dataset. To the best of our knowledge, there are no provably efficient DD algorithms known without model training for either supervised learning or offline RL.

{\bf \underline{Our Contributions.}}  In this work we make progress towards bridging the above gaps in our understanding through the following contributions:

{\bf Dataset Distillation in supervised regression.} We propose an algorithm minimizing the squared difference of MSE losses between the training and the synthetic datasets with respect to a fixed set of randomly sampled models.  While our algorithm is along the line of previous DD approaches using loss, gradient loss or embedding matching techniques, our contribution is to prove that it  admits  efficiency and performance guarantees for linear regression. In particular, we prove that the optimization objective is convex and tractable, and in $d$-dimensions optimizing the synthetic dataset w.r.t. $\tilde{O}(d^2)$\footnote{$\tilde{O}$ hides polylogarithmic factors} randomly sampled regressors suffices to guarantee that any bounded linear regressor has approximately the same MSE loss on the training and synthetic datasets. Therefore, the optimum linear regressor on the synthetic dataset is close to that on the training dataset. This is the first performance and efficiency guarantee for supervised DD without model training. \\ 
We also prove a lower bound of $\Omega(d^2)$ on the number of randomly sampled linear regressors to obtain a good quality synthetic dataset. This result shows the tightness of our algorithmic guarantees.

{\bf Dataset Distillation in offline RL.} We extend the above approach for dataset distillation in supervised regression to offline RL DD by matching the Bellman loss on the training and synthetic datasets  w.r.t. a collection of randomly sampled linear $Q$-value predictors over $\R^d$. 
The performance guarantees are in a similar vein as the supervised regression case, though more involved due to the $\max$ term in the Bellman Loss formulation. 

Nevertheless, we obtain a performance guarantee showing that minimizing to a small value the difference of the Bellman Losses with respect to $\tn{exp}(O(d\log d))$ randomly sampled $Q$-value predictors (without model training) generates a synthetic dataset whose Bellman Loss w.r.t. any linear bounded $Q$-value predictor is a close approximation to that on the training dataset. Consequently, the Bellman Loss minimizer $Q$-value predictor on the synthetic dataset is close to optimal on the training dataset. 

{\bf Offline RL with decomposable feature-maps.} 
We consider a natural setting where the state-action feature-map is the sum of  individual feature-maps of the state and action. In this scenario, we prove that our method requires only $\tilde{O}(d^2)$ sampled linear $Q$-value predictions, instead of the exponentially many required in the general case. Additionally, we show that the finding the synthetic dataset is a tractable convex optimization when the state and action feature-maps are linear.

{\bf Empirical Evaluations.} We conduct experiments on supervised regression and offline RL datasets to validate our proposed algorithms, showing that our techniques obtain improved performance over standard baselines, both in terms of quality and size of the generated datasets. Notably, our experiments demonstrate the effectiveness of our algorithms using very few sampled models in both supervised regression and offline RL settings. Although the guarantees are proven for linear models, we show that our algorithms work well in practice with non-linear neural networks.

\section{Previous Related Work}
{\bf Dataset Distillation.} The work of \cite{Wangetal18} introduced the problem of \emph{distilling} a dataset into a representative (and smaller) synthetic dataset, in the setting of supervised learning. This and other works e.g. \cite{deng2022remember} use a bi-level optimization formulation in which the model is optimized on the training dataset while the synthetic dataset is optimized on the trained model. A related set of methods rely on matching various properties of the synthetic dataset with the training dataset. In particular, the work of \cite{zhao2021dataset,Zhao2021DatasetCW} matches the model's loss gradients on the training and synthetic datasets as an optimization over the latter, while the model is alternately optimized over the training dataset. In a similar vein, the work of \cite{Wangetal22} aligns the feature distribution of the two datasets in a dynamic bi-level optimization approach, while the works of \cite{Cazenavetteetal22,Cuietal23} match the training trajectory of an initial model optimized on the training dataset with its training trajectory on the synthetic dataset, by optimizing on the latter. Unlike the mentioned works optimizing a model along with the synthetic dataset, the work of \cite{ZBDistribMatch} instead matched the feature-distributions on the training and synthetic datasets with respect to a fixed collection of randomly sampled networks. For linear ridge regression, the work of \cite{nguyen2021dataset} implicitly matched the regression losses by minimizing a surrogate objective, while proving convergence of synthetic feature-vectors given the synthetic labels. More recently, \cite{chen2024provable} analytically gave an efficient synthetic dataset generation algorithm for linear ridge regression, requiring however access to the optimal regression model for the training dataset.

Closely related to DD for linear regression is \emph{matrix sketching} which  provides a principled way to reduce the dimensionality (or size) of training data. By applying randomized projections (e.g., the Johnson--Lindenstrauss transform~\citep{johnson1984extensions,sarlos2006improved}) or leverage-score sampling~\citep{drineas2006fast,mahoney2011randomized}, one can construct a much smaller \emph{sketch} of the original data while provably preserving the solution quality of least-squares regression. 

However, these guarantees are largely limited to linear and convex models. In contrast, DD methods can be applied to neural networks, albeit without comparable theoretical guarantees. Additionally, DD techniques optimize the synthetic dataset, and have been shown to to work well in practice on real data in terms of size compression and quality preservation.

{\bf Offline RL.} 
A key advantage is that 
offline RL is adaptable to data generated by sub-optimal policies~\citep{levine2020offlinereinforcementlearningtutorial,fu2021d4rldatasetsdeepdatadriven}, while also being scalable to large datasets~\citep{Lange2012}. On the other hand, the lack of online exploration can lead to less generalizable policies being trained due to distributional shifts. To address this, several techniques based on constraining the policy \citep{fujimoto2021a,tarasov2023revisiting} or regularization  of $Q$-value predictors have been developed~\citep{Kumaretal20,kostrikov2022offline}. A major practical consideration is the exponential growth in dataset sizes, along with which the associated challenges of storage, transfer and training on datasets as well as maintaining privacy controls have only multiplied. Dataset distillation has been proposed to tackle these problems, with existing works developing behavioral cloning methods for distilling an offline training dataset comprising trajectories of (state, action)-pairs. While \cite{lupu2024behaviour} propose behavior distillation for online RL, the work of \cite{lei2024offline} tackles the offline RL case by first training an action-value predictor and using action-value weighted BC to optimize the synthetic dataset. On the other hand, the method proposed by \cite{light2024datasetdistillationofflinereinforcement} extends the matching loss gradients approach to offline RL. In particular, it optimizes a synthetic dataset to match the behavioral cloning loss gradients on the offline and synthetic data, where the gradients are with respect to the parameters of sampled action predictor networks.

\section{Problem Definition}\label{sec:problem_sup}
We will use $\mc{B}(t, r) := \{\bx \in \R^t\,\mid\,\|\bx\|_2 \leq r\}$  to denote the $\ell_2$-ball in $\R^t$ centered at $\mb{0}$ of radius $r$. 

{\bf Supervised Learning Dataset Distillation.} 
We consider regression tasks over $d$-dimensional real-vectors and real-valued labels. For an $n$-point dataset $D \in (\R^d\times \R)^n$, and a predictor $f : \R^d \to \R$, the mean squared error (MSE) of $f$ on $D$ is $L_{\tn{mse}}(D, f) := (1/n)\sum_{(\bx,y)\in D}(y - f(\bx))^2$. We assume that the training datapoints are norm bounded by $B$ and labels have magnitude at most $b$ for some parameters $B$ and $b$, and impose the same restriction on the synthetic dataset.
Let $D^{\sf sup}_{\tn{train}} = \{(\bx_i, y_i) \in \mc{B}(d,B) \times [-b,b]\}_{i=1}^n$ be an $n$-sized training dataset, while we denote an $m$-sized synthetic dataset by $D^{\sf sup}_{\tn{syn}} = \{(\bz_i, \hat{y}_i)\in \mc{B}(d,B) \times [-b,b]\}_{i=1}^m$. 
By appending a $1$-valued coordinate to feature-vectors we can omit the constant offset and restrict ourselves to linear regressors of the form $\bv^{\sf T}\bx$ as a prediction for the label $y$. We impose $\|\bv\|_2 \leq 1$ as a bound on the norm, and let $\mc{F}_0$ denote the class of such regressors. 
We define the \textit{supervised regression dataset distillation} problem as: for a parameter $\eps$, given $D^{\sf sup}_{\tn{train}}$, compute $D^{\sf sup}_{\tn{syn}}$ such that
\begin{equation}
    L^{\tn{err}}_{\tn{mse}}(D^{\sf sup}_{\tn{train}}, D^{\sf sup}_{\tn{syn}}, f) := \left(L_{\tn{mse}}(D^{\sf sup}_{\tn{train}}, f) - L_{\tn{mse}}(D^{\sf sup}_{\tn{syn}}, f)\right)^2 \leq \eps, \qquad  \tn{for all }f \in \mc{F}_0. \label{eqn:errMSEloss}  
\end{equation}

{\bf Offline RL Dataset Distillation.}
Consider a Markov Decision Process (MDP) given by $\langle \mc{S}, \mc{A}, \mc{P}, \mc{R}, \gamma\rangle$, where (i) $\mc{S}$ is the set of states, (ii) $\mc{A}$ is the set of actions, (iii) $\mc{P}$ is the transition probability i.e., $\mc{P}(s'\mid s,a)$ denotes the probability of transitioning to state $s'$ from state $s$ on action $a$, (iv) $\mc{R}$ is the reward function, where $\mc{R}(s,a) \in [0, R_{\tn{max}}]$ is the reward obtained on action $a$ at state $s$, and (v) $\gamma$ is the discount factor. A \emph{policy} $\pi$ is a mapping from states to action, and the goal is to maximize at each state its \emph{value function}: $v_\pi(s) = \E_{\pi}\left[\sum_{t=0}^t \gamma^t R_t\right|s_0 = s]$ where $R_t$ is the reward at step $t$ under the policy starting from state $s$. The action-value function is the expected sum of discounted reward starting from a state and a specific action i.e., $q_{\pi}(s,a) = \E_{\pi}\left[\sum_{t=0}^t \gamma^t R_t\,\mid\, s_0 = s, a_0 = a\right]$. On the other hand, given an action value function predictor $f : \mc{S} \times \mc{A} \to \R$, the corresponding greedy policy $\pi'$ given by $\pi'(s)\in \tn{argmax}_a f(s, a)$ always yields at least as much expected discounted reward starting from any state as $\pi$. 
In offline RL, a dataset $D^{\sf orl}$ consists of a collection  of state, action, reward and next state tuples of the form $(s,a,r,s')$, with generated by some (non-optimal) policies. The goal is to learn a policy from this dataset which maximizes the value function. We cast this problem as deriving a action-value predictor from $D^{\sf orl}$, and taking the greedy policy with respect to it. Under reasonable assumptions on the  MDP and $D^{\sf orl}$, the performance of an action-value predictor $f$ is measured by the Bellman loss (see Appendix \ref{app:misc1} for details) given by:
\begin{equation}
    L_{\tn{Bell}}(D^{\sf orl}, f) = \E_{(s, a, r, s') \leftarrow D^{\sf orl}}\left[\left(f(s, a) - r - \gamma \max_{a' \in \mc{A}} f(s',a')\right)^2\right]. 
\end{equation}
{\it Feature Map and linear action-value predictors.} We assume that $\mc{S},  \mc{A} \subseteq \mc{B}(d_0, B_0)$, and that $\phi : \mc{B}(d_0, B_0)^2 \to \R^d$ is a given feature-map s.t. $\|\phi(s,a)\|_2 \leq B$ for any $(s,a)$ in its domain, for some parameters $B_0$, $d_0$ and $B$. An action-value predictor $f$ is given by $f(s, a) := \bv^{\sf T}\phi(s,a)$, for $(s,a) \in \mc{B}(d_0, B_0)^2$, and we restrict ourselves to the class of such predictors $\mc{Q}_0$ satisfying $\|\bv\|_2 \leq 1$. \\
{\it Datasets.} The training dataset is $D^{\sf orl}_{\tn{train}} = \{(s_i, a_i, r_i, s_i')\}_{i=1}^n \subseteq \mc{S}\times \mc{A} \times [0, R_{\tn{max}}] \times \mc{S}$ of  state, action, reward and next state tuples. Since $\mc{S}$ or $\mc{A}$ could either be discrete or non-convex, for tractable optimization the synthetic dataset $D^{\sf orl}_{\tn{syn}}$ is allowed to consist of tuples $\{(\hat{s}_i, \hat{a}_i, \hat{r}_i, \hat{s}_i') \in \mc{B}(d_0, B_0) \times \mc{B}(d_0, B_0) \times [0, R_{\tn{max}}] \times \mc{B}(d_0, B_0)$. With this, $L_{\tn{Bell}}$ is can defined for $D^{\sf orl}_{\tn{syn}}$ and any $f \in \mc{Q}_0$ as:
\begin{equation}
    L_{\tn{Bell}}(D^{\sf orl}_{\tn{syn}}, f) = \E_{(\hat{s}, \hat{a}, \hat{r}, \hat{s}') \leftarrow D^{\sf orl}_{\tn{syn}}}\left[\left(f(\hat{s}, \hat{a}) - \hat{r} - \gamma \max_{\hat{a}' \in  \mc{A}} f(\hat{s}',\hat{a}')\right)^2\right]. 
\end{equation}
Note that in the above, the maximization inside the loss is taken over the original set of actions, consistent with the definition of the Bellman loss. 
With the above setup, we define the \textit{offline RL dataset distillation problem} as follows: For a parameter $\eps$, given $D^{\sf orl}_{\tn{train}}$, compute $D^{\sf orl}_{\tn{syn}}$ such that
\begin{equation}
    L^{\tn{err}}_{\tn{Bell}}(D^{\sf orl}_{\tn{train}}, D^{\sf orl}_{\tn{syn}}, f) := \left(L_{\tn{Bell}}(D^{\sf orl}_{\tn{train}}, f) - L_{\tn{Bell}}(D^{\sf orl}_{\tn{syn}}, f)\right)^2 \leq \eps, \qquad  \tn{for all  }f \in \mc{Q}_0.  \label{eqn:errBell}
\end{equation}

\section{Our Results} \label{sec:our_results}
{\bf Supervised Learning Dataset Distillation.}  For convenience, we shall employ a homogeneous formulation i.e., with $0$ label, using the concatenation $\zeta : \R^d \times \R \to \R^{d+1}$ given by $\zeta(\bx , y) = (x_1, \dots, x_d, y)$ where $\bx = (x_1, \dots, x_d)$.  Observe that $\br^{\sf T}\zeta(\bx, y) = \bv^{\sf T}\bx - y = f(\bx) - y$ where $\br = (v_1, \dots, v_d, -1)$,  and $f \in \mc{F}_0$ s.t. $f(\bx) = \bv^{\sf T}\bx$. Further, if $\|\bv\|_2 \leq 1$ then $1 \leq \|\br\|_2 \leq 2$. Let $\mc{F}$ be the class regressors with target label $0$ where each $h \in \mc{F}$ is given by $h(\bx, y) := \br^{\sf T}\zeta(\bx, y)$ where $1\leq \|\br\|_2 \leq 2$. One can thus extend the notion of the MSE loss $L_{\tn{\tn{mse}}}$ to $\mc{F}$ by letting $L_{\tn{mse}}(D, h) := \E_{(\bx, y)\in D}\left[h(\bx, y)^2\right]$.  We also define $G$ to be a distribution over regressors where a random $g \in G$ is given by sampling $\br \sim N(0,1/(d+1))^{d+1}$ u.a.r. and letting $g(\bx, y) := \br^{\sf T}\zeta(\bx, y)$. With this setup, we prove the following theorem.

\begin{theorem}\label{thm:supervised-distill-1}
    Let $D^{\sf sup}_{\tn{train}}$ the training dataset as described above. For any $\Delta > 0$ and $\delta > 0$, let $g_1, \dots, g_k$  be iid regressors sampled from  $G$ for some $k = O\left({d^2}\log({d}(B+b)/{\Delta})\log(1/\delta)\right)$. Then, with probability $1-\delta$ over the choice of $g_1, \dots, g_k$, if there exists $D^{\sf sup}_{\tn{syn}}$ s.t. 
\begin{equation}
    \sum_{j = 1}^{k} L^{\tn{err}}_{\tn{mse}}(D^{\sf sup}_{\tn{train}}, D^{\sf sup}_{\tn{syn}}, g_j) \leq \Delta', \label{eqn:erronsampledregs}
\end{equation}
then, for all $h \in \mc{F}$, $L^{\tn{err}}_{\tn{mse}}(D^{\sf sup}_{\tn{train}}, D^{\sf sup}_{\tn{syn}}, h) \leq \Delta$, where $\Delta' = \Delta k/O(d^2)$, in particular this holds also for all $f \in \mc{F}_0$. 
\end{theorem}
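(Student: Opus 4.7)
My first move is to rewrite every quantity as a quadratic form on $\R^{d+1}$. Since $h(\bx,y) = \br^{\sf T}\zeta(\bx,y)$, the MSE on any dataset $D$ satisfies $L_{\tn{mse}}(D, h) = \br^{\sf T} M_D \br$, where $M_D := \E_{(\bx,y)\in D}[\zeta(\bx,y)\zeta(\bx,y)^{\sf T}]$ is the empirical second-moment matrix on $D$. Setting $M := M^{\sf sup}_{\tn{train}} - M^{\sf sup}_{\tn{syn}}$, a symmetric $(d+1)\times(d+1)$ matrix, the per-regressor error becomes $L^{\tn{err}}_{\tn{mse}}(D^{\sf sup}_{\tn{train}}, D^{\sf sup}_{\tn{syn}}, h) = (\br^{\sf T} M \br)^2$. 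The norm bounds on feature vectors and labels give $\|M\|_{\tn{op}} \leq 2(B+b)^2$ and hence $\|M\|_F \leq 2\sqrt{d+1}\,(B+b)^2$, so $M$ ranges over a bounded subset of the $\Theta(d^2)$-dimensional space of symmetric matrices.

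Next I would invoke a standard Gaussian moment identity: for $\br \sim N(0, I_{d+1}/(d+1))$,
\[
\E\bigl[(\br^{\sf T} M \br)^2\bigr] \;=\; \frac{1}{(d+1)^2}\bigl(2\|M\|_F^2 + (\Tr M)^2\bigr) \;\geq\; \frac{2\|M\|_F^2}{(d+1)^2}.
\]
If I can show that this expectation is at most $O(\Delta/d^2)$, then $\|M\|_F^2 \leq O(\Delta)$, hence $\|M\|_{\tn{op}}^2 \leq \|M\|_F^2 \leq O(\Delta)$. Any $h \in \mc{F}$ has an associated $\br'$ with $\|\br'\|_2 \leq 2$, so
\[
L^{\tn{err}}_{\tn{mse}}(D^{\sf sup}_{\tn{train}}, D^{\sf sup}_{\tn{syn}}, h) \;=\; (\br'^{\sf T} M \br')^2 \;\leq\; \|\br'\|_2^4 \,\|M\|_{\tn{op}}^2 \;\leq\; 16\, \|M\|_{\tn{op}}^2 \;=\; O(\Delta),
\]
which is the desired uniform bound (constants can be absorbed into the hidden $O(\cdot)$ in the definition of $k$).

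The core step is therefore a uniform concentration claim: with probability $1-\delta$ over $g_1,\dots,g_k$, for every admissible $M$,
\[
\E_{\br \sim G}\bigl[(\br^{\sf T} M \br)^2\bigr] \;\leq\; \frac{1}{k}\sum_{j=1}^{k}(g_j^{\sf T} M g_j)^2 + O(\Delta/d^2).
\]
Combined with the hypothesis that the empirical sum is at most $\Delta' = \Delta k/O(d^2)$, this closes the argument. For a \emph{fixed} $M$, each summand is a fourth-order Gaussian chaos that is sub-exponential, with parameters controlled by $\|M\|_{\tn{op}}$ and $\|M\|_F$ (via Hanson--Wright together with $\chi^2$-tails for $\|g_j\|^2$), giving concentration within $O(\Delta/d^2)$ with failure probability $\exp(-\Omega(k/d^2))$. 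Uniformity over $M$ is obtained by covering the bounded set of admissible matrices with an $\epsilon$-net of size $(O((B+b)d/\eta))^{O(d^2)}$ for $\eta = \Theta(\Delta/((B+b)^2 d^2))$ and union-bounding; the local Lipschitz continuity of $M \mapsto (\br^{\sf T} M \br)^2$ in $\|\cdot\|_F$ on the admissible set lets one absorb the discretization error into $O(\Delta/d^2)$. Solving for $k$ yields the claimed $k = O(d^2 \log(d(B+b)/\Delta)\log(1/\delta))$.

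The main obstacle I expect is this uniform concentration step: quartic chaoses $(g^{\sf T} M g)^2$ have heavier tails than simple quadratic forms, so one has to use the right tail bound and extract a Bernstein-style inequality whose variance proxy scales with $\|M\|_F^2/(d+1)^2$ rather than $\|M\|_{\tn{op}}^2$, in order to match the target accuracy $\Delta/d^2$ with only $\tilde{O}(d^2)$ samples and survive the union bound over an $\exp(\Theta(d^2 \log(\cdot)))$-sized net. Everything else—the moment identity, the Frobenius-to-operator-norm reduction, and the $\epsilon$-net counting—is routine once this ingredient is in place.
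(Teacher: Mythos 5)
Your reformulation via the moment-difference matrix $M := M_{\tn{train}} - M_{\tn{syn}}$ is sound and genuinely different from the paper's route, and in several places cleaner: the identity $\E[(\br^{\sf T}M\br)^2] = q^{-2}\left(2\|M\|_F^2 + (\Tr M)^2\right)$ replaces the paper's explicit basis-decomposition computation in Lemma~\ref{lem:probbd}; the bound $\|M\|_{\tn{op}} \leq \|M\|_F$ lets you conclude for \emph{all} $h\in\mc{F}$ at once without a separate net over regressors; and netting over the $\Theta(d^2)$-dimensional space of admissible symmetric matrices sidesteps the Cholesky-based reduction of $D^{\sf sup}_{\tn{syn}}$ to $q$ points (Lemma~\ref{lem:subsetreplace}). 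Your net-cardinality accounting, $\log|\mathcal{N}| = O(d^2\log(d(B+b)/\Delta))$, matches the paper's.

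The gap is in the concentration step, which is the probabilistic core. The uniform additive claim $\E[(\br^{\sf T}M\br)^2] \leq \frac{1}{k}\sum_{j}(g_j^{\sf T}Mg_j)^2 + O(\Delta/d^2)$ cannot be obtained with $k=\tilde{O}(d^2)$ over all admissible $M$: when $\|M\|_F^2 \gg \Delta$, the summands $(g_j^{\sf T}Mg_j)^2$ have standard deviation of the same order as their mean (up to $(B+b)^8/q^2$ in the worst case), so any Bernstein-type bound at additive accuracy $\Delta/d^2$ has exponent scaling like $-k\Delta^2/\mathrm{poly}(d, B+b)$, forcing $k$ polynomial in $1/\Delta$ rather than logarithmic; and even the per-$M$ failure probability $\exp(-\Omega(k/d^2))$ you state would, after the union bound over the $\exp(\Theta(d^2\log(\cdot)))$-sized net, force $k=\Omega(d^4\log(\cdot))$ rather than the claimed $O(d^2\log(\cdot))$. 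What you actually need is only the contrapositive implication: if $\E[(\br^{\sf T}M\br)^2] > C\Delta/d^2$ then $\frac{1}{k}\sum_j(g_j^{\sf T}Mg_j)^2 > \Delta/O(d^2)$. This is a \emph{multiplicative} lower-tail statement, and it does hold with per-$M$ failure probability $e^{-\Omega(k)}$: since $g^{\sf T}Mg$ is a degree-$2$ Gaussian polynomial, hypercontractivity gives $\E[(g^{\sf T}Mg)^4] \leq C\left(\E[(g^{\sf T}Mg)^2]\right)^2$, so Paley--Zygmund (or the Carbery--Wright bound of Theorem~\ref{them:Carbery-Wright}, which is exactly what the paper uses) yields $\Pr\left[(g^{\sf T}Mg)^2 \geq c\,\E[(g^{\sf T}Mg)^2]\right] \geq c'$ for absolute constants $c, c' > 0$, and a Chernoff bound on the number of indices $j$ achieving this gives the $e^{-\Omega(k)}$ rate. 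With that substitution your union bound closes and the claimed $k$ follows; as written, the Bernstein/Hanson--Wright additive route does not.
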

It can be seen that $L^{\tn{err}}_{\tn{mse}}(D^{\sf sup}_{\tn{train}}, D^{\sf sup}_{\tn{syn}}, g)$ is a convex function over the points of $D^{\sf sup}_{\tn{syn}}$ (see Appendix \ref{app:misc2} for an explanation) and therefore the LHS of \eqref{eqn:erronsampledregs} is also convex and can be minimized efficiently. Based on this we provide the corresponding Algorithm \ref{alg:one}.
\begin{algorithm}

\caption{Supervised Regression Dataset Distillation}\label{alg:one}
{\bf Input:}  $d, k, m \in \mathbb{Z}^+, D^{\sf sup}_{\tn{train}} \in \left(\mc{B}(d, B) \times [-b, b]\right)^n$\\
1. Sample iid at random $g_1, \dots, g_k$ from $G$.\\
2. Output\ \ \ \  $\tn{argmin}_{D^{\sf sup}_{\tn{syn}} \in  \left(\mc{B}(d, B) \times [-b, b]\right)^m} \sum_{j = 1}^{k} L^{\tn{err}}_{\tn{mse}}(D^{\sf sup}_{\tn{train}}, D^{\sf sup}_{\tn{syn}}, g_j)$.
\end{algorithm}

{\bf Lower Bound.} Complementing the above algorithmic result, we prove the following matching (up to logarithmic factors) lower bound on the number of sampled regressors.

\begin{theorem}\label{thm:lbnumber}
For any positive integer $d$, there exists $D^{\sf sup}_{\tn{train}}$ of  $q = (d+1)$ points of the form $\bz = (\bx, y) \in \R^q$ where $\bx \in \R^d, y \in \R$, each of Euclidean norm $\|\bz\|_2 \leq 2$ such that for any choice of homogeneous (i.e., target label $0$) regressors $\{f_t : \R^q \to \R\ \mid\ f_t(\bz) := \bv_t^{\sf T}\bz,  \tn{ where } \bv_t \in \R^q\}_{t=1}^T$,
where $T < q(q+1)/2$, there exists:
\begin{itemize}[noitemsep,nolistsep]
\item $D^{\sf sup}_{\tn{syn}}$ of  $q$ points in $\R^q$ each of Euclidean norm $\leq 2$, and 
\item a regressor $f_0 :\R^q \to \R$ given by $f_0(\bz) := \bv_0^{\sf T}\bz$ for a unit vector $\bv_0 \in \R^q$,
\end{itemize}
satisfying
\begin{equation}\label{eqn:thm:lbnumber}
    L^{\tn{err}}_{\tn{mse}}(D^{\sf sup}_{\tn{train}}, D^{\sf sup}_{\tn{syn}}, f_t) = 0,\ \forall t \in \{1,\dots, T\} \quad \quad \tn{and} \quad \quad L^{\tn{err}}_{\tn{mse}}(D^{\sf sup}_{\tn{train}}, D^{\sf sup}_{\tn{syn}}, f_0) \geq 1/(4q^2).
\end{equation}
 \end{theorem}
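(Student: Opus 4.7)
The plan is to reduce the statement to a dimension-counting argument on the space of $q \times q$ symmetric matrices. For a homogeneous regressor $f(\bz) = \bv^{\sf T}\bz$, the MSE on a $q$-point dataset $D$ equals $\bv^{\sf T} M(D) \bv$, where $M(D) := (1/q)\sum_{\bz \in D} \bz\bz^{\sf T}$ is the (symmetric PSD) second-moment matrix. Hence, writing $N := M(D^{\sf sup}_{\tn{train}}) - M(D^{\sf sup}_{\tn{syn}})$, the error in \eqref{eqn:thm:lbnumber} becomes $L^{\tn{err}}_{\tn{mse}}(D^{\sf sup}_{\tn{train}}, D^{\sf sup}_{\tn{syn}}, f_t) = (\bv_t^{\sf T} N \bv_t)^2$. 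The strategy is to choose $D^{\sf sup}_{\tn{train}}$ so that $M(D^{\sf sup}_{\tn{train}})$ is a well-conditioned scalar multiple of the identity, produce a non-zero symmetric $N$ annihilated (under the quadratic form $N \mapsto \bv_t^{\sf T} N \bv_t$) by all of the adversary's $\bv_t$, and then realize $M(D^{\sf sup}_{\tn{syn}}) := M(D^{\sf sup}_{\tn{train}}) - \alpha N$ as the second-moment matrix of $q$ explicit points of Euclidean norm at most $2$.

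Concretely, I take $D^{\sf sup}_{\tn{train}} := \{\sqrt{2}\,\mathbf{e}_i\}_{i=1}^q$, giving $M(D^{\sf sup}_{\tn{train}}) = (2/q) I_q$ with each training point of norm $\sqrt{2} \leq 2$. Each adversarial constraint $\bv_t^{\sf T} N \bv_t = 0$ is a single linear equation on the $q(q+1)/2$-dimensional vector space of real symmetric $q\times q$ matrices; since $T < q(q+1)/2$, there is a non-zero symmetric $N$ satisfying all $T$ of them, which I normalize so that $\|N\|_{\mathrm{op}} = 1$. Setting $\alpha := 2/q$ and $M_2 := (2/q) I_q - (2/q) N$, the eigenvalues of $M_2$ lie in $[0, 4/q]$, so via its eigendecomposition $M_2 = \sum_{i=1}^q \lambda_i \mathbf{u}_i \mathbf{u}_i^{\sf T}$ I define synthetic points $\bz_i := \sqrt{q \lambda_i}\, \mathbf{u}_i$. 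Each $\bz_i$ has norm $\sqrt{q\lambda_i} \leq 2$ and $(1/q)\sum_i \bz_i \bz_i^{\sf T} = M_2$, so $D^{\sf sup}_{\tn{syn}} := \{\bz_i\}_{i=1}^q$ realizes $M_2$ within the norm budget.

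Verification of \eqref{eqn:thm:lbnumber} is then immediate: since $M(D^{\sf sup}_{\tn{train}}) - M(D^{\sf sup}_{\tn{syn}}) = (2/q) N$, we obtain $L^{\tn{err}}_{\tn{mse}}(D^{\sf sup}_{\tn{train}}, D^{\sf sup}_{\tn{syn}}, f_t) = (2/q)^2 (\bv_t^{\sf T} N \bv_t)^2 = 0$ for every $t$; and taking $\bv_0$ to be a unit eigenvector of $N$ corresponding to an eigenvalue of absolute value $1$ yields $L^{\tn{err}}_{\tn{mse}}(D^{\sf sup}_{\tn{train}}, D^{\sf sup}_{\tn{syn}}, f_0) = (2/q)^2 \cdot 1 = 4/q^2 \geq 1/(4q^2)$. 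The one delicate step is realizability: the choice $\alpha = 2/q$ exactly saturates the budget under which eigenvalues of $M_2$ remain in $[0, 4/q]$, which is precisely what the rank-$q$ square-root decomposition requires to keep each $\|\bz_i\|_2 \leq 2$. Any larger perturbation of $M(D^{\sf sup}_{\tn{train}})$ along $N$ would either violate positive semi-definiteness or the norm constraint, so this bottleneck is what dictates the $1/(4q^2)$ gap appearing in the conclusion.
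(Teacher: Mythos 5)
Your proof is correct and follows essentially the same route as the paper: a dimension count on the $q(q+1)/2$-dimensional space of symmetric matrices to find a nonzero $N$ annihilated by all $\bv_t\bv_t^{\sf T}$, realization of the perturbed second-moment matrix via its eigendecomposition within the norm budget, and selection of a top eigenvector of $N$ as the distinguishing regressor. The only differences are scaling conventions (you use $\sqrt{2}\,\mathbf{e}_i$ and $\|N\|_{\mathrm{op}}=1$ where the paper uses $\mathbf{e}_i$ and operator norm $1/2$), which even gives you a slightly better separation constant of $4/q^2$.
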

 Informally, the above theorem constructs a training dataset such that for any set of less than $q(q+1)/2$ linear regressors one can choose a synthetic dataset on which each of the linear regressors have the same loss as on the training dataset, while there exists a regressor that has significantly different losses. This implies a lower bound of $\Omega(d^2)$ for $k$ in Theorem \ref{thm:supervised-distill-1}.

{\bf Offline RL Dataset Distillation.} For convenience, we define the following modified Bellman loss which incorporates a scale factor $\lambda \in \mathbb{R}$ for the reward in the usual Bellman loss:
\begin{equation}
    \hat{L}_{\tn{Bell}}(D, f, \lambda) := \E_{(s, a, r, s') \leftarrow D}\left[\left(f(s, a) - \lambda r - \gamma \max_{a' \in \mc{A}} f(s',a')\right)^2\right]. \label{eqn:hatLbell}
\end{equation}
To state our result, we need the \emph{pseudo-dimension} ${\sf Pdim}$ (see Appendix \ref{sec:pdim}) of the above loss restricted to single points. In particular, $\mc{U}$ be class of mappings $u: \mc{B}(d_0, B_0)\times \mc{B}(d_0, B_0) \times [0, R_{\tn{max}}]\times \mc{B}(d_0, B_0) \to \R$ where each $u \in \mc{U}$ is defined by $u(s, a, r, s') := \hat{L}_{\tn{Bell}}(\{(s, a, r, s')\}, f, \lambda)$ for some $f \in \mc{Q}_0$ and $\lambda\in [-1,1]$. Let $p := {\sf Pdim}(\mc{U})$. Further, we assume that $\phi$ is $L$-Lipschitz. \\

We also define:
\begin{equation}
    \hat{L}^{\tn{err}}_{\tn{Bell}}(D^{\sf orl}_{\tn{train}}, D^{\sf orl}_{\tn{syn}}, f, \lambda) = \left(\hat{L}_{\tn{Bell}}(D^{\sf orl}_{\tn{train}}, f, \lambda) - \hat{L}_{\tn{Bell}}(D^{\sf orl}_{\tn{syn}}, f, \lambda)\right)^2 \label{eqn:haterrBell}
\end{equation}
We define a distribution $\tilde{H}(\sigma)$ over (predictor, scalar) pairs in which a random $(f, \lambda)$ is sampled by independently choosing $\br \sim N(0,\sigma^2)^d$ and $\lambda \sim N(0,\sigma^2)$ and defining $f(s,a) := \br^{\sf T}\phi(s,a)$. For simplicity we define $H := \tilde{H}(1)$ Using this, we have the following theorem.
\begin{theorem}\label{thm:offlineRL-distill-1}
    Let $D^{\sf orl}_{\tn{train}}$ the offline RL training dataset as described above. For any $\Delta > 0$ and $\delta > 0$, let $(f_1, \lambda_1), \dots, (f_k, \lambda_k)$  be iid samples from  $H$ for some $k = (1/\nu)^{O(d\log d)}O\left(p(\log(1/\nu) + \log(B_0L/(B+R_{\tn{max}})\right)\log(1/\delta)$, where $\nu := \Delta/(B+R_{\tn{max}})^4$. Then, with probability $1-\delta$, if there exists $D^{\sf orl}_{\tn{syn}}$ s.t. 
\begin{equation}
    \sum_{j = 1}^{k} \hat{L}^{\tn{err}}_{\tn{Bell}}(D^{\sf orl}_{\tn{train}}, D^{\sf orl}_{\tn{syn}}, f_j, \lambda_j) \leq \Delta' \label{eqn:erronsampledregsorl}
\end{equation}
then, for all $f \in \mc{Q}_0$, $L^{\tn{err}}_{\tn{Bell}}(D^{\sf orl}_{\tn{train}}, D^{\sf orl}_{\tn{syn}}, f) \leq \Delta$, where $\Delta' = (\Delta/O(1))$. 
\end{theorem}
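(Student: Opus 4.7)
The strategy extends the quadratic-form / Gaussian-moment argument behind Theorem~\ref{thm:supervised-distill-1} to the Bellman setting, with the principal difficulty being the non-smooth $\max_{a'}$ term. The first move is to \emph{fix an argmax labeling} $\sigma$: for each $s'$ occurring in $D^{\sf orl}_{\tn{train}} \cup D^{\sf orl}_{\tn{syn}}$, designate a specific action $a^*(s') \in \mc{A}$. Writing $\tilde w := (v,\lambda) \in \R^{d+1}$ and $\tilde\psi(s,a,r,s') := (\phi(s,a) - \gamma\phi(s',a^*(s')),\, -r)$, the modified Bellman loss reduces to a quadratic form $\hat L_{\tn{Bell}}(D,f,\lambda) = \tilde w^{\sf T} N_D^\sigma \tilde w$, and consequently $\hat L^{\tn{err}}_{\tn{Bell}}$ becomes the quartic $(\tilde w^{\sf T} M^\sigma \tilde w)^2$ where $M^\sigma := N_{\tn{train}}^\sigma - N_{\tn{syn}}^\sigma$. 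Conditional on $\sigma$, this puts us in exactly the quadratic-form setting driving the supervised proof.

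Next I would apply a Gaussian moment / concentration argument to these quartics. For $\tilde w \sim N(0, I_{d+1})$, an eigendecomposition of the symmetric matrix $M^\sigma$ gives $\E[(\tilde w^{\sf T} M^\sigma \tilde w)^2] = \Tr(M^\sigma)^2 + 2\|M^\sigma\|_F^2$, and Hanson--Wright concentration implies that $(1/k)\sum_j (\tilde w_j^{\sf T} M^\sigma \tilde w_j)^2$ tracks this expectation with high probability once $k$ is polylogarithmic in the failure probability. Thus if the empirical sum is at most $\Delta'$, we extract $\|M^\sigma\|_F^2 \lesssim \Delta'/k$, and since any target $\tilde w^* = (v^*,\lambda^*)$ satisfies $\|\tilde w^*\|_2 \leq 2$ we obtain $(\tilde w^{*\sf T} M^\sigma \tilde w^*)^2 \leq 16\|M^\sigma\|_F^2 = O(\Delta'/k)$. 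Choosing $\Delta' = \Theta(\Delta k)$ delivers the desired $\Delta$ bound, exactly mirroring the supervised argument but on $\R^{d+1}$ rather than $\R^{d+1}$ with a different lift.

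The main obstacle, and the source of the $(1/\nu)^{O(d\log d)}$ factor, is removing the conditioning on $\sigma$: both the labeling realized by the target $(v^*,\lambda^*)$ and the labelings induced by the Gaussian samples vary with $\tilde w$. I would combine two covering arguments. First, using the pseudo-dimension $p$ of $\mc{U}$ together with the $L$-Lipschitz property of $\phi$, build an $\epsilon$-net $\mc{N} \subset \mc{Q}_0 \times [-1,1]$ of size $(LB_0(B+R_{\tn{max}})/\nu)^{O(p)}$ on which Bellman losses are $\sqrt{\nu}$-approximated; the choice $\nu = \Delta/(B+R_{\tn{max}})^4$ is calibrated precisely to absorb the fourth-power loss-to-error blow-up. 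This reduces ``for all $f \in \mc{Q}_0$'' to ``for all $f \in \mc{N}$''. Second, bound the number of distinct argmax labelings induced on the $n+m$ next-states by linear score functions $v^{\sf T}\phi(s', \cdot)$ over a $\nu$-discretized action space by a growth-function argument for hyperplane arrangements in $\R^d$, yielding at most $(1/\nu)^{O(d\log d)}$ labelings. A union bound over the net and labelings, matched to $k$ via Hanson--Wright, gives the claimed sample complexity.

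The hardest part will be carrying through the coupling between the Gaussian samples, their realized labelings, and the target's labeling in a quantitative way: one must argue that smallness of the empirical sum forces smallness of $\|M^\sigma\|_F$ for \emph{every} $\sigma$ realizable by some bounded $\tilde w^*$, not merely for the labelings hit by the random samples. I expect this to require identifying each non-empty cell of the associated linear arrangement with a net element and appealing to a positive Gaussian mass lower bound on such cells, which is exactly what the $(1/\nu)^{O(d\log d)}$ overhead pays for.
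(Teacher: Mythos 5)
Your plan departs from the paper's proof in its core mechanism, and the departure is where it breaks. The paper never tries to control the argmax labeling at all: it writes a random sample as $\br=\alpha\hat{\br}+\mb{H}\tilde{\br}$ and conditions on the thin tube $\{\alpha\in[1,3/2],\ \|\tilde{\br}\|_\infty\ \tn{tiny}\}$, which has Gaussian mass $(\nu/\sqrt{d})^{\Omega(d)}$; on that event, positive homogeneity and the $1$-Lipschitzness of $\max$ give $\max_{a'}\hat{f}(s',a')=\alpha\max_{a'}f(s',a')\pm O(\upsilon(B+R_{\tn{max}}))$ regardless of which action attains the max, so the sampled loss is deterministically $\approx\alpha^4$ times the target's loss. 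No moment identity, no Frobenius-norm extraction, and no cell bookkeeping is needed; the $\tn{exp}(O(d\log d))$ in $k$ is exactly the reciprocal of the tube's mass.

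The genuine gap in your route is the step you flag at the end, and your proposed repair does not work. First, there is no positive lower bound on the Gaussian mass of the arrangement cell $C_{\sigma^*}$ containing the target $\tilde{w}^*$: two actions with nearly equal scores $\bv^{\sf T}\phi(s',a_1)\approx\bv^{\sf T}\phi(s',a_2)$ produce arbitrarily thin slivers, and $\tilde{w}^*$ may sit in one, so with the stated $k$ no sample need ever evaluate the quadratic form $M^{\sigma^*}$ that you must control. Second, even restricting to the samples that do land in $C_{\sigma^*}$, the identity $\E[(\tilde{w}^{\sf T}M\tilde{w})^2]=\Tr(M)^2+2\|M\|_F^2$ holds only for the unconditioned Gaussian; for the Gaussian truncated to a cell the form can be nearly degenerate on most of the cell's mass while still being large at $\tilde{w}^*$, so smallness of the restricted empirical sum does not force $\|M^{\sigma^*}\|_F$ (nor $(\tilde{w}^{*\sf T}M^{\sigma^*}\tilde{w}^*)^2$) to be small. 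Third, the inequality you need runs in the lower-tail direction (each sample's loss is, with decent probability, at least a constant fraction of the target's), which is an anti-concentration statement in the spirit of Carbery--Wright or of the paper's tube conditioning; Hanson--Wright is an upper-tail bound and gives nothing here. Finally, your labeling count via hyperplane arrangements scales as $((n+m)|\mc{A}_\nu|^2)^{O(d)}$, introducing a dependence on $n$ that the theorem's $k$ does not have. The remaining ingredients (the pseudo-dimension net over $\mc{Q}_0\times[-1,1]$, the Lipschitz discretization, the calibration $\nu=\Delta/(B+R_{\tn{max}})^4$) do match the paper, but the central probabilistic lemma is missing.
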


The following is the distillation algorithm whose guarantees follow directly from Theorem \ref{thm:offlineRL-distill-1}. 
\begin{algorithm}
\caption{Offline RL Dataset Distillation}\label{alg:two}
{\bf Input:}  $d, k, m \in \mathbb{Z}^+$, feature-map $\phi$, $D^{\sf orl}_{\tn{train}} \in \left(\mc{S}\times\mc{A}\times [0, R_{\tn{max}}]\times \mc{S}\right)^n$.\\
1. Sample iid at random $(f_1, \lambda_1) \dots, (f_k, \lambda_k)$ from $H$.\\
2. Output \ \ \ $\tn{argmin}_{D^{\sf orl}_{\tn{syn}} \in  \left(\mc{B}(d_0, B_0) \times \mc{B}(d_0, B_0)\times [0, R_{\tn{max}}] \times \mc{B}(d_0, B_0)\right)^m} \sum_{j = 1}^{k} \hat{L}^{\tn{err}}_{\tn{Bell}}(D^{\sf orl}_{\tn{train}}, D^{\sf orl}_{\tn{syn}}, f_j, \lambda_j)$
\end{algorithm}

{\bf Offline RL with decomposable feature-map.} We consider the natural case when $\phi$ is decomposable i.e., $\phi(s,a) := \phi_1(s) + \phi_2(a) \in \mathbb{R}^d$ for all $s \in \mc{S}, a\in \mc{A}$ for some mappings $\phi_1, \phi_2 : \mc{B}(d_0, B_0)\to \R^d$. Further, we say that $\phi$ is linear and decomposable if $\phi_1$ and $\phi_2$ are linear maps.

\begin{theorem}\label{thm:offline-rl-decomposable}
 Consider the case when $\phi$ is decomposable as defined above and let $D^{\sf orl}_{\tn{train}}$ the offline RL training dataset. For any $\Delta > 0$ and $\delta > 0$, let $(f_1, \lambda_1), \dots, (f_k, \lambda_k)$  be iid samples from  $\tilde{H}(1/\sqrt{d+1})$ for some $k =O\left({d^2}\log({d}(B+R_{\sf max})/{\Delta})\log(1/\delta)\right)$. Then, with probability $1-\delta$, if there exists $D^{\sf orl}_{\tn{syn}}$ s.t. 
\begin{equation}
    \sum_{j = 1}^{k} \hat{L}^{\tn{err}}_{\tn{Bell}}(D^{\sf orl}_{\tn{train}}, D^{\sf orl}_{\tn{syn}}, f_j, \lambda_j) \leq \Delta' 
\end{equation}
satisfying
\begin{equation}\label{eqn:meancondition}
   \E_{D^{\sf orl}_{\tn{syn}}} [(\phi_1(s), \phi_2(a), r, \phi_1(s'))] = \E_{ D^{\sf orl}_{\tn{train}}} [(\phi_1(s), \phi_2(a), r, \phi_1(s'))] 
   \end{equation}
then, for all $f \in \mc{Q}_0$, $L^{\tn{err}}_{\tn{Bell}}(D^{\sf orl}_{\tn{train}}, D^{\sf orl}_{\tn{syn}}, f) \leq \Delta$, where $\Delta' = \Delta k/O(d^2)$. 
\end{theorem}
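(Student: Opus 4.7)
The plan is to reduce Theorem \ref{thm:offline-rl-decomposable} to a direct invocation of Theorem \ref{thm:supervised-distill-1} in dimension $d+1$, by exploiting two special features of the decomposable setting: (i) the $\max$ term in the Bellman loss becomes data-independent, and (ii) the mean-matching condition \eqref{eqn:meancondition} kills the linear cross term in the expansion of the Bellman loss, leaving only a squared quadratic form whose coefficient matrix is the difference of second-moment matrices of the two datasets.

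First I would rewrite the linear predictor as $f(s,a) = \br^{\sf T}\phi_1(s) + \br^{\sf T}\phi_2(a)$ and observe that $\max_{a'\in\mc{A}} f(s',a') = \br^{\sf T}\phi_1(s') + M(\br)$, where $M(\br) := \max_{a'\in\mc{A}} \br^{\sf T}\phi_2(a')$ depends on $\br$ only, not on the data. Defining the augmented per-tuple vector $\tilde{u}(s,a,r,s') := (\phi_1(s)+\phi_2(a)-\gamma\phi_1(s'),\ r) \in \R^{d+1}$ and the augmented regressor $\tilde{\br} := (\br, -\lambda) \in \R^{d+1}$, the Bellman residual becomes $\tilde{\br}^{\sf T}\tilde{u} - \gamma M(\br)$. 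Expanding the square and taking expectations gives
\begin{equation*}
\hat{L}_{\tn{Bell}}(D, f, \lambda) = \tilde{\br}^{\sf T}\,\E_{D}[\tilde{u}\tilde{u}^{\sf T}]\,\tilde{\br} - 2\gamma M(\br)\,\tilde{\br}^{\sf T}\E_D[\tilde{u}] + \gamma^2 M(\br)^2.
\end{equation*}
Subtracting the versions for $D^{\sf orl}_{\tn{train}}$ and $D^{\sf orl}_{\tn{syn}}$, the constant term $\gamma^2 M(\br)^2$ cancels automatically (it is independent of the dataset), and \eqref{eqn:meancondition} forces $\E_{D^{\sf orl}_{\tn{train}}}[\tilde{u}] = \E_{D^{\sf orl}_{\tn{syn}}}[\tilde{u}]$, which kills the linear cross term as well. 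What remains is
\begin{equation*}
\hat{L}_{\tn{Bell}}(D^{\sf orl}_{\tn{train}}, f, \lambda) - \hat{L}_{\tn{Bell}}(D^{\sf orl}_{\tn{syn}}, f, \lambda) = \tilde{\br}^{\sf T} \Sigma_\Delta \tilde{\br}, \quad \Sigma_\Delta := \E_{D^{\sf orl}_{\tn{train}}}[\tilde{u}\tilde{u}^{\sf T}] - \E_{D^{\sf orl}_{\tn{syn}}}[\tilde{u}\tilde{u}^{\sf T}],
\end{equation*}
so $\hat{L}^{\tn{err}}_{\tn{Bell}}$ is exactly a squared quadratic form in $\tilde{\br}$. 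This is structurally identical to the supervised MSE-matching error of Theorem \ref{thm:supervised-distill-1}, with the role of $\zeta(\bx, y)$ played by $\tilde{u}(s,a,r,s')$ and the dimension raised from $d+1$ to $d+1$ (same dimensional arithmetic as in the supervised setup).

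Next I would verify the distributional identification: under $\tilde{H}(1/\sqrt{d+1})$, the augmented regressor $\tilde{\br} = (\br, -\lambda)$ has i.i.d.\ $N(0, 1/(d+1))$ coordinates, matching precisely the distribution $G$ used in Theorem \ref{thm:supervised-distill-1} when applied in ambient dimension $d+1$. The data-norm bound $\|\tilde{u}\|_2 = O(B + R_{\tn{max}})$ takes the place of $\sqrt{B^2+b^2}$ in the supervised bound. Applying Theorem \ref{thm:supervised-distill-1} to the ``dataset'' $\{\tilde{u}_i\}_{i=1}^n$ and its synthetic counterpart then shows that $k = O(d^2\log(d(B+R_{\tn{max}})/\Delta)\log(1/\delta))$ samples suffice so that with probability $1-\delta$, the bound $\sum_j (\tilde{\br}_j^{\sf T}\Sigma_\Delta\tilde{\br}_j)^2 \leq \Delta'$ implies $(\tilde{\br}^{\sf T}\Sigma_\Delta\tilde{\br})^2 \leq \Delta$ uniformly over $\tilde{\br}$ with $1 \leq \|\tilde{\br}\|_2 \leq 2$. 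Specializing to $\lambda = 1$ and $\|\br\|_2 \leq 1$ (so $\|\tilde{\br}\|_2 \leq \sqrt{2}$, which lies in the range handled by $\mc{F}$ after trivial rescaling) yields the desired conclusion: $L^{\tn{err}}_{\tn{Bell}}(D^{\sf orl}_{\tn{train}}, D^{\sf orl}_{\tn{syn}}, f) \leq \Delta$ for all $f \in \mc{Q}_0$.

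I expect the main obstacle to be bookkeeping rather than a new idea. The critical point where care is required is step two: the entire reduction hinges on both the data-independence of $M(\br)$ (which requires $\max_{a'} f(s',a')$ to split additively, forcing the decomposability hypothesis) and the cancellation of the $\E_D[\tilde{u}]$ cross term (which requires \emph{both} hypotheses of the theorem — decomposability \emph{and} the first-moment matching condition \eqref{eqn:meancondition}). Without either, a residual term linear in $\tilde{\br}$ would survive and preclude reduction to a pure quadratic form. Secondary care is needed to (a) propagate the norm bound $\|\phi(s,a)\|_2 \leq B$ to a bound on $\|\phi_1\|$ and $\|\phi_2\|$ individually (an additional mild assumption or convention, e.g.\ centering, is enough for this to cost only a constant factor), and (b) verify that the range $1 \leq \|\tilde{\br}\|_2 \leq 2$ required by $\mc{F}$ in Theorem \ref{thm:supervised-distill-1} covers our target range $\|\tilde{\br}\|_2 \leq \sqrt{2}$ via a single rescaling of the quadratic form, absorbed into the constants in $\Delta'$.
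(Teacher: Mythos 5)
Your proposal is correct and follows essentially the same route as the paper's proof: decomposability makes the $\max$ term a data-independent constant $M(\br)$ that cancels in the difference of losses, the mean-matching condition \eqref{eqn:meancondition} kills the remaining cross term, and what is left is exactly the supervised squared-loss-difference in dimension $d+1$ with features $\phi_1(s)-\gamma\phi_1(s')+\phi_2(a)$, label $r$, and regressor $(\br,-\lambda)\sim N(0,1/(d+1))^{d+1}$, to which the analysis of Theorem~\ref{thm:supervised-distill-1} applies. The only presentational difference is that you phrase the final step as a black-box invocation of Theorem~\ref{thm:supervised-distill-1} on the quadratic form $\tilde{\br}^{\sf T}\Sigma_\Delta\tilde{\br}$, whereas the paper re-runs the argument with nets taken over the state/action embeddings rather than arbitrary points of a ball; your own flagged bookkeeping caveats (norm bounds on $\phi_1,\phi_2$ and the rescaling of $\|\tilde{\br}\|_2$) are exactly the points absorbed into constants there.
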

In particular, the above theorem shows that the optimization in Algorithm \ref{alg:two} constrained by \eqref{eqn:meancondition}, in the case of decomposable feature-map $\phi$, requires only $\tilde{O}(d^2)$ sampled action-value predictors. Further, when $\phi$ is linear and decomposable i.e., $\phi_1$ and $\phi_2$ are linear maps, then \eqref{eqn:meancondition} is a linear constraint in the points $D^{\sf orl}_{\tn{syn}}$  and it can be shown (see Appendix \ref{sec:orl} for details) that the optimization in Algorithm \ref{alg:two} constrained by \eqref{eqn:meancondition} is convex.

{\bf Discussion of Our Results.} Theorem \ref{thm:supervised-distill-1} and Algorithm \ref{alg:one} together provide an efficient supervised dataset distillation algorithm (for linear regression) with performance guarantees. Specifically, we show that with high probability over $\tilde{O}(d^2)$ sampled regressors, optimizing a convex objective over $D^{\sf sup}_{\tn{syn}}$ to minimize the sum of $L^{\tn{err}}_{\tn{mse}}$ for the sampled regressors, is sufficient to obtain a high quality synthetic dataset. 
While our method adapts the model-training free approach of \cite{ZBDistribMatch} to loss matching, our theoretical guarantees are qualitatively different from those of \cite{nguyen2021dataset} which showed convergence of synthetic features assuming the synthetic labels are given, and of \cite{chen2024provable} who assumed the availability of the optimal trained model. We show that our analysis is tight (upto polylogarithmic factors) by proving a $\Omega(d^2)$ lower bound on the number of sampled regressors in Theorem~\ref{thm:lbnumber}. In addition, we also provide a lower bound of $O(d)$ on the size of the synthetic dataset needed (in Appendix~\ref{sec:lowerbounds}).
We note that Theorem \ref{thm:supervised-distill-1} shows that the randomness (or sample complexity) required is proportional to $\log (1/\Delta)$ where $\Delta$ is the error, which is quantitatively better than sample size proportional to $1/\Delta$ achieved by matrix sketching techniques (see \cite{garg2024distributed}, \cite{wang2018sketched}, \cite{clarkson2009numerical}).
Our theoretical analysis of the supervised regression relies on anti-concentration of $L^{\tn{err}}_{\tn{mse}}$ with respect to random linear regressors, and can be applied to any class of non-linear regressors that admit similar anti-concentration properties.

For offline RL, Theorem \ref{thm:offlineRL-distill-1} and Algorithm \ref{alg:two} give the first dataset distillation algorithm with rigorous performance bounds, without model training. Our novel approach based on matching the Bellman loss w.r.t. sampled $Q$-value predictors differs from the previous behavioral cloning (BC) based methods of \cite{light2024datasetdistillationofflinereinforcement} which matches the loss gradients of the BC objective and of \cite{lei2024offline} which optimizes the synthetic dataset using an action-value weighted BC loss requiring a trained action-value predictor.  

However, the Bellman Loss involves a max term and thus 
 cannot be well represented as a low degree polynomial in the weights of the predictor. Due to this, polynomial anti-concentration (used in the supervised regression case) cannot be applied and the proof is via a conditioning argument. In effect, our algorithm requires sampling $\tn{exp}(O(d\log d)$ predictors in the worst case where $d$ is the output dimension of the feature-map. 

We note that a brute force approach would be to choose a net over the predictors $\mc{Q}_0$ instead of sampling. However, that would necessarily require $\tn{exp}(d)$ such predictors, while in practice a smaller number of randomly sampled predictors suffices, as demonstrated in our experimental evaluations.
\\
While the bound in Theorem \ref{thm:offlineRL-distill-1} is indeed less efficient than $\tilde{O}(d^2)$ sample complexity of the supervised setting, in practice the features are mapped into a smaller dimension than the input mitigating this to some extent. Further, in Theorem \ref{thm:offline-rl-decomposable}, we prove similar $\tilde{O}(d^2)$ in the case of decomposable feature-map $\phi$, and show that the optimization is convex when $\phi$ is linear and decomposable. This is a natural assumption and is the default feature map in applied RL for most value based deep RL approaches. For example, the highly cited work of~\citep{lillicrap2016} explicitly describes concatenating the action and state embeddings. A common trick used in practice and in our experiments is to one-hot encode the actions and concatenate them with the state embedding to obtain the feature map. Additive feature maps are also explicitly studied in~\citep{zhang2020} and~\citep{yang2019}.

{\bf Organization of the Paper.} The proofs of Theorems \ref{thm:supervised-distill-1}, \ref{thm:lbnumber},  \ref{thm:offlineRL-distill-1} and \ref{thm:offline-rl-decomposable} are included in Appendices \ref{sec:Mainpfthm1}, \ref{sec:lbnumber-proof}, \ref{sec:pfthm2} and \ref{sec:orl} respectively. In the next section however, we provide an informal description of the proof techniques. A subset of the experimental evaluations are included in Section \ref{sec:expts} while additional experiments and further details are deferred to Appendix \ref{app:experiments}.

\section{Overview of our Techniques}

{\bf Proof Outline of Theorem~\ref{thm:supervised-distill-1}.} The proof proceeds by contradiction: suppose there is some $h \in \mc{F}$ and $D^{\sf sup}_{\tn{syn}}$ s.t. $L^{\tn{err}}_{\tn{mse}}(D^{\sf sup}_{\tn{train}}, D^{\sf sup}_{\tn{syn}}, h) > \Delta$. Letting $g$ be a random sample from  $G$, we show  using algebraic manipulations and properties of the Gaussian distribution that  $L^{\tn{err}}_{\tn{mse}}(D^{\sf sup}_{\tn{train}}, D^{\sf sup}_{\tn{syn}}, g)$ is a degree-$4$ polynomial $\upsilon^2$ in Gaussian variables, such that $\E[\upsilon^2] > \Delta/O(d^2)$. This can be used along with the Carbery-Wright anti-concentration bound for Gaussian polynomials to show that $\Pr[\upsilon^2 > \Delta/O(d^2)] \geq 1/3$. Since $\{g_j\}_{j=1}^k$ are iid samples from $G$, using Chernoff Bound, the probability that there exists $\Omega(k)$ many $j \in [k]$ satisfying  $L^{\tn{err}}_{\tn{mse}}(D^{\sf sup}_{\tn{train}}, D^{\sf sup}_{\tn{syn}}, g_j) > \Delta/O(d^2)$ is at least $1 - \tn{exp}^{-\Omega(k)}$. To complete the argument, we require a union bound of this error probability over all $h \in \mc{F}$ and $D^{\sf sup}_{\tn{syn}}$, which we do by constructing finegrained nets as follows.\\
{\it Net over all $h \in \mc{F}$}: Since $h \in \mc{F}$ is given by some $\br \in \R^{d+1}$ where $1 \leq \|\br\|_2 \leq 2$, one can take a $\eps$-net w.r.t. to Euclidean metric over all such vectors -- whose size is at most $(1/\eps)^{O(d\log d)}$ (see Appendix \ref{sec:nets}) -- for a small enough $\eps$ so that $L^{\tn{err}}_{\tn{mse}}$ is essentially unaffected when $h$ is replaced by $\hat{h}$ (or vice versa) corresponding to the nearest vector in the net. For our purpose $\eps$ can be $\Omega(\Delta)$. 
 \\
{\it Net over all $D^{\sf sup}_{\tn{syn}}$}: For this we first observe 
that $D^{\sf sup}_{\tn{syn}}$ can always replaced by a subset of size
$s = d$ using Lemma~\ref{lem:subsetreplace} so that
$L^{\tn{err}}_{\tn{mse}}$ remains lower bounded by $\Delta/O(d^2)$. Thus, it suffices to consider the net $(\mc{N})^{s}$ over all $s$-sized datasets where $\mc{N}$ is a  $\eps'$-net over the set of all possible datapoints,  so that $L^{\tn{err}}_{\tn{mse}}$ is approximately preserved by the nearest dataset in $(\mc{N})^{s}$. \\
The size of $(\mc{N})^{s}$ is $((B+b)/\Delta)^{O(sd)}$, and the product of the sizes of the nets constructed above is exponential in $O\left(d^2\log(d(B+b)/\Delta)\right)$. Thus, to obtain the statement of the theorem via a union bound, the number of sampled regressors required is 
$k = O\left(d^2\log(d(B+b)/\Delta)\log(1/\delta)\right)$. 

{\bf Proof Outline of Theorem~\ref{thm:lbnumber}.} The proof of the lower bound of $\Omega(d^2)$ on the sampled linear regressors relies on the fact that the set of symmetric $q\times q$ matrices, where $q=d+1$, is a $q(q+1)/2$-dimensional linear space.  Thus, for a choice of less than $q(q+1)/2$ homogeneous linear regressors, there is a non-zero symmetric matrix $\mb{A}$ for which $\langle \bv\bv^{\sf T}, \mb{A}\rangle = 0$ where $\bv$ is any one of the chosen regression vectors. Here, $\mb{A}$ can be scaled so that its operator norm is exactly $1/2$. Thus, one can choose (up to appropriate scaling) $D^{\sf sup}_{\tn{train}}$ so that $\E_{\bz \in D^{\sf sup}_{\tn{train}}}[\bx\bx^{\sf T}] = \mb{I}$ which is independent of the chosen regressors. $D^{\sf sup}_{\tn{syn}}$ is then chosen so that $\E_{\bz \in D^{\sf sup}_{\tn{syn}}}[\bx\bx^{\sf T}] = \mb{I} +\mb{A}$ which is psd. It can be seen that each chosen regressor has the same MSE loss on $D^{\sf sup}_{\tn{train}}$ and $D^{\sf sup}_{\tn{syn}}$, while due to $\mb{A} \neq \mb{0}$, there is a regressor which has different losses on $D^{\sf sup}_{\tn{train}}$ and $D^{\sf sup}_{\tn{syn}}$.\\

{\bf Proof Outline of Theorem~\ref{thm:offlineRL-distill-1}.}

The main complication as compared to the supervised regression case is the presence of the $\max$ term in the Bellman loss, which we circumvent using a conditioning argument. Specifically, consider some action-value predictor $f\in \mc{Q}_0$ given by $f(s, a) := \br^{\sf T}\phi(s,a)$ such that $\hat{L}^{\tn{err}}_{\tn{Bell}}(D^{\sf orl}_{\tn{train}}, D^{\sf orl}_{\tn{syn}}, f, 1) > \eps$, for some $D^{\sf orl}_{\tn{syn}}$. Assume for ease of exposition that $\|\br\|_2 = 1$. Now, a $\hat{f} \sim H$ corresponds to a vector $\hat{\br} = \alpha\br + \mb{J}\bu$ where $\alpha \sim N(0,1)$, $\bu \sim N(0,1)^{d-1}$ and $\mb{J}$ is a $d\times (d-1)$ matrix with columns being a completion of $\br$ to an orthonormal basis. It is easy to see that with probability at least $\tn{exp}(-{O(d\log d)})$, $\|\bu\|_2 \ll 1$ and therefore $\|\mb{J}\bu\|_2 \ll 1$. Conditioning on this event and the positivity of $\alpha$ allows us to use $\max_{a'}\hat{f}(\phi(s',a')) \approx \alpha\max_{a'}f(\phi(s',a'))$. This, along with a conditioning on $\lambda \approx 1$ and some algebraic manipulations yields that $\hat{L}^{\tn{err}}_{\tn{Bell}}(D^{\sf orl}_{\tn{train}}, D^{\sf orl}_{\tn{syn}}, \hat{f}) \approx \alpha^4 \Delta$, which directly yields a probabilistic lower bound of $\Omega(\Delta)$ on $\hat{L}^{\tn{err}}_{\tn{Bell}}(D^{\sf orl}_{\tn{train}}, D^{\sf orl}_{\tn{syn}}, \hat{f})$. 
The rest of the net based arguments are analogous to the supervised case but with notable differences. 
{In particular, we observe that $D_{\tn{syn}}^{\sf orl}$ can always be replaced by a subset of size $s= O(d/\Delta^2 \log{(1/\Delta)})$ so that $L_{\tn{Bell}}^{\tn{err}}$ remains lower bounded by $O(\Delta)$.}
In addition, we use a more broadly applicable generalization error bound for the the Bellman loss using its pseudo-dimension and the Lipschitzness of $\phi$. 
However, due to the $\tn{exp}(-{O(d\log d)})$ of the conditioning, success probability for a single sample $(\hat{f}, \lambda)$ is also $\tn{exp}(-{O(d\log d)})$ and therefore the number of predictors to be sampled is $\tn{exp}({O(d\log d)})$.

{\bf Proof outline of Theorem \ref{thm:offline-rl-decomposable}.} We observe that when $\phi$ is decomposable, for a $Q$-value predictor $f(s,a) := \bv^{\sf T}\phi(s,a)$, we have that $\max_{a'\in \mc{A}}f(s',a') = \bv^{\sf T}\phi_1(s') + \max_{a'\in \mc{A}}\bv^{\sf T}\phi_2(a')$. Here $\alpha := \max_{a'\in \mc{A}}\bv^{\sf T}\phi_2(a')$ depends only on $\bv$ and is independent of the dataset. This allows for cancellations of terms between $D^{\sf orl}_{\tn{train}}$ and $D^{\sf orl}_{\tn{syn}}$ in $\hat{L}^{\tn{err}}_{\tn{Bell}}$, and using the constraint in \eqref{eqn:meancondition} we obtain a linear regression formulation in the embedding space $\R^d$. Thus, one can essentially follow the proof of Theorem \ref{thm:supervised-distill-1}. Further, it is easy to see that if $\phi_1$ and $\phi_2$ are linear then the constraint in \eqref{eqn:meancondition} is linear in the points of $D^{\sf orl}_{\tn{syn}}$ resulting in a convex optimization.

\section{Experimental Evaluations}\label{sec:expts}

We generate synthetic datasets, $D_{\tn{syn}}$ of sizes much smaller than the original training dataset using Algorithms \ref{alg:one} and \ref{alg:two} for supervised regression and offline RL respectively. We evaluate the models trained on them, over the test split of the original dataset in case of supervised regression or the derived policy in the RL environment. \\
\textbf{Baselines}. The following baseline datasets are included as part of our experiments. {\it Full Original}: model trained on the entire original training dataset $D_{\tn{train}}$,  and {\it Random}: model trained on a random sub-sample (of same size as the synthetic dataset) of the original dataset, $D_{\tn{rand}}$. In addition, for supervised regression, {\it Leveraged}: model trained on a leverage score subsample (of same size as the synthetic dataset) of the original dataset, $D_{\tn{lev}}$ (see \cite{drineas2006fast}).

{\bf Supervised Regression Datset Distillation.} We evaluate Algorithm \ref{alg:one} along with the above mentioned baselines on the Wine Quality (\cite{cortez2009modeling} ODbL 1.0 License), 
specifically the included red wine and white wine quality datasets, and the Boston Housing Dataset~\citep{housing}.

The sizes of the synthetic dataset $N_{\tn{syn}}$ we consider are $\{20, 50, 100\}$. The size of $D_{\tn{rand}}$ and $D_{\tn{lev}}$ are the same as $D_{\tn{syn}}$, and is subsampled randomly from $D_{\tn{train}}$. We initialise the convex optimisation of $D_{\tn{syn}}$ with the initial value $D_{\tn{rand}}$. We take $k=100$ random homogeneous linear regressors in Algorithm \ref{alg:one} to find $D_{\tn{syn}}$ and then train a homogenous linear model ($f(\bx) = \br^T\bx$) on the four datasets. 

The mean test loss of models trained on respective datasets (over $10$ trials) are plotted in Figure~\ref{fig:supervised}. Further details of the model training, hyperparameter search are included in Appendix \ref{sec:expts-wine} and Appendix \ref{sec:expts-housing}.

We observe that the homogenous linear models trained on $D_{\tn{syn}}$ performs almost as well as ones trained on $D_{\tn{train}}$, far better than ones trained on $D_{\tn{rand}}$, and better or on par with the models trained on $D_{\tn{lev}}$. This demonstrates the efficacy of our synthetic data generation technique for supervised learning datasets and empirically verifies Theorem~\ref{thm:supervised-distill-1}. We also observe that we perform better than the classical data-reduction technique of leverage score subsampling for homogenous linear regressors.

{\bf Offline RL Experiments.} We test Algorithm \ref{alg:two} for offline RL DD by evaluating a policy trained on our synthetic dataset using Fitted-Q Iteration~\citep{fittedq}, a classical offline RL algorithm, along with the Full Original and Random baselines on the Cartpole environment~\citep{Cartpole} (MIT License), Mountain Car MDP~(\cite{moore1990}, \cite{Cartpole} MIT License), and the the Acrobot Environment~(\cite{NIPS1995_8f1d4362}, \cite{Cartpole} MIT License). Refer to Appendices \ref{sec:expts-cartpole},\ref{sec:expts-mountain}, \ref{sec:expts-acrobot} for further details on the datasets. Since linear $Q$-value predictors are known to perform poorly with Fitted-Q iteration even when trained on $D_{\tn{train}}$, we use 2-layer neural networks for training and generating the synthetic data which are randomly sampled by sampling the model weights independently from a standard Gaussian distribution.

We do not use $D_{\tn{lev}}$ as a baseline because leverage score subsampling is only defined for linear regression and has no analog in neural network regression.
To generate $D_{\tn{train}}$, we sample from transitions from random(or nearly random) policies.
We sample $k = 20$ random models with random normal weights for the data distillation procedure. 
The size of $D_{\tn{rand}}$ and $D_{\tn{syn}}$ is denoted by $N_{\tn{syn}}$. 
We generate $D_{\tn{syn}}$ once and we train all three datasets with the Fitted-Q iteration algorithm to get the trained policies. We evaluate these trained policies in the environment $10$ times and plot our results in Figure~\ref{fig:offlinerl}. $D_{\tn{rand}}$ is sampled $10$ times and each sample is evaluated once. Further experiments are conducted with more values of $k$ and can be found in Appendices \ref{sec:expts-cartpole}, \ref{sec:expts-mountain}, \ref{sec:expts-acrobot} along with more experimental details.

We observe that a model trained on $D_{\tn{syn}}$ is significantly better than one trained on $D_{\tn{rand}}$. We also see that as $N_{\tn{syn}}$ increases, the model trained on $D_{\tn{syn}}$ performs even better or on par than one trained on $D_{\tn{train}}$ for the Cartpole and Acrobot environments. This demonstrates the efficacy of our synthetic data generation technique for offline RL datasets for non-linear predictors.
\begin{figure}
	\includegraphics[width=\textwidth]{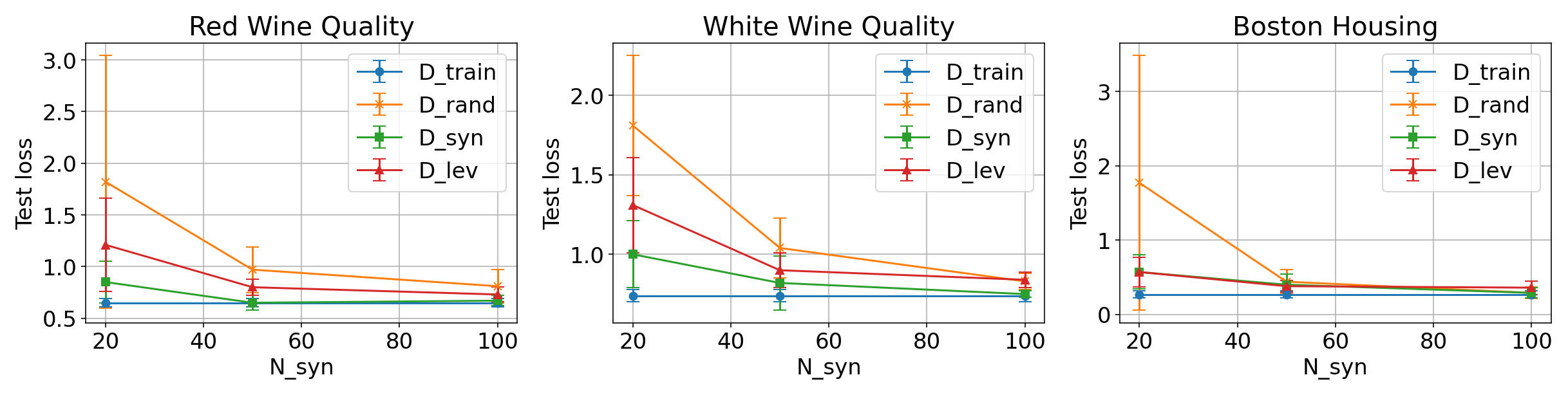}
	\caption{MSE Test losses for Homogeneous Linear Regression on the following supervised datasets: \it{red wine quality, white wine quality, Boston Housing}.}
    \label{fig:supervised}
\end{figure}
\begin{figure}
	\includegraphics[width=\textwidth]{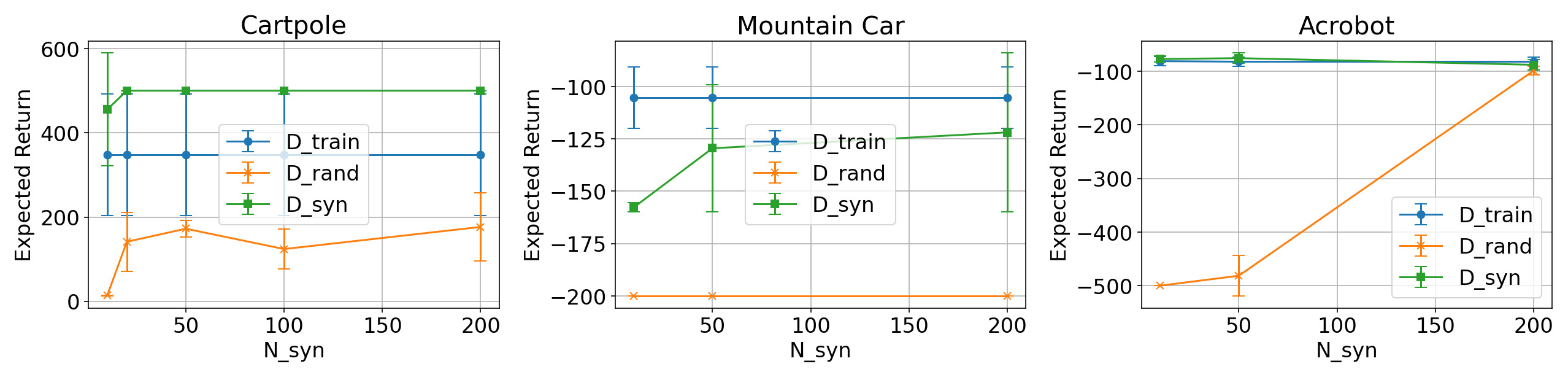}
	\caption{Max Evaluation Return with two-layer neural networks on the offline RL datasets created by the following environments: \it{Cartpole, Mountain Car, Acrobot}.}
    \label{fig:offlinerl}
\end{figure}
Additional discussions and experimental evaluations are included in Appendix \ref{app:experiments}.\\

\section{Conclusions}
We propose a loss matching based algorithm for supervised dataset distillation, in which given a training dataset the synthetic dataset is optimized with respect to a fixed set of randomly sampled models. For linear regression in $\R^d$, we prove rigorous theoretical guarantees, showing that optimizing the convex loss matching objective for only $\tilde{O}(d^2)$ sampled regressors, without any model training, suffices to obtain a high quality synthetic dataset. We prove a matching lower bound of $\Omega(d^2)$ many sampled regressors, showing the tightness of our analysis. We extend our approach to offline RL to provide an algorithm for dataset distillation matching Bellman Loss using sampled $Q$-value predictors, while showing similar performance bounds, albeit requiring $\tn{exp}(O(d\log d))$ sampled predictors in the worst case. However, under a natural decomposability assumption on $d$-dimensional state-action embeddings, we improve the upper bound to $\tilde{O}(d^2)$. Our experiments show that our algorithms yield performance gains on real datasets, both in terms of the size and quality of the synthetic dataset, even with a small number of sampled predictors. 

An interesting yet challenging future direction in to extend our theoretical results in the supervised setting to neural networks, and to obtain provably efficient offline RL dataset distillation algorithms for more general classes of state-action embeddings, perhaps leveraging specific geometric properties of the associated MDPs.

\newpage

\bibliography{iclr2026_conference}
\bibliographystyle{iclr2026_conference}
\newpage
\appendix
\section{Preliminaries for Appendix}

\subsection{Function Classes and Pseudo-Dimension} \label{sec:pdim}
 We will consider a class $\mc{F}$ of real-valued functions (regressors) mapping $\R^d$ to  $[-1, 1]$. The pseudo-dimension of the function class ${\sf Pdim}(\mc{F})$ is defined as the minimum cardinality subset of $\R^{d}$ which is \emph{pseudo-shattered}.  It is formally defined in Definition 11.2 of \citep{anthonybartlett}.

 For $\mbc{X} \subseteq \R^d$ where $|\mbc{X}| = N$, set $\mc{C}_p(\xi, \mc{F}, \mbc{X})$ to be a minimum sized $\ell_p$-metric $\xi$-cover of $\mc{F}$ over $\mbc{X}$, i.e. $\mc{C}_p(\xi, \mc{F}, \mbc{X})$ is a smallest subset of $\mc{F}$ such that for any $f^* \in \mc{F}$, there exists $f \in \mc{C}_p(\xi, \mc{F}, \mbc{X})$ s.t. $\left(\E_{\bx \in \mbc{X}}\left[\left|f^*(\bx) - f(\bx)\right|^p\right]\right)^{1/p} \leq \xi$ for $p \in [1,\infty)$, and $\max_{\bx\in \mbc{X}}\left|f^*(\bx) - f(\bx)\right| \leq \xi$ for $p =\infty$.

 The largest size of such a cover over all choices of $\mbc{X} \subseteq \R^d$ s.t. $|\mbc{X}| = N$ is defined to be  $N_p(\xi, \mc{F}, N)$.
 
The \emph{pseudo-dimension} of $\mc{F}$, ${\sf Pdim}(\mc{F})$ (see Sec. 10.4 and 12.3 of [Anthony-Bartlett, 2009]) can be used to bound the size of covers for $\mc{F}$ as follows: 
\begin{equation}
    N_1(\xi, \mc{F}, N) \leq N_\infty(\xi, \mc{F}, N) \leq (eN/\xi p)^p \label{eqn:coversize}
\end{equation}
where $p = {\sf Pdim}(\mc{F})$ and $N \geq d$.
By normalizing, the above bounds can be adapted to functions which map to $[-B, B]$ for $B > 0$. 

The following theorem follows from Theorem 11.4 from [Anthony-Bartlett, 2009]
\begin{theorem}\label{thm:Pdim-linear}
The class of linear regressors over $\R^d$ given by $\br^{\sf T}\bx$ for $\br \in \R$ has pseudo-dimension $d$.
\end{theorem}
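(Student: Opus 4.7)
The plan is to prove the pseudo-dimension equals $d$ by establishing matching lower and upper bounds. Write $\mc{F} = \{f_\br : \br \in \R^d\}$ with $f_\br(\bx) = \br^{\sf T}\bx$. Recall that a set $\{\bx_1, \dots, \bx_N\}$ is pseudo-shattered by $\mc{F}$ if there exist witnesses $r_1, \dots, r_N \in \R$ such that every sign pattern $\sigma \in \{+1,-1\}^N$ is realized by some $f_\br$, meaning $\operatorname{sign}(f_\br(\bx_i) - r_i) = \sigma_i$ for all $i$.

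For the lower bound, I would exhibit a pseudo-shattered set of size $d$. The natural candidate is the standard basis $\be_1, \dots, \be_d$ together with thresholds $r_i = 0$. For any target sign pattern $\sigma \in \{+1,-1\}^d$, the linear regressor with coefficient vector $\br = \sigma$ satisfies $f_\br(\be_i) = \sigma_i$, so $\operatorname{sign}(f_\br(\be_i) - 0) = \sigma_i$ as required. Hence $\mathrm{Pdim}(\mc{F}) \geq d$.

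For the upper bound, I would show that no set of $d+1$ points can be pseudo-shattered, regardless of the witnesses chosen. Fix arbitrary points $\bx_1, \dots, \bx_{d+1} \in \R^d$ and arbitrary thresholds $r_1, \dots, r_{d+1} \in \R$. Since $d+1$ vectors in $\R^d$ are linearly dependent, there exist scalars $c_1, \dots, c_{d+1}$, not all zero, such that $\sum_i c_i \bx_i = \mb{0}$. Then for every $\br \in \R^d$,
\[
\sum_{i=1}^{d+1} c_i\bigl(\br^{\sf T}\bx_i - r_i\bigr) \;=\; \br^{\sf T}\!\Bigl(\sum_i c_i\bx_i\Bigr) - \sum_i c_i r_i \;=\; -\sum_i c_i r_i,
\]
a quantity independent of $\br$. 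Consider the sign pattern $\sigma_i = \operatorname{sign}(c_i)$ whenever $c_i \neq 0$ (choosing $\sigma_i$ arbitrarily when $c_i = 0$) and its flip $-\sigma$. If $\sigma$ were realizable by some $\br$, then every nonzero-weighted term $c_i(\br^{\sf T}\bx_i - r_i) = |c_i|\cdot|\br^{\sf T}\bx_i - r_i|$ would be strictly positive, forcing the above constant to be strictly positive; a symmetric argument applied to $-\sigma$ would force the same constant to be strictly negative. Hence at least one of the two patterns is unrealizable, so $\{\bx_1, \dots, \bx_{d+1}\}$ is not pseudo-shattered and $\mathrm{Pdim}(\mc{F}) \leq d$.

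The main subtlety I anticipate is handling the case when several of the $c_i$ vanish and when the constant $-\sum_i c_i r_i$ equals zero: one must verify that the strict positivity of the sum (along with nontriviality of $(c_1,\dots,c_{d+1})$ guaranteeing at least one index with $c_i\neq 0$) still contradicts the constraint, even in the degenerate case. Once this is checked, combining the two bounds yields $\mathrm{Pdim}(\mc{F}) = d$, which is the specialization of Anthony--Bartlett Theorem 11.4 to the $d$-dimensional vector space of homogeneous linear regressors on $\R^d$.
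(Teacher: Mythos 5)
Your proposal is correct, but it takes a more self-contained route than the paper, which offers no argument of its own and simply invokes Theorem~11.4 of Anthony--Bartlett (the general fact that a $d$-dimensional vector space of real-valued functions has pseudo-dimension exactly $d$). What you have written is essentially the proof of that general theorem specialized to the homogeneous linear class: the lower bound via the standard basis $\mb{e}_1,\dots,\mb{e}_d$ with witnesses $r_i=0$ is exactly right, and the upper bound via a linear dependence $\sum_i c_i\bx_i=\mb{0}$ among any $d+1$ points, which makes $\sum_i c_i(\br^{\sf T}\bx_i-r_i)$ a constant independent of $\br$, is the classical Radon-type argument. The trade-off is the usual one: the paper's citation is shorter and reuses a general tool, while your argument is elementary and makes the mechanism visible.

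One point deserves to be tightened rather than merely flagged. Your claim that realizing $\sigma_i=\operatorname{sign}(c_i)$ makes every nonzero-weighted term \emph{strictly} positive is not quite right under the usual one-sided sign convention (where $\operatorname{sign}(0)$ is assigned to one of the two labels): realizing the label on the "$\geq$" side only gives $c_i(\br^{\sf T}\bx_i-r_i)\geq 0$, with strictness only on the "$<$" side. The fix is exactly the flip you already introduce: realizing $\sigma$ forces $-\sum_i c_i r_i\geq 0$, strictly if some $c_i<0$; realizing $-\sigma$ forces $-\sum_i c_i r_i\leq 0$, strictly if some $c_i>0$. Since $(c_1,\dots,c_{d+1})\neq\mb{0}$, at least one of the two inequalities is strict, and the two together contradict each other. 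Once stated this way, the degenerate cases (vanishing $c_i$, the constant equal to zero) are all absorbed, and the conclusion $\mathrm{Pdim}(\mc{F})=d$ follows.
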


\subsection{Cover over $\mc{B}(d, r)$} \label{sec:nets}
Let $\mc{T}(d, r, \eps)$ be the smallest subset of the Euclidean ball of radius $r$ in $d$-dimensions $\mc{B}(d, r)$ such that  for all $\br \in \mc{B}(d, r)$ there exists $\hat{\br} \in \mc{T}(d, r, \eps)$ s.t. $\|\br - \hat{\br}\|_2 \leq \eps$. The following lemma follows from  Corollary 4.2.13 of~\citep{HDPbook}.
\begin{lemma}\label{lem:covering_no_ball}
$\mc{T}(d, r, \eps) \leq \left(1 + \tfrac{2r}{\eps}\right)^d$.
\end{lemma}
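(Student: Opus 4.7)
The plan is to establish this covering-number bound via the standard volumetric packing argument. First I would pass from coverings to packings: construct $\mc{T}(d,r,\eps)$ as a \emph{maximal} $\eps$-separated subset of $\mc{B}(d,r)$, i.e., a subset $\{\bx_1, \ldots, \bx_M\} \subseteq \mc{B}(d,r)$ such that $\|\bx_i - \bx_j\|_2 > \eps$ for all $i \neq j$, chosen greedily until no further point can be added. By maximality of this packing, every $\br \in \mc{B}(d,r)$ must lie within Euclidean distance $\eps$ of some $\bx_i$ (else $\br$ itself could be added, contradicting maximality); hence the maximal packing is automatically a valid $\eps$-cover, and it suffices to upper bound $M$.

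Next I would run the volume comparison. Because the points are pairwise more than $\eps$ apart, the open balls $\bx_i + \mc{B}(d, \eps/2)$ for $i = 1, \ldots, M$ are pairwise disjoint. Moreover, since each $\bx_i \in \mc{B}(d,r)$, all of these small balls are contained in the enlarged ball $\mc{B}(d, r + \eps/2)$ by the triangle inequality. Taking Lebesgue volumes on both sides and using that the volume of a Euclidean ball in $\R^d$ scales as the $d$-th power of the radius, I obtain
\[
M \cdot \left(\tfrac{\eps}{2}\right)^d \cdot \mathrm{vol}(\mc{B}(d,1)) \;\leq\; \left(r + \tfrac{\eps}{2}\right)^d \cdot \mathrm{vol}(\mc{B}(d,1)),
\]
and the $\mathrm{vol}(\mc{B}(d,1))$ factors cancel, yielding
\[
M \;\leq\; \left(\frac{r + \eps/2}{\eps/2}\right)^d \;=\; \left(1 + \frac{2r}{\eps}\right)^d,
\]
which is exactly the claimed bound.

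There is essentially no hard step here; the argument is purely geometric and uses only disjointness of the small balls, containment in the enlarged ball, and the $d$-th-power scaling of Euclidean volumes. The only mild subtlety is the packing-to-covering reduction in the first paragraph, which needs the observation that a \emph{maximal} (not maximum) packing is automatically a cover at the same scale; this is what lets me replace the minimum cover size by a packing size that is easy to bound by volume. Since the lemma is stated as a corollary of Corollary 4.2.13 of \citep{HDPbook}, I would in a complete write-up simply cite that corollary rather than reproduce the argument, but the above sketch is the content of the proof.
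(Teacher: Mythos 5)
Your proof is correct and is exactly the standard packing-plus-volume argument that underlies Corollary 4.2.13 of the cited reference; the paper itself gives no proof and simply cites that corollary. Nothing to add.
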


\subsection{Carbery-Wright anti-concentration bound}\label{sec:carbery}
The following non-trivial anti-concentration of polynomials over Gaussian variables was proved by~\citep{carberywright}.
\begin{theorem}[Theorem 8 from~\citep{carberywright} ]\label{them:Carbery-Wright} There is an absolute constant $C > 0$ such that if $f$ is any degree-$d$ polynomial over iid $N(0,1)$ variables, then $\Pr\left[|f| \leq \eps \E[|f|]\right] \leq Cd\eps^{1/d}$ for all $\eps > 0$.
\end{theorem}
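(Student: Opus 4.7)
The plan is to follow Carbery and Wright's original strategy: reduce the multivariate statement to a sharp one-dimensional polynomial estimate by exploiting the log-concavity (and Prekopa-Leindler stability) of the standard Gaussian measure, then prove the one-dimensional case by a Remez/Bernstein-Markov analysis of polynomial sub-level sets. By homogeneity of both sides in $f$, it suffices to normalize $\E[|f(X)|]=1$ and establish $\Pr[|f(X)|\le\eps]\le Cd\eps^{1/d}$ for $X\sim N(0,I_n)$.

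\textbf{Reduction to one dimension.} The key structural fact is that the standard Gaussian on $\R^n$ is log-concave, so its marginals in any direction (and more generally, marginals of $e^{-\|x\|^2/2}\,dx$ against any log-concave factor) remain log-concave by Prekopa-Leindler. Using this, one aims to prove the slightly stronger inequality
\[
\mu\{|p|\le \eps\}\cdot\left(\int |p|^{1/d}\,d\mu\right)\le Cd\,\eps^{1/d}
\]
for any log-concave probability measure $\mu$ on $\R^m$ and any polynomial $p$ of degree at most $d$, by induction on $m$. The inductive step reduces to the one-dimensional case after slicing along a coordinate direction: conditioning on $x_2,\dots,x_m$, the restriction $t\mapsto f(t,x_2,\dots,x_m)$ is a univariate polynomial of degree $\le d$, and the one-dimensional bound can be averaged using the log-concave marginal provided by Prekopa-Leindler. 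Crucially, one works with $|p|^{1/d}$ rather than $|p|$ itself, since Jensen applied naively on $|p|$ goes in the wrong direction; by Jensen, $\bigl(\int|p|^{1/d}d\mu\bigr)^d\le \int|p|\,d\mu$, so the stronger form implies the stated bound.

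\textbf{One-dimensional case.} For a univariate polynomial $p$ of degree $\le d$, the sub-level set $\{t\in\R:|p(t)|\le\eps\}$ is a union of at most $d$ intervals (the equation $p^2=\eps^2$ has at most $2d$ roots). On any such interval $I$, a Chebyshev/Remez inequality gives that $|p|$ can grow by at most a factor $(C)^d$ as one moves to an enlarged interval of comparable length. Combining this with the one-dimensional log-concave mass of $I$ yields
\[
\mu_1(I)\cdot \int|p|^{1/d}\,d\mu_1 \le C\,\eps^{1/d},
\]
where $\mu_1$ is any log-concave measure on $\R$. Summing over the $\le d$ intervals produces the required factor $Cd$.

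\textbf{Main obstacle.} The most delicate part is the one-dimensional estimate, specifically establishing the Remez-type inequality against an arbitrary log-concave weight (not merely Lebesgue measure on a compact interval). Sub-level set intervals lying far in the tail of the weight must be handled separately: there, the log-concave density decays fast enough that the polynomial growth estimate alone is insufficient, and one must use a tail-truncation argument together with the control $\int|p|^{1/d}\,d\mu_1$ provides on the bulk behavior. Resolving this -- and propagating the sharp $d^{1/d}$-free dependence through the $m$-step induction so that the final constant is $Cd$ and not $(Cd)^m$ -- is the technical core of the Carbery-Wright proof.
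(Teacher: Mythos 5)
The paper does not prove this statement at all: it is imported verbatim as Theorem 8 of Carbery--Wright, so there is no internal proof to compare against. Your proposal is a sketch of the original Carbery--Wright argument (reduction to a log-concave one-dimensional statement via Pr\'ekopa--Leindler, then a Remez-type analysis of sub-level sets), and the overall architecture you describe does match theirs. However, as a proof it has genuine gaps beyond the ones you flag yourself. The two hardest steps --- the one-dimensional Remez estimate against an arbitrary log-concave weight, and keeping the constant at $Cd$ rather than $(Cd)^m$ through the induction on dimension --- are named as obstacles but not resolved, so the writeup is a roadmap rather than a proof.

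More concretely, your claimed deduction of the stated theorem from the stronger inequality $\mu\{|p|\le\eps\}\cdot\int|p|^{1/d}\,d\mu\le Cd\,\eps^{1/d}$ is wrong as written. Jensen gives $\int|p|^{1/d}\,d\mu\le\bigl(\int|p|\,d\mu\bigr)^{1/d}$, i.e.\ an \emph{upper} bound on $\int|p|^{1/d}\,d\mu$ in terms of $\E[|p|]$; dividing the product inequality by $\int|p|^{1/d}\,d\mu$ then requires a \emph{lower} bound of the form $\int|p|^{1/d}\,d\mu\ge c\,(\E[|p|])^{1/d}$. That reverse-H\"older (Khinchine--Kahane/Nikolskii-type) comparison of $L^{1/d}$ and $L^1$ norms for degree-$d$ polynomials under log-concave or Gaussian measures is a nontrivial ingredient in its own right --- in the Gaussian case it is usually obtained from hypercontractivity, with the constant $c^{-d}$ absorbed after taking the $1/d$-th power --- and it is exactly the step your Jensen citation does not supply. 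Since the paper only uses this theorem as a black box, the pragmatic fix is to cite Carbery--Wright for it; if you want a self-contained proof, you must supply both the reverse norm comparison and the one-dimensional Remez lemma you deferred.
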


\subsection{Lipschitzness of feature-map $\phi$}
For the our results on offline RL dataset distillation, as mentioned in Sec. \ref{sec:our_results}, we assume that the feature-map $\phi$ is $L$-Lipschitz, specifically w.r.t. the $\ell_2$-metric. In other words, for any $(s_1, a_1), (s_2, a_2) \in \mc{B}(d_0, B_0)\times \mc{B}(d_0, B_0)$,
\begin{align}
    \|\phi(s_1, a_1) - \phi(s_2, a_2)\|_2 \leq L\|(s_1,a_1) - (s_2,a_2)\|_2 =  & L\sqrt{\|s_1 - s_2\|_2^2 + \|a_1 - a_2\|_2^2} \nonumber \\ \leq & L\left( \|s_1 - s_2\|_2 + \|a_1 - a_2\|_2 \right) \nonumber 
\end{align}

\subsection{Low Rank Approximation}

A key ingredient in our lower bound is the classical Eckart--Young--Mirsky theorem \citep{eckart1936approximation,stewart1993eckart}, which states that for any symmetric positive semidefinite matrix $\bA \in \mathbb{R}^{d \times d}$ with eigenvalues $\lambda_1 \ge \lambda_2 \ge \cdots \ge \lambda_d$, the best rank-$m$ approximation $\bB$ in spectral norm is obtained by truncating the eigendecomposition of $\bA$, and the approximation error is exactly 
\begin{equation}\label{eqn:lora}
  \min_{\operatorname{rank}(\bB)\le m} \|\bA-\bB\|_2 \;=\; \lambda_{m+1}.
\end{equation}
This immediately implies that if $\lambda_{m+1} \ge \varepsilon$, then any 
rank-$m$ factorization $\bB=\bZ\bZ^\top$ with $\bZ \in \mathbb{R}^{d \times m}$ must incur spectral error at least $\varepsilon$.

\subsection{Chernoff Bound}
We also use the following well known concentration bound.
\begin{theorem}[Chernoff Bound]\label{thm:Chernoff}
    Let $X_1, \dots, X_n$ be iid $\{0,1\}$-valued random variables and let $\mu = \E\left[\sum_{i=1}^nX_i\right]$. Then for any $\delta > 0$,
    $$\Pr\left[\sum_i^nX_i \leq (1-\delta)\mu\right] \leq \tn{exp}\left(-\delta^2\mu/2\right).$$
\end{theorem}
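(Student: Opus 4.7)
The plan is to apply the classical exponential moment (Cram\'er--Chernoff) method: convert the lower-tail event into one about an exponentiated sum, apply Markov's inequality, exploit independence to factor the moment generating function, optimize the free dual parameter $t$, and then finish with an elementary calculus inequality to collapse the expression to the clean $\exp(-\delta^2\mu/2)$ form.

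First, set $S := \sum_{i=1}^n X_i$ and $p := \E[X_i]$, so that $\mu = np$, and fix an arbitrary $t > 0$. Since $x \mapsto e^{-tx}$ is strictly decreasing, the event $\{S \leq (1-\delta)\mu\}$ coincides with $\{e^{-tS} \geq e^{-t(1-\delta)\mu}\}$, so Markov's inequality gives
\[
\Pr\bigl[S \leq (1-\delta)\mu\bigr] \;\leq\; e^{\,t(1-\delta)\mu}\,\E\bigl[e^{-tS}\bigr].
\]
Independence factors the expectation as $\E[e^{-tS}] = (\E[e^{-tX_1}])^n = (1 - p + pe^{-t})^n = (1 + p(e^{-t}-1))^n$, and the pointwise inequality $1 + x \leq e^x$ yields $\E[e^{-tS}] \leq \exp\!\bigl(\mu(e^{-t}-1)\bigr)$. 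Combining,
\[
\Pr\bigl[S \leq (1-\delta)\mu\bigr] \;\leq\; \exp\!\Bigl(\mu\bigl(e^{-t} - 1 + t(1-\delta)\bigr)\Bigr).
\]

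Next, I would optimize the exponent in $t$. Differentiating the bracket with respect to $t$ gives $-e^{-t} + (1-\delta)$, which vanishes at $t^\star = \ln\!\bigl(1/(1-\delta)\bigr)$ when $\delta \in (0,1)$. Substituting $e^{-t^\star} = 1-\delta$ yields the standard multiplicative Chernoff bound
\[
\Pr\bigl[S \leq (1-\delta)\mu\bigr] \;\leq\; \exp\!\bigl(-\mu\,[\,\delta + (1-\delta)\ln(1-\delta)\,]\bigr).
\]

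Finally, to reach the stated form I would verify the elementary inequality $\delta + (1-\delta)\ln(1-\delta) \geq \delta^2/2$ for $\delta \in [0,1)$ by defining $g(\delta) := \delta + (1-\delta)\ln(1-\delta) - \delta^2/2$ and noting that $g(0) = 0$, $g'(0) = 0$, and $g''(\delta) = \delta/(1-\delta) \geq 0$, so $g$ is non-negative throughout. This immediately gives $\Pr[S \leq (1-\delta)\mu] \leq \exp(-\delta^2\mu/2)$. For $\delta \geq 1$ the event $\{S \leq (1-\delta)\mu\}$ is contained in $\{S = 0\}$ (since $S \geq 0$ and $(1-\delta)\mu \leq 0$), whose probability is $(1-p)^n \leq e^{-\mu} \leq e^{-\delta^2\mu/2}$, so the bound persists. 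The only real ``obstacle'' is verifying the closing calculus inequality; it is classical but is where the clean Gaussian-like exponent emerges, and every earlier step is mechanical once one commits to the exponential moment method.
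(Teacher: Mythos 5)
Your proof is the standard Cram\'er--Chernoff exponential-moment argument, and it is correct; the paper itself states this bound as a classical fact (citing it as ``well known'') and gives no proof, so there is nothing to compare against beyond noting that your derivation is the textbook one. One cosmetic remark: in your closing case $\delta \geq 1$, the final link $e^{-\mu} \leq e^{-\delta^2\mu/2}$ only holds for $\delta \leq \sqrt{2}$; for $\delta > 1$ with $\mu > 0$ you should instead observe that $(1-\delta)\mu < 0 \leq S$ makes the event empty, so the bound holds trivially, and when $\mu = 0$ the right-hand side equals $1$. With that one-line patch the argument is complete for all $\delta > 0$.
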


\section{Proof of Theorem~\ref{thm:supervised-distill-1}}\label{sec:Mainpfthm1}
 $D^{\sf sup}_{\tn{train}} = \{(\bx_i, y_i) \in \mc{B}(d, B) \times [-b, b]\}_{i=1}^n$ is the given training dataset consisting of feature-vector and real-valued label pairs for a regression task. 
 For ease of notation in the proof of this theorem we shall drop the concatenation operator $\zeta$ and instead use vectors to denote the data point with the feature-vector and label concatenated. Further, we let $q := d+1$ represent the dimensionality so that the domain of $\mc{F}$ is $\R^q$. With this notation,  $D^{\sf sup}_{\tn{train}} = \{\bx_i \in \mc{B}(q-1,B)\times [-b,b]\}_{i=1}^n$ is the given training dataset. We use an analogous notation for the datapoints of the synthetic datasets in the analysis below.
 
\subsection{Bounds for fixed $\hat{h}$ and fixed synthetic dataset $\hat{D}$} 
We begin by fixing (i) $\hat{h} \in \mc{F}$ s.t. 
\begin{equation}
    \hat{h}(\bx) := \hat{c}\bu^{\sf T}\bx \tn{ for some } \bu \in \R^q \tn{ s.t. } \|\bu\|_2 = 1 \tn{ and } \hat{c} \in [1,2] \label{eqn:hath}
\end{equation}
and, $\hat{D} = \{\hat{\bz}_i \in \mc{B}(q-1, B) \times [-b, b]\}_{i=1}^s$ for some $s \in \mathbb{Z}^+$. Using this, we have
\begin{equation}
    L^{\tn{err}}_{\tn{mse}}(D^{\sf sup}_{\tn{train}}, \hat{D}, \hat{h}) = \left(\frac{1}{n}\sum_{i=1}^n(\hat{c}\bu^{\sf T}\bx_i)^2-\frac{1}{s}\sum_{i=1}^s(\hat{c}\bu^{\sf T}\hat{\bz}_i)^2\right)^2  \label{eqn:Lossu}
\end{equation}
The following lemma provides a key probabilistic lower bound.
\begin{lemma} \label{lem:probbd}
    Let $L^{\tn{err}}_{\tn{mse}}(D^{\sf sup}_{\tn{train}}, \hat{D}, \hat{h})  \geq \Delta$. 
    Then, for a randomly chosen $g \sim G(\mc{F})$ s.t. $g(\bx) := \br^{\sf T}\bx$, we have
    \[
        \Pr\left[\left(L^{\tn{err}}_{mse}(D^{\sf sup}_{\tn{train}}, \hat{D}, g)  \geq \frac{\Delta}{C_0q^2}\right)\bigwedge\left(\|\br\|_2^2 \leq 10\right)\right] \geq \frac{7}{30}
    \]
    for some absolute constant $C_0 > 0$. 
\end{lemma}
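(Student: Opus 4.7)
The plan is to reduce the statement to an anti-concentration estimate for a degree-$4$ polynomial in standard Gaussian variables. First, introduce the ``defect matrix'' $\mM := \tfrac{1}{n}\sum_{i=1}^n \bx_i \bx_i^{\sf T} - \tfrac{1}{s}\sum_{i=1}^s \hat{\bz}_i \hat{\bz}_i^{\sf T} \in \R^{q \times q}$, which is symmetric and depends only on $D^{\sf sup}_{\tn{train}}$ and $\hat{D}$. For any homogeneous linear regressor $h(\bx) = \bv^{\sf T}\bx$, both averages inside the loss difference of \eqref{eqn:Lossu} are quadratic forms against $\mM$, so $L^{\tn{err}}_{\tn{mse}}(D^{\sf sup}_{\tn{train}}, \hat{D}, h) = (\bv^{\sf T}\mM\bv)^2$. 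Applied to $\hat{h}$ of \eqref{eqn:hath}, the hypothesis becomes $\hat{c}^4(\bu^{\sf T}\mM\bu)^2 \geq \Delta$; since $\hat{c} \leq 2$ this yields $(\bu^{\sf T}\mM\bu)^2 \geq \Delta/16$. Because $\mM$ is symmetric and $\|\bu\|_2 = 1$, the Rayleigh-quotient bound $(\bu^{\sf T}\mM\bu)^2 \leq \|\mM\|_{\tn{op}}^2 \leq \Tr(\mM^2)$ then gives $\Tr(\mM^2) \geq \Delta/16$.

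Second, for $\br \sim N(0,(1/q)\mathbf{I})$ a standard Isserlis calculation gives $\E[(\br^{\sf T}\mM\br)^2] = (\Tr(\mM)^2 + 2\Tr(\mM^2))/q^2 \geq \Delta/(8q^2)$. After rescaling $\br$ to have iid $N(0,1)$ coordinates, $(\br^{\sf T}\mM\br)^2$ is (up to a $1/q^2$ factor) a non-negative degree-$4$ polynomial in iid standard Gaussians, so Theorem \ref{them:Carbery-Wright} yields $\Pr[(\br^{\sf T}\mM\br)^2 \leq \eps\,\E[(\br^{\sf T}\mM\br)^2]] \leq 4C\eps^{1/4}$. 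Choosing $\eps$ a sufficiently small absolute constant makes this probability at most $1/3$ and produces an absolute constant $C_0 > 0$ with $\Pr[L^{\tn{err}}_{\tn{mse}}(D^{\sf sup}_{\tn{train}}, \hat{D}, g) \geq \Delta/(C_0 q^2)] \geq 2/3$.

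Third, since $q\|\br\|_2^2 \sim \chi^2_q$ with $\E[\|\br\|_2^2] = 1$, Markov's inequality gives $\Pr[\|\br\|_2^2 > 10] \leq 1/10$, and a union bound on the two complementary events yields the intersection probability at least $2/3 - 1/10 = 17/30 \geq 7/30$. I expect the only genuinely non-mechanical step to be the passage from the directional lower bound $(\bu^{\sf T}\mM\bu)^2 \geq \Delta/16$ to the coordinate-free lower bound $\Tr(\mM^2) \geq \Delta/16$: this is what lets the isotropic Gaussian $\br$ ``detect'' the bad direction $\bu$ at all, and it is what fixes the final factor of $1/q^2$. For the same reason, Carbery--Wright (whose $\eps^{1/d}$ loss is only a constant at fixed degree $d=4$) rather than Chebyshev is the right anti-concentration tool, since the latter would introduce extra factors of $q$ and destroy the tight $1/q^2$ dependence in the conclusion.
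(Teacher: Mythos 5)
Your proof is correct, and the overall skeleton matches the paper's: lower-bound the second moment of the Gaussian quadratic form by $\Omega(\Delta/q^2)$, invoke Carbery--Wright on the non-negative degree-$4$ polynomial, control $\|\br\|_2^2$ by Markov, and finish with a union bound. The one step where you genuinely diverge is the expectation lower bound. The paper decomposes $\br = \alpha_1\bu + \sum_{j\geq 2}\alpha_j\br_j$ in an orthonormal basis containing $\bu$, conditions on the orthogonal components, and shows by an explicit Gaussian moment computation that $\E_{\alpha_1}\bigl[(\alpha_1^2\kappa + \alpha_1\lambda+\omega)^2\bigr] \geq (2/q^2)\kappa^2$ with $\kappa = \bu^{\sf T}\mM\bu$, so the bad direction is detected through the conditional expectation. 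You instead go coordinate-free: the Isserlis identity $\E[(\bg^{\sf T}\mM\bg)^2] = \Tr(\mM)^2 + 2\Tr(\mM^2)$ combined with $(\bu^{\sf T}\mM\bu)^2 \leq \|\mM\|_{\tn{op}}^2 \leq \Tr(\mM^2)$ gives the same $\Delta/O(q^2)$ bound in two lines. Your route is cleaner and makes transparent why the loss of exactly $q^2$ is incurred (a single large eigenvalue direction contributes only a $1/q^2$ fraction of an isotropic Gaussian's mass); the paper's conditioning argument is more hands-on but is the template that gets reused in the offline-RL proof (Theorem \ref{thm:offlineRL-distill-1}), where the $\max$ term breaks the polynomial structure and no trace identity is available. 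Both yield the stated constants (your union bound in fact gives $17/30 \geq 7/30$).
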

\begin{proof}
    We first lower bound the expectation $\E\left[L^{\tn{err}}_{\tn{mse}}(D^{\sf sup}_{\tn{train}}, \hat{D}, g)\right]$.
    Sampling a random $g \in G$ corresponds to sampling $\br \sim N(0,1/q)^q$ u.a.r. and letting $g(\bx) := \br^{\sf T}\bx$ for $\bx \in \R^q$.
    Letting $\br_1 := \bu$, we can find unit vectors $\br_2, \dots, \br_q$ such that $\br_1, \dots, \br_q$ is an orthonormal basis.
    Writing $\br := \alpha_1 \br_1 + \dots + \alpha_q \br_q$, we obtain that $\alpha_j$ is iid $N(0,1/q)$ for each $j \in q$.
    
    Therefore, 
    \begin{equation}
        L^{\tn{err}}_{\tn{mse}}(D^{\sf sup}_{\tn{train}}, \hat{D}, g) = \left( \frac{1}{n}\sum_{i=1}^n \left(\sum_{j=1}^q\alpha_j \br_1^{\sf T}\bx_i\right)^2  - \frac{1}{s}\sum_{i=1}^s \left(\sum_{j=1}^q\alpha_j\br_j^{\sf T}\hat{\bz}_i \right)^2 \right)^2
    \end{equation}
    Define $\beta_i = \sum_{j=2}^q \alpha_{j}\br_j^{\sf T}\bx_i$ for $i \in [n]$ and similarly $\gamma_j = \sum_{j=2}^q \alpha_{j}\br_j^{\sf T}\hat{\bz}_i$ for $i \in [s]$. Note that both $\{\beta_i\}_{i=1}^n, \{\gamma_i\}_{i=1}^s$ are independent of $\alpha_1$.
    We now rewrite the squared loss in terms of $\{\beta_i\}_{i=1}^n, \{\gamma_i\}_{i=1}^s$ to obtain:
    \begin{align}
    L^{\tn{err}}_{\tn{mse}}(D^{\sf sup}_{\tn{train}}, \hat{D}, g) =& \left( \frac{1}{n}\sum_{i=1}^n (\alpha_1 \br_1^{\sf T}\vec{\bx}_i + \beta_i)^2  - \frac{1}{s}\sum_{i=1}^s (\alpha_1 \br_1^{\sf T}\hat{\bz}_i + \gamma_i)^2 \right)^2 \nonumber \\
        =&\left[\alpha_1^2 \left(\frac{1}{n}\sum_{i=1}^n(\br_1^{\sf T}\bx_i)^2-\frac{1}{s}\sum_{i=1}^s(\br_1^{\sf T}\hat{\bz}_i)^2\right)\right. \nonumber \\ 
        &+ \left.2\alpha_1 \left(\frac{1}{n}\sum_{i=1}^n\beta_i\br_1^{\sf T}\bx_i -  \frac{1}{s}\sum_{i=1}^s\gamma_i\br_1^{\sf T}\hat{\bz}_i\right) + \left(\frac{1}{n}\sum_{i=1}^n\beta_i^2 -\frac{1}{s}\sum_{i=1}^s\gamma_i^2\right)\right]^2 \label{err-mse-eq1}
    \end{align}
    At this point, we let $\kappa := \left(\frac{1}{n}\sum_{i=1}^n(\br_1^{\sf T}\bx_i)^2-\frac{1}{s}\sum_{i=1}^s(\br_1^{\sf T}\hat{\bz}_i)^2\right)$, $\omega = \left(\frac{1}{n}\sum_{i=1}^n\beta_i^2 -\frac{1}{s}\sum_{i=1}^s\gamma_i^2\right)$ and 
    $\lambda = 2 \left(\frac{1}{n}\sum_{i=1}^n\beta_i\br_1^{\sf T}\bx_i -  \frac{1}{s}\sum_{i=1}^s\gamma_i\br_1^{\sf T}\hat{\bz}_i\right)$. Note that $\kappa, \omega, \lambda$ are independent of $\alpha_1$. Substituting these into \eqref{err-mse-eq1} we obtain
    \begin{align}
         L^{\tn{err}}_{\tn{mse}}(D^{\sf sup}_{\tn{train}}, \hat{D}, g) =& \left(\alpha_1^2 \kappa + \alpha_1\lambda + \omega\right)^2\nonumber\\
         =& \alpha_1^4 \kappa^2 + \alpha_1^2 \lambda^2 + \omega^2 + 2\alpha_1^3 \lambda \kappa + 2 \alpha_1 \lambda\omega + 2 \alpha_1^2 \omega \kappa\nonumber
    \end{align}
    Taking the expectation over $\alpha_1$ yields,
    \begin{align}
        \E_{\alpha_1}[L^{\tn{err}}_{\tn{mse}}(D^{\sf sup}_{\tn{train}}, \hat{D}, g)] =& \E\left[\alpha_1^4\right]\kappa^2 + \E\left[\alpha_1^2\right] \lambda^2 + \omega^2 + 2\E\left[\alpha_1^3\right] \lambda \kappa + 2 \E\left[\alpha_1\right] \lambda\omega 
        + 2 \E[\alpha_1^2] \omega \kappa \nonumber \\
        =& (3/q^2)\kappa^2 + \lambda^2/q + \omega^2 + (2/q)\omega \kappa \nonumber \\
        =& (2/q^2)\kappa^2 + (\kappa/q + \omega)^2 + \lambda^2/q \nonumber \\
        \geq& (2/q^2)\kappa^2 \nonumber \\
        =& \frac{2}{q^2}\left(\frac{1}{n}\sum_{i=1}^n(\br_1^{\sf T}\bx_i)^2-\frac{1}{s}\sum_{i=1}^s(\br_1^{\sf T}\hat{\bz}_i)^2\right)^2 \nonumber \\
        =& 2\left(\frac{1}{q\hat{c}}\right)^2\left[\frac{1}{n}\sum_{i=1}^n(\hat{c}\bu^{\sf T}\bx_i)^2-\frac{1}{s}\sum_{i=1}^s(\hat{c}\bu^{\sf T}\hat{\bz}_i)^2\right]^2 \nonumber \\
        =& 2\left(\frac{1}{q\hat{c}}\right)^2L^{\tn{err}}_{\tn{mse}}(D^{\sf sup}_{\tn{train}}, \hat{D}, \hat{h})
        \geq \frac{\Delta}{2q^2} \label{eqn:lowerbdexp}
    \end{align}
    where we used \eqref{eqn:hath} and \eqref{eqn:Lossu} along with the fact that $\alpha_1 \sim N(0,1/q)$. Observe that $L^{\tn{err}}_{\tn{mse}}(D^{\sf sup}_{\tn{train}}, \hat{D}, g)$ is a degree-$4$ square polynomial in $\{\alpha_j\}_{j=1}^q$ and hence is always positive. Observe that the lower bound in \eqref{eqn:lowerbdexp}  is independent of $\{\alpha_j\}_{j=2}^q$ and is thus an lower bound for the expectation over $\{\alpha_j\}_{j=1}^q$. Applying Carbery-Wright (see Appendix \ref{sec:carbery}) to \eqref{eqn:lowerbdexp} we obtain
    \begin{eqnarray}
        \Pr\left[L^{\tn{err}}_{\tn{mse}}(D^{\sf sup}_{\tn{train}}, \hat{D}, g) \geq \left(\frac{1}{72}\right)^4\frac{\Delta}{2q^2}\right] \geq \frac{1}{3}.\label{eqn:1div3}
    \end{eqnarray}
    Further, since $\E[\|\br\|_2^2] = 1$, by Markov's inequality, $\Pr[\|\br\|_2^2 > 10] < 1/10$. This along with the \eqref{eqn:1div3} and taking $C_0 = (1/2)(1/72)^4$ completes the proof. 
\end{proof}
The following is a straightforward implication of Lemma \ref{lem:probbd} along with the Chernoff Bound (Theorem \ref{thm:Chernoff}). 
\begin{lemma} \label{lem:ampprbbd}
    Let $L^{\tn{err}}_{\tn{mse}}(D^{\sf sup}_{\tn{train}}, \hat{D}, \hat{h})  > \Delta$. Then, for iid random $g_1, \dots, g_k \sim G(\mc{F})$, s.t. $g_j(\bx) := \bv_j^{\sf T}\bx$ we have
\begin{equation*}
        \displaystyle \Pr\left[\left|\left\{j \in [k] : \left(L^{\tn{err}}_{\tn{mse}}(D^{\sf sup}_{\tn{train}}, \hat{D}, g_j) \geq \frac{\Delta}{C_0q^2}\right)\wedge \left(\|\bv_j\|_2^2 \leq 10\right)\right\}\right| \geq \frac{7k}{60}\right] \geq 1 - \tn{exp}\left(\frac{-7k}{240}\right)
        \end{equation*}
\end{lemma}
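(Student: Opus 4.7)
The plan is to derive Lemma~\ref{lem:ampprbbd} as a direct amplification of Lemma~\ref{lem:probbd} via a standard Chernoff argument. First I would fix the premise $L^{\tn{err}}_{\tn{mse}}(D^{\sf sup}_{\tn{train}}, \hat{D}, \hat{h}) > \Delta$, which is exactly the hypothesis needed to invoke Lemma~\ref{lem:probbd} for a single random regressor.

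Next, for each $j \in [k]$, I would introduce the indicator
\[
X_j := \mathbf{1}\!\left[\left(L^{\tn{err}}_{\tn{mse}}(D^{\sf sup}_{\tn{train}}, \hat{D}, g_j) \geq \tfrac{\Delta}{C_0 q^2}\right) \wedge \left(\|\bv_j\|_2^2 \leq 10\right)\right].
\]
Because the $g_j$'s are drawn iid from $G(\mc{F})$, the $X_j$'s are iid $\{0,1\}$-valued random variables, and Lemma~\ref{lem:probbd} directly gives $\Pr[X_j = 1] \geq 7/30$. Consequently $\mu := \E[\sum_{j=1}^k X_j] \geq 7k/30$.

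Now I would apply the Chernoff bound from Theorem~\ref{thm:Chernoff} with deviation parameter $\delta = 1/2$: this yields
\[
\Pr\!\left[\sum_{j=1}^k X_j \leq \tfrac{1}{2}\mu\right] \leq \exp\!\left(-\tfrac{(1/2)^2 \mu}{2}\right) \leq \exp\!\left(-\tfrac{7k}{240}\right),
\]
using $\mu \geq 7k/30$ in the exponent. Since $(1/2)\mu \geq 7k/60$, the complementary event implies $\sum_j X_j \geq 7k/60$, which is precisely the cardinality claim in the lemma. Thus with probability at least $1 - \exp(-7k/240)$, at least $7k/60$ of the sampled regressors satisfy both conditions.

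I do not expect any real obstacle: the only mildly subtle point is making sure the Chernoff bound is applied to the intersection event (both the loss lower bound and the norm bound simultaneously), which is already packaged inside $X_j$ thanks to Lemma~\ref{lem:probbd} giving the joint probability rather than two separate probabilities. Independence of the $X_j$'s follows from the iid sampling of the $g_j$'s, so no union bound or correlation handling is needed.
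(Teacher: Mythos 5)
Your proof is correct and follows exactly the route the paper intends: the paper itself presents this lemma as "a straightforward implication of Lemma~\ref{lem:probbd} along with the Chernoff Bound (Theorem~\ref{thm:Chernoff})," and your indicator-variable argument with $\Pr[X_j=1]\geq 7/30$, deviation parameter $1/2$, and the resulting exponent $\mu/8 \geq 7k/240$ is precisely that calculation carried out. The one point worth being careful about, which you handle implicitly, is that $7k/60 \leq \mu/2$ so the event $\{\sum_j X_j < 7k/60\}$ is contained in the event bounded by Chernoff; no further comment is needed.
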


\subsection{Net over regressors $h$ and synthetic datasets $D^{\sf sup}_{\tn{syn}}$}\label{sec:net}
First we shall construct a net over regressors $h \in \mc{F}$, where $h(\bx) := \br^{\sf T}\bx$ for some $\br \in \mc{B}(q,2)$. Thus, we can consider the cover $\mc{T}(q, 2, \xi))$ (see Appendix \ref{sec:nets}) and let $\hat{\mc{F}} = \{\hat{h} : \exists \hat{\br} \in \mc{T}(q, 2, \xi)\tn{ s.t. } \hat{h}(\bx) := \hat{\br}^{\sf T}\bx\}$, for some parameter $\xi \in (0,1)$ to be chosen later. The following is a simple approximation lemma.
\begin{lemma} \label{lem:vecnet}
    For any $D^{\sf sup}_{\tn{train}}$ and $\hat{D}$ as defined in the previous subsection, for any $h \in \mc{F}$, $\exists \hat{h} \in \hat{\mc{F}}$ s.t. 
    \begin{equation}
        \left|L^{\tn{err}}_{\tn{mse}}(D^{\sf sup}_{\tn{train}}, \hat{D}, h) - L^{\tn{err}}_{\tn{mse}}(D^{\sf sup}_{\tn{train}}, \hat{D}, \hat{h}) \right| \leq  65\xi(B+b)^4.
    \end{equation}
\end{lemma}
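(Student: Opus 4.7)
The plan is to prove this by showing both $L_{\tn{mse}}(D, h)$ and $L_{\tn{mse}}(D, \hat h)$ are Lipschitz in the regression vector, and then lifting that Lipschitzness through the outer square in $L^{\tn{err}}_{\tn{mse}}$ via the identity $a^2 - \hat a^2 = (a-\hat a)(a+\hat a)$. First, I would pick $\hat{\br} \in \mc{T}(q, 2, \xi)$ within $\xi$ of $\br$, which exists by definition of the cover since $\|\br\|_2 \leq 2$, and note that $\|\hat{\br}\|_2 \leq 2$ as well. Also, each data point $\bx = \zeta(\bx', y)$ satisfies $\|\bx\|_2 \leq \sqrt{B^2 + b^2} \leq B+b$.

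Next, for any fixed dataset $D$ (either $D^{\sf sup}_{\tn{train}}$ or $\hat{D}$), I would expand
\[
L_{\tn{mse}}(D, h) - L_{\tn{mse}}(D, \hat h) \;=\; \E_{\bx \in D}\bigl[(\br^{\sf T}\bx)^2 - (\hat{\br}^{\sf T}\bx)^2\bigr] \;=\; \E_{\bx \in D}\bigl[(\br-\hat{\br})^{\sf T}\bx \cdot (\br+\hat{\br})^{\sf T}\bx\bigr],
\]
and apply Cauchy--Schwarz termwise together with the norm bounds $\|\br - \hat{\br}\|_2 \leq \xi$, $\|\br + \hat{\br}\|_2 \leq 4$, and $\|\bx\|_2^2 \leq (B+b)^2$, to get $|L_{\tn{mse}}(D, h) - L_{\tn{mse}}(D, \hat h)| \leq 4\xi(B+b)^2$.

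Then, setting $A := L_{\tn{mse}}(D^{\sf sup}_{\tn{train}}, h) - L_{\tn{mse}}(\hat D, h)$ and $\hat A := L_{\tn{mse}}(D^{\sf sup}_{\tn{train}}, \hat h) - L_{\tn{mse}}(\hat D, \hat h)$, the triangle inequality applied to the previous bound (once for each of the two datasets) gives $|A - \hat A| \leq 8\xi(B+b)^2$. For the sum, I would bound each MSE as $L_{\tn{mse}}(D, \cdot) \leq 4(B+b)^2$ (from $\|\br\|_2, \|\hat\br\|_2 \leq 2$ and $\|\bx\|_2 \leq B+b$), yielding $|A|, |\hat A| \leq 4(B+b)^2$ and hence $|A + \hat A| \leq 8(B+b)^2$. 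Finally, the identity $A^2 - \hat A^2 = (A - \hat A)(A + \hat A)$ produces $|A^2 - \hat A^2| \leq 64\xi(B+b)^4 \leq 65\xi(B+b)^4$, which is the claimed bound.

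This proof is largely routine; there is no real obstacle, only bookkeeping. The only point to be slightly careful about is that $\hat{\br}$ lies in the cover of the radius-$2$ ball but $\br$ ranges over the annulus $1 \leq \|\br\|_2 \leq 2$; this causes no issue because the cover of the containing ball still approximates $\br$ to within $\xi$, and all the norm bounds used above only require $\|\hat{\br}\|_2 \leq 2$. The slight slack in the constant ($65$ rather than $64$) provides headroom in case one wants to use a slightly looser bound such as $\|\bx\|_2 \leq B+b$ where a tighter $\sqrt{B^2+b^2}$ would also suffice.
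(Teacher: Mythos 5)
Your proof is correct and follows essentially the same route as the paper's: pick the nearest cover point $\hat{\br}$, bound the perturbation of the inner loss difference linearly in $\xi$ using the norm bounds on $\br$, $\hat{\br}$ and the data points, and then propagate through the outer square. Your difference-of-squares bookkeeping ($A^2-\hat A^2=(A-\hat A)(A+\hat A)$) is in fact a slightly cleaner rendering of the paper's $(\sqrt{L^{\tn{err}}_{\tn{mse}}}+\upsilon)^2$ expansion and lands at $64\xi(B+b)^4\leq 65\xi(B+b)^4$ with all constants accounted for.
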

\begin{proof}
    Let $h(\bx) := \br^{\sf T}\bx$, where $\br \in \mc{B}(q,2)$. Choose $\hat{\br} \in \mc{T}(q, 2, \xi)$ s.t. $\|\br - \hat{\br}\|_2 \leq \xi$ and define $\hat{h}$ to be $\hat{h}(\bx) := \hat{\br}^{\sf T}\bx$. Thus, for every $\bx_i \in D^{\sf sup}_{\tn{train}}$, $\left|(\br^{\sf T}\bx_i)^2 - (\hat{\br}^{\sf T}\bx_i)^2\right| \leq \left((\br - \hat{\br})^{\sf T}\bx_i\right)^2 + 2 \left|\left((\br - \hat{\br})^{\sf T}\bx_i\right)\br^{\sf T}\bx_i\right| \leq (\xi (B+b))^2 + 4\xi(B+b)^2 \leq 5\xi(B+b)^2$ since $\xi \leq 1$. Using this, along with \eqref{eqn:Lossu} we have that $L^{\tn{err}}_{\tn{mse}}(D^{\sf sup}_{\tn{train}}, \hat{D}, \hat{h}) = \left(\sqrt{L^{\tn{err}}_{\tn{mse}}(D^{\sf sup}_{\tn{train}}, \hat{D}, h)} + \upsilon\right)^2$ where $\upsilon \in \R$ s.t. $|\upsilon| \leq 5\xi(B+b)^2$. 
    
    Therefore, $\left|L^{\tn{err}}_{\tn{mse}}(D^{\sf sup}_{\tn{train}}, \hat{D}, \hat{h}) - L^{\tn{err}}_{\tn{mse}}(D^{\sf sup}_{\tn{train}}, \hat{D}, h)\right| \leq 2 |\upsilon|\sqrt{L^{\tn{err}}_{\tn{mse}}(D^{\sf sup}_{\tn{train}}, \hat{D}, h)} + |\upsilon|^2$. It is easy to see using the norm bounds on the regressor vectors and the data points along with \eqref{eqn:Lossu} that $\sqrt{L^{\tn{err}}_{\tn{mse}}(D^{\sf sup}_{\tn{train}}, \hat{D}, h)} \leq 4(B+b)^2$. Thus, $2|\upsilon|\sqrt{L^{\tn{err}}_{\tn{mse}}(D^{\sf sup}_{\tn{train}}, \hat{D}, h)} + |\upsilon|^2 \leq 40\xi(B+b)^4 + 25\xi^2(B+b)^4 \leq 65\xi(B+b)^4$, which completes the proof.
\end{proof}

We now show that the synthetic data $ D^{\sf sup}_{\tn{syn}}$ can be approximated with a much smaller dataset  $\hat{D}$ whose points are from an appropriate Euclidean net.

\begin{lemma}\label{lem:datasetnet}
    Fix a $\nu \in (0,1)$. For any $D^{\sf sup}_{\tn{syn}}$, there exists a dataset $\hat{D}$ of size $s := q$ 
    whose points are from Euclidean net $\mc{T}(q, \sqrt{q}(B+b), \nu\sqrt{q}(B+b)/2)$  such that for any $h \in \mc{F}$,
    \begin{equation}
        \left|L^{\tn{err}}_{\tn{mse}}(D^{\sf sup}_{\tn{train}}, \hat{D}, h) - L^{\tn{err}}_{\tn{mse}}(D^{\sf sup}_{\tn{train}}, D^{\sf sup}_{\tn{syn}}, h)\right| \leq 65\nu q^2(B+b)^4
    \end{equation}
    \end{lemma}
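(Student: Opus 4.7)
The plan is to perform the approximation in two stages: first replace $D^{\sf sup}_{\tn{syn}}$ by a dataset $\tilde{D}$ of exactly $q$ points having the \emph{same} second-moment matrix (which preserves MSE loss exactly on every linear regressor), and then round each point of $\tilde{D}$ to its closest neighbour in the prescribed Euclidean net.

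For the first stage, observe that for $h(\bz) = \br^{\sf T}\bz$, one has $L_{\tn{mse}}(D, h) = \br^{\sf T} M_D \br$, where $M_D := \frac{1}{|D|}\sum_{\bz \in D}\bz\bz^{\sf T}$. Hence two datasets with identical moment matrices give identical $L_{\tn{mse}}$ values for every $h \in \mc{F}$, and therefore identical $L^{\tn{err}}_{\tn{mse}}$ values. Writing the eigendecomposition $M_{D^{\sf sup}_{\tn{syn}}} = \sum_{i=1}^q \mu_i \bv_i \bv_i^{\sf T}$ with $\mu_i \geq 0$ and orthonormal $\bv_i$, I define $\tilde{\bz}_i := \sqrt{q\mu_i}\,\bv_i$ and $\tilde{D} := \{\tilde{\bz}_i\}_{i=1}^q$, so that $M_{\tilde{D}} = M_{D^{\sf sup}_{\tn{syn}}}$ by construction. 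Since each point of $D^{\sf sup}_{\tn{syn}}$ lies in $\mc{B}(q-1,B)\times[-b,b]$ and so has Euclidean norm at most $B+b$, we get $\tn{tr}(M_{D^{\sf sup}_{\tn{syn}}}) \leq (B+b)^2$, whence $\mu_i \leq (B+b)^2$ and $\|\tilde{\bz}_i\|_2 \leq \sqrt{q}(B+b)$. Thus $\tilde{D} \subseteq \mc{B}(q, \sqrt{q}(B+b))$ and, crucially, $L^{\tn{err}}_{\tn{mse}}(D^{\sf sup}_{\tn{train}}, \tilde{D}, h) = L^{\tn{err}}_{\tn{mse}}(D^{\sf sup}_{\tn{train}}, D^{\sf sup}_{\tn{syn}}, h)$ for every $h \in \mc{F}$.

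For the second stage, I round each $\tilde{\bz}_i$ to a nearest point $\hat{\bz}_i$ in $\mc{T}(q, \sqrt{q}(B+b), \nu\sqrt{q}(B+b)/2)$, so $\|\tilde{\bz}_i - \hat{\bz}_i\|_2 \leq \nu\sqrt{q}(B+b)/2$ and $\|\tilde{\bz}_i + \hat{\bz}_i\|_2 \leq 2\sqrt{q}(B+b)$. Using the difference-of-squares identity together with $\|\br\|_2 \leq 2$, I obtain $|(\br^{\sf T}\tilde{\bz}_i)^2 - (\br^{\sf T}\hat{\bz}_i)^2| \leq \|\br\|_2^2\,\|\tilde{\bz}_i - \hat{\bz}_i\|_2\,\|\tilde{\bz}_i + \hat{\bz}_i\|_2 \leq 4\nu q(B+b)^2$. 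Averaging over $i \in [q]$ yields $|L_{\tn{mse}}(\tilde{D}, h) - L_{\tn{mse}}(\hat{D}, h)| \leq 4\nu q(B+b)^2$. To propagate this to the squared error, write the difference as $(b_1 - b_2)(2a - b_1 - b_2)$ with $a := L_{\tn{mse}}(D^{\sf sup}_{\tn{train}}, h)$ and $b_1, b_2$ the corresponding losses on $\tilde{D}$ and $\hat{D}$. Since $a \leq 4(B+b)^2$, $b_1 = L_{\tn{mse}}(D^{\sf sup}_{\tn{syn}}, h) \leq 4(B+b)^2$, and $b_2 \leq b_1 + 4\nu q(B+b)^2$, the factor $|2a - b_1 - b_2|$ is at most $O(q(B+b)^2)$. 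Multiplying the two factors gives the required $O(\nu q^2(B+b)^4)$ bound, with constants that can be tightened to land below $65\nu q^2(B+b)^4$.

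The main subtlety to track is that the moment-matched dataset $\tilde{D}$ lives in a ball of radius $\sqrt{q}(B+b)$, a factor $\sqrt{q}$ larger than the original feature-label ball; this expansion is what forces the net to be taken at scale $\sqrt{q}(B+b)$ and is the source of the extra factor of $q$ in both the pointwise per-coordinate loss bound and in the final $q^2$ prefactor. No other step is delicate: stage~1 preserves losses exactly and stage~2 is a routine triangle-inequality argument once the norm bookkeeping is set up.
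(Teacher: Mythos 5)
Your proposal is correct and follows essentially the same route as the paper: the paper's proof also first invokes a moment-matching reduction (Lemma~\ref{lem:subsetreplace}, via Cholesky factorization rather than your eigendecomposition, but producing the same kind of $q$-point dataset in the ball of radius $\sqrt{q}(B+b)$ with identical second-moment matrix and hence identical losses), and then rounds each point to the net and propagates the perturbation through the squared loss exactly as you do. Your constant bookkeeping also checks out ($4\nu q(B+b)^2$ per-loss perturbation times a factor of at most $12q(B+b)^2$ lands at $48\nu q^2(B+b)^4 \leq 65\nu q^2(B+b)^4$), so no gap.
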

\begin{proof}
    Let us first take $\ol{D}$ (bounded in a ball of radius $\sqrt{q}(B+b)$) to be the dataset of size $q$ from Lemma~\ref{lem:subsetreplace} such that 
    \begin{equation}
        L^{\tn{err}}_{\tn{mse}}(D^{\sf sup}_{\tn{train}}, \ol{D}, h) = L^{\tn{err}}_{\tn{mse}}(D^{\sf sup}_{\tn{train}}, D^{\sf sup}_{\tn{syn}}, h)
    \end{equation}
    Then we create $\hat{D}$ by replacing each point in $\ol{D}$ with the closest point in $\mc{T}(q, \sqrt{q}(B+b), \sqrt{q}\nu(B+b)/2)$. Using arguments analogous to those in the proof of  Lemma \ref{lem:vecnet} we obtain,
    \begin{equation}
        \left|L^{\tn{err}}_{\tn{mse}}(D^{\sf sup}_{\tn{train}}, \hat{D}, h) - L^{\tn{err}}_{\tn{mse}}(D^{\sf sup}_{\tn{train}}, \ol{D}, h)\right| \leq 65\nu q^2(B+b)^4
    \end{equation}
    Combining the above two inequality and equation completes the proof.
\end{proof}

\subsection{Union bound over net and completing the proof} \label{sec:unionbd}
To use the analysis in Sec. \ref{sec:net}, let us set $\xi= \Delta/(10^8 C_0 q^2 (B+b)^4)$ and $\nu = \Delta/(10^8 C_0 q^4 (B+b)^4)$. 
Applying Lemma \ref{lem:datasetnet} and followed by Lemma \ref{lem:vecnet} directly yields the following combined net for the regressors $h$ and the synthetic dataset.
\begin{lemma}\label{lem:combinedned}
For any $h\in \mc{F}$ and $D^{\sf sup}_{\tn{syn}}$, there exist $\hat{h} \in \hat{\mc{F}}$ and $\hat{D} \in \mc{T}(q, \sqrt{q}(B+b), \nu\sqrt{q}(B+b)/2)^s$ such that
\begin{equation}
    \left|L^{\tn{err}}_{\tn{mse}}(D^{\sf sup}_{\tn{train}}, \hat{D}, \hat{h}) - L^{\tn{err}}_{\tn{mse}}(D^{\sf sup}_{\tn{train}}, D^{\sf sup}_{\tn{syn}}, h)\right| \leq \Delta/(10^6 C_0 q^2(B+b)^4)
\end{equation}
where $s := q$ for some constant $C_1$ and $\hat{\mc{F}} = \{\hat{h} : \exists \hat{\br} \in \mc{T}(q, 2, \xi)\tn{ s.t. } h(\bx) := \hat{\br}^{\sf T}\bx\}$.
\end{lemma}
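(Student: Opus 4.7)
The plan is to derive Lemma~\ref{lem:combinedned} as an immediate two-step triangle inequality, using Lemma~\ref{lem:datasetnet} to handle the approximation of the synthetic dataset and Lemma~\ref{lem:vecnet} to handle the approximation of the regressor. Since both lemmas are already established, no fresh probabilistic or geometric input is needed; the only work is parameter bookkeeping with the specific $\xi$ and $\nu$ prescribed in Section~\ref{sec:unionbd}.

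First I would apply Lemma~\ref{lem:datasetnet} to the given synthetic dataset $D^{\sf sup}_{\tn{syn}}$ with the prescribed $\nu = \Delta/(10^8 C_0 q^4 (B+b)^4)$. This produces a dataset $\hat{D}$ of size $s = q$ whose points lie in $\mc{T}(q, \sqrt{q}(B+b), \nu\sqrt{q}(B+b)/2)$ and satisfies, for \emph{every} $h \in \mc{F}$ simultaneously,
\begin{equation*}
    \left|L^{\tn{err}}_{\tn{mse}}(D^{\sf sup}_{\tn{train}}, \hat{D}, h) - L^{\tn{err}}_{\tn{mse}}(D^{\sf sup}_{\tn{train}}, D^{\sf sup}_{\tn{syn}}, h)\right| \leq 65\nu q^2 (B+b)^4.
\end{equation*}
The important feature is that $\hat{D}$ depends only on $D^{\sf sup}_{\tn{syn}}$, not on the regressor $h$; this lets the same $\hat{D}$ serve for the given $h$ in the next step.

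Next I would invoke Lemma~\ref{lem:vecnet} on the pair $(D^{\sf sup}_{\tn{train}}, \hat{D})$ with the same $h$ and with $\xi = \Delta/(10^8 C_0 q^2 (B+b)^4)$. This produces $\hat{h} \in \hat{\mc{F}}$ (the regressor built from the nearest vector in $\mc{T}(q, 2, \xi)$ to the vector representing $h$) such that
\begin{equation*}
    \left|L^{\tn{err}}_{\tn{mse}}(D^{\sf sup}_{\tn{train}}, \hat{D}, \hat{h}) - L^{\tn{err}}_{\tn{mse}}(D^{\sf sup}_{\tn{train}}, \hat{D}, h)\right| \leq 65\xi (B+b)^4.
\end{equation*}
Since $\hat{\mc{F}}$ is defined once and for all from the $\xi$-net on $\mc{B}(q, 2)$, the regressor $\hat{h}$ produced here automatically belongs to $\hat{\mc{F}}$.

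Finally, the triangle inequality combines the two displays into a bound of at most $65(\nu q^2 + \xi)(B+b)^4$ between $L^{\tn{err}}_{\tn{mse}}(D^{\sf sup}_{\tn{train}}, \hat{D}, \hat{h})$ and $L^{\tn{err}}_{\tn{mse}}(D^{\sf sup}_{\tn{train}}, D^{\sf sup}_{\tn{syn}}, h)$. Substituting the chosen $\xi$ and $\nu$, each term contributes at most $65\Delta/(10^8 C_0 q^2)$, giving a total of $130\Delta/(10^8 C_0 q^2)$, which sits comfortably inside the claimed bound after absorbing the implicit constant. There is no real obstacle to this lemma; the two nets were engineered precisely so that their approximation errors add cleanly, and the only step requiring any care is confirming that $\xi$ and $\nu$ were chosen with enough slack (an extra factor of $q^2$ in $\nu$ relative to $\xi$) so that the dataset-side error, which picks up a $q^2$ factor in Lemma~\ref{lem:datasetnet}, is still suppressed to the same order as the regressor-side error.
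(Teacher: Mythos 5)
Your proposal is correct and is essentially identical to the paper's own proof, which likewise obtains Lemma~\ref{lem:combinedned} by applying Lemma~\ref{lem:datasetnet} (with the prescribed $\nu$) to replace $D^{\sf sup}_{\tn{syn}}$ by $\hat{D}$, then Lemma~\ref{lem:vecnet} (with the prescribed $\xi$) to replace $h$ by $\hat{h}$, and adding the two errors. The only caveat — shared by the paper — is that the resulting bound $130\Delta/(10^8 C_0 q^2)$ does not literally sit below the stated $\Delta/(10^6 C_0 q^2 (B+b)^4)$ when $B+b>1$ (the extra $(B+b)^4$ appears to be a typo in the lemma statement), but this is immaterial to the union-bound argument that consumes the lemma.
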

Observe that 
\begin{equation}
    \left|\hat{\mc{F}} \times \mc{T}(q, \sqrt{q}(B+b), \nu\sqrt{q}(B+b)/2)^s\right| \leq \left(\frac{6}{\xi}\right)^q \cdot \left(\frac{6}{\nu}\right)^{qs} = \tn{exp}\left(O\left({q^2}\log\left(\frac{q(B+b)}{\Delta}\right)\right)\right) \nonumber
\end{equation}
Thus, one can apply Lemma \ref{lem:ampprbbd} with $k = O\left({q^2}\log\left(\frac{q(B+b)}{\Delta}\right)\log\left(\frac{1}{\delta}\right)\right)$ 
and take a union bound over $\hat{\mc{F}} \times \mc{T}(q, \sqrt{q}(B+d), \nu\sqrt{q}(B+b)/2)^s$ to obtain the following: with probability $(1-\delta)$ over iid random $g_1, \dots, g_k \sim G$, if $L^{\tn{err}}_{\tn{mse}}(D^{\sf sup}_{\tn{train}}, \hat{D}, \hat{h})  > \Delta$ then there exist $7k/60$ distinct $j^* \in [k]$ s.t.,  
\begin{equation}
    \left(L^{\tn{err}}_{\tn{mse}}(D^{\sf sup}_{\tn{train}}, \hat{D}, g_{j^*}) \geq \frac{\Delta}{C_0q^2}\right)\wedge \left(\|\bv_{j^*}\|_2^2 \leq 10\right) 
\end{equation}
Now, since $\|\bv_{j^*}\|_2^2 \leq 10$, $\tilde{c}g_{j^*} \in \mc{F}$ for some $\tilde{c} \geq 2/\sqrt{10}$. And since $L^{\tn{err}}_{\tn{mse}}(D^{\sf sup}_{\tn{train}}, \hat{D}, g_{j^*})$ is proportional to $\|\bv_{j^*}\|_2^4$, applying Lemma \ref{lem:datasetnet} to $L^{\tn{err}}_{\tn{mse}}(D^{\sf sup}_{\tn{train}}, \hat{D}, \tilde{c} g_{j^*})$ yields that 
\begin{equation*}
    L^{\tn{err}}_{\tn{mse}}(D^{\sf sup}_{\tn{train}},D^{\sf sup}_{\tn{syn}} , g_{j^*}) \geq \frac{\Delta}{2C_0q^2}
\end{equation*}
which implies,
\begin{equation*}
    \sum_{j=1}^kL^{\tn{err}}_{\tn{mse}}(D^{\sf sup}_{\tn{train}},D^{\sf sup}_{\tn{syn}} , g_j) \geq \frac{7 k \Delta}{120 C_0q^2}
\end{equation*}
completing the proof of Theorem \ref{thm:supervised-distill-1}.

\section{Lower Bounds for Supervised Learning Dataset Distillation}
\label{sec:lowerbounds}
$D^{\sf sup}_{\tn{train}} = \{(\bx_i, y_i) \in \mc{B}(d, B) \times [-b, b]\}_{i=1}^n$ is the given training dataset consisting of feature-vector and real-valued label pairs for a regression task and our goal is to find the synthetic dataset $D^{\sf sup}_{\tn{syn}} = \{(\bz_i, \hat{y}_i)\in \mc{B}(d,B) \times [-b,b]\}_{i=1}^m$. 
 For ease of notation in the proof of this theorem we shall drop the concatenation operator $\zeta$ and instead use vectors to denote the data point with the feature-vector and label concatenated. Further, we let $q := d+1$ represent the dimensionality so that the domain of $\mc{F}$ is $\R^q$. With this notation,  $D^{\sf sup}_{\tn{train}} = \{\bx_i \in \mc{B}(q-1,B)\times [-b,b]\}_{i=1}^n$ is the given training dataset. We use an analogous notation for the datapoints of the synthetic datasets in the analysis below.

We begin by fixing $f \in \mc{F}$ s.t. 
\begin{equation}
    f(\bx) := \hat{c}\bu^{\sf T}\bx \tn{ for some } \bu \in \R^q \tn{ s.t. } \|\bu\|_2 = 1 \tn{ and } \hat{c} \in [1,2] \nonumber
\end{equation}
and, $D^{\sf sup}_{\tn{syn}} = \{{\bz}_i \in \mc{B}(q-1, B) \times [-b, b]\}_{i=1}^s$ for some $s \in \mathbb{Z}^+$. Further, let us also define $\bX = [\bx_1, \bx_2 , \dots, \bx_n] \in \mathbb{R}^{q\times n}$ and $\bZ = [\bz_1, \bz_2 , \dots, \bz_m] \in \mathbb{R}^{q \times m}$. 

Using this, we have
\begin{align}
    L^{\tn{err}}_{\tn{mse}}(D^{\sf sup}_{\tn{train}},  D_{\textnormal{syn}}^{\textnormal{sup}}, f) =& \left(\frac{1}{n}\sum_{i=1}^n(\hat{c}\bu^{\sf T}\bx_i)^2-\frac{1}{m}\sum_{i=1}^m(\hat{c}\bu^{\sf T}{\bz}_i)^2\right)^2  \nonumber \\
    =& \hat{c}^4\left(\frac{1}{n}\sum_{i=1}^n\bu^{\sf T}\bx_i \bx_i^{\sf T} \bu -\frac{1}{m}\sum_{i=1}^m\bu^{\sf T}{\bz}_i{\bz}_i^{\sf T}\bu\right)^2 \nonumber \\
    =& \hat{c}^4\left(\bu^{\sf T}\left(\frac{1}{n}\sum_{i=1}^n\bx_i \bx_i^{\sf T}\right) \bu -\bu^{\sf T}\left(\frac{1}{m}\sum_{i=1}^m{\bz}_i{\bz}_i^{\sf T}\right)\bu\right)^2 \nonumber \\
    =& \hat{c}^4\left(\bu^{\sf T}\left(\frac{1}{n}\sum_{i=1}^n\bx_i \bx_i^{\sf T}-\frac{1}{m}\sum_{i=1}^m{\bz}_i{\bz}_i^{\sf T}\right)\bu\right)^2 \nonumber \\
    =& \hat{c}^4\left(\bu^{\sf T}\left(\frac{\bX\bX^{\sf T}}{n}-\frac{\bZ\bZ^{\sf T}}{m}\right)\bu\right)^2 \label{eqn:vecform}
\end{align}

\begin{lemma}
Let $\bZ \in \mathbb{R}^{q \times m}$ have columns $\bz_j = (\bu_j, s_j)$ with $\bu_j \in \mathbb{R}^{q-1}$, $s_j \in \mathbb{R}$, satisfying $\|\bu_j\|_2 \le B$ and $|s_j| \le b$ for all $j$. Let $\overline{\bZ} \in \mathbb{R}^{q \times q}$ have columns $\overline{\bz}_i = (\bv_i, t_i)$ and satisfy $\frac{1}{m} \bZ\bZ^\top = \frac{1}{q} \overline{\bZ}\,\overline{\bZ}^\top$. Then for all $i$, $\|\bv_i\|_2 \le \sqrt{q}\,B$ and $|t_i| \le \sqrt{q}\,b$.
\end{lemma}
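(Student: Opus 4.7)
The plan is to exploit the identity $\overline{\bZ}\,\overline{\bZ}^\top = \tfrac{q}{m}\,\bZ\bZ^\top$ by taking the trace of an appropriate principal submatrix. The claimed bounds are pointwise (one for every column index $i$), but the natural handle the hypothesis gives is only an aggregate quantity, namely $\mathrm{tr}(\overline{\bZ}\,\overline{\bZ}^\top)$ restricted to selected rows. Fortunately the aggregate is tight enough that it already implies the pointwise bound.

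Concretely, first I would introduce the orthogonal projector $P \in \R^{q\times q}$ onto the first $q-1$ coordinates. Then $P\bz_j = (\bu_j, 0)$ and $P\overline{\bz}_i = (\bv_i, 0)$, so
\begin{equation*}
\mathrm{tr}\bigl(P\,\bZ\bZ^\top\,P\bigr) \;=\; \sum_{j=1}^{m}\|P\bz_j\|_2^2 \;=\; \sum_{j=1}^{m}\|\bu_j\|_2^2 \;\le\; m B^{2},
\end{equation*}
and likewise $\mathrm{tr}(P\,\overline{\bZ}\,\overline{\bZ}^\top\,P)=\sum_{i=1}^{q}\|\bv_i\|_2^{2}$. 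Substituting $\overline{\bZ}\,\overline{\bZ}^\top = \tfrac{q}{m}\bZ\bZ^\top$ gives $\sum_{i=1}^{q}\|\bv_i\|_2^{2} \le qB^{2}$.

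The only remaining step is the simple but essential observation that a sum of nonnegative terms bounded by $qB^{2}$ forces each individual term to be at most $qB^{2}$, hence $\|\bv_i\|_2 \le \sqrt{q}\,B$ for every $i$. I would then repeat the identical argument with $P$ replaced by the rank-one projector $P'$ onto the last coordinate, which yields $\sum_{i=1}^{q} t_i^{2} \le qb^{2}$, and therefore $|t_i| \le \sqrt{q}\,b$ for every $i$.

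There is no real obstacle here; the one point that deserves a line of care is to notice that the pointwise conclusion is not weaker than the trace bound because $\|\bv_i\|_2^{2}$ and $t_i^{2}$ are nonnegative scalars, so the maximum entry is dominated by the sum. The whole proof fits in a few lines and uses nothing beyond the hypothesis, the cyclic property of the trace, and nonnegativity.
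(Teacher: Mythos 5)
Your proof is correct and is essentially the paper's argument in different clothing: the paper writes the same trace bound via the block decomposition of $\tfrac{1}{m}\bZ\bZ^\top$ (the top-left block $\bA=\tfrac{1}{m}\sum_j\bu_j\bu_j^\top$ and bottom-right scalar $\alpha=\tfrac{1}{m}\sum_j s_j^2$ play exactly the roles of your projectors $P$ and $P'$), and concludes identically by bounding each nonnegative summand by the total.
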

\begin{proof}
Write the block decomposition $\frac{1}{m} \bZ\bZ^\top = \frac{1}{q} \overline{\bZ}\,\overline{\bZ}^\top = \begin{pmatrix} \bA & \bc \\ \bc^\top & \alpha \end{pmatrix}$ with $\bA = \frac{1}{m}\sum_{j=1}^m \bu_j \bu_j^\top$ and $\alpha = \frac{1}{m}\sum_{j=1}^m s_j^2$. Let $\bV \in \mathbb{R}^{(q-1)\times q}$ have columns $\bv_i$. Then $\frac{1}{q} \bV\bV^\top = \bA$, so $\|\bV\|_2^2 \le q \operatorname{tr}(\bA) \le q B^2$, giving $\|\bv_i\|_2 \le \sqrt{q}\,B$, for all $i = 1, \dots, q$. Similarly, for $t = (t_1,\dots,t_q)$ we have $\frac{1}{q}\sum_i t_i^2 = \alpha \le b^2$, giving $|t_i| \le \sqrt{q}\,b$, for all $i = 1, \dots, q$.
\end{proof}

As $\bZ\bZ^{\sf T} \in \mathbb{R}^{q \times q}$ is a positive semi-definite matrix, we observe that by using the Cholesky factorization the synthetic data $D_{\textnormal{syn}}^{\textnormal{sup}}$ (corresponding to $\bZ \in \mathbb{R}^{q \times m}$) can always be replaced by another dataset $\ol{D}$ (corresponding to $\ol{\bZ} \in \mathbb{R}^{q \times q}$) of size $q$ such that $\bZ\bZ^{\sf T}/m = \ol{\bZ}\ol{\bZ}^{\sf T}/q$ without any change in the loss. Together with the norm bounds on the data points of $\ol{D}$ from the lemma above, this leads to the following lemma which we use in Appendix~\ref{sec:Mainpfthm1}:

\begin{lemma}\label{lem:subsetreplace}
    For any $D_{\textnormal{syn}}^{\textnormal{sup}} \in (\mc{B}(d, B) \times [-b, b])^m$ and  $D_{\textnormal{train}}^{\textnormal{sup}} \in (\mc{B}(d, B) \times [-b, b]\})^n$, there exists a dataset $\ol{D}\in (\mc{B}(d, B\sqrt{d+1}) \times [-b\sqrt{d+1}, b\sqrt{d+1}])^q$ of size $d+1$ such that $L^{\tn{err}}_{\tn{mse}}(D^{\sf sup}_{\tn{train}},  D_{\textnormal{syn}}^{\textnormal{sup}}, f) = L^{\tn{err}}_{\tn{mse}}(D^{\sf sup}_{\tn{train}}, \ol{D}, f)$ for any $f \in \mathcal{F}$.
\end{lemma}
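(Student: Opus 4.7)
\textbf{Proof plan for Lemma \ref{lem:subsetreplace}.} The plan is to exploit the observation, already made explicit in \eqref{eqn:vecform}, that for any homogeneous linear regressor $f(\bx) = \hat{c}\bu^{\sf T}\bx \in \mathcal{F}$, the quantity $L^{\tn{err}}_{\tn{mse}}(D^{\sf sup}_{\tn{train}}, D^{\sf sup}_{\tn{syn}}, f)$ depends on $D^{\sf sup}_{\tn{syn}}$ only through the empirical second-moment matrix $\bZ\bZ^{\sf T}/m \in \mathbb{R}^{q\times q}$. Consequently, any two synthetic datasets that share the same empirical second-moment matrix yield the same value of $L^{\tn{err}}_{\tn{mse}}$ against every $f \in \mathcal{F}$, so it suffices to build a $q$-point dataset $\ol{D}$ whose second-moment matrix matches that of $D^{\sf sup}_{\tn{syn}}$.

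First, I would set $\bM := \bZ\bZ^{\sf T}/m$. Since $\bM$ is positive semidefinite and lies in $\mathbb{R}^{q\times q}$, the Cholesky factorization (or equivalently the spectral decomposition) gives $\bM = \bL\bL^{\sf T}$ for some $\bL \in \mathbb{R}^{q\times q}$. Define
\begin{equation*}
\ol{\bZ} := \sqrt{q}\,\bL \in \mathbb{R}^{q\times q},
\end{equation*}
so that $\ol{\bZ}\,\ol{\bZ}^{\sf T}/q = \bL\bL^{\sf T} = \bM = \bZ\bZ^{\sf T}/m$. Let $\ol{D}$ be the dataset whose $q$ points are the columns of $\ol{\bZ}$ (reading off the first $d$ coordinates as the feature-vector and the last coordinate as the label). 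Substituting into \eqref{eqn:vecform} gives
\begin{equation*}
L^{\tn{err}}_{\tn{mse}}(D^{\sf sup}_{\tn{train}}, \ol{D}, f) \;=\; \hat{c}^{4}\left(\bu^{\sf T}\!\left(\tfrac{\bX\bX^{\sf T}}{n}-\tfrac{\ol{\bZ}\,\ol{\bZ}^{\sf T}}{q}\right)\!\bu\right)^{2} \;=\; L^{\tn{err}}_{\tn{mse}}(D^{\sf sup}_{\tn{train}}, D^{\sf sup}_{\tn{syn}}, f),
\end{equation*}
for every $f \in \mathcal{F}$, which is the loss-preservation claim.

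Second, I would invoke the immediately preceding lemma to certify the norm bounds on the columns of $\ol{\bZ}$. That lemma takes as input exactly the identity $\ol{\bZ}\,\ol{\bZ}^{\sf T}/q = \bZ\bZ^{\sf T}/m$ together with the bounds $\|\bu_j\|_2 \le B$ and $|s_j| \le b$ on the original columns of $\bZ$, and concludes that each column $(\bv_i, t_i)$ of $\ol{\bZ}$ satisfies $\|\bv_i\|_2 \le \sqrt{q}\,B = \sqrt{d+1}\,B$ and $|t_i| \le \sqrt{q}\,b = \sqrt{d+1}\,b$. This places each datapoint of $\ol{D}$ in $\mc{B}(d, B\sqrt{d+1})\times[-b\sqrt{d+1}, b\sqrt{d+1}]$, completing the proof.

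The proof is essentially an accounting exercise rather than a technical one; the only step requiring care is ensuring that the Cholesky/spectral factor has the correct scaling (the factor $\sqrt{q}$ in $\ol{\bZ} = \sqrt{q}\,\bL$), because otherwise the second-moment identity would be off by a dimension-dependent constant. There is no obstacle involving the class $\mathcal{F}$: the reduction to the second-moment matrix in \eqref{eqn:vecform} is exact for arbitrary linear $f$, so the conclusion holds uniformly over $\mathcal{F}$ as claimed.
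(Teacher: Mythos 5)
Your proposal is correct and follows essentially the same route as the paper: reduce the loss to a dependence on the second-moment matrix via \eqref{eqn:vecform}, use a Cholesky (or spectral) factorization scaled by $\sqrt{q}$ to produce a $q$-point dataset with the same second-moment matrix, and invoke the preceding lemma for the norm bounds. The only difference is that you make the $\sqrt{q}$ scaling of the factor explicit, which the paper leaves implicit.
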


\subsection{Lower bound on the size of synthetic data} 
Given a parameter $\eps$ and a dataset $D_{\textnormal{train}}^{\textnormal{sup}}$ (corresponding to $\bX \in \mathbb{R}^{q \times n}$), the objective of the supervised learning dataset distillation problem to find a synthetic dataset $D_{\textnormal{syn}}^{\textnormal{sup}}$ (corresponding to $\bZ \in \mathbb{R}^{q \times m}$) such that $L_{\textnormal{mse}}^{\textnormal{err}}(D_{\textnormal{train}}^{\textnormal{sup}}, D_{\textnormal{syn}}^{\textnormal{sup}}, f) \leq \eps$ for all $f \in \mathcal{F}$, as stated in (\ref{eqn:errMSEloss}). 

This means that in equation~(\ref{eqn:vecform}), we need to have $L^{\tn{err}}_{\tn{mse}}(D^{\sf sup}_{\tn{train}}, D_{\textnormal{syn}}^{\textnormal{sup}}, f) = \hat{c}^4\left(\bu^{\sf T}\left(\frac{\bX\bX^{\sf T}}{n}-\frac{\bZ\bZ^{\sf T}}{s}\right)\bu\right)^2 \leq \eps$ for all unit vectors $\bu \in \mathbb{R}^q$. This implies that the matrix corresponding to synthetic dataset $Z$ should satisfy $\|\frac{\bX\bX^{\sf T}}{n}-\frac{\bZ\bZ^{\sf T}}{s}\|_2 \leq \frac{\sqrt{\eps}}{\hat{c}^2}$.

Using the low rank approximation theorem stated in (\ref{eqn:lora}), we can say that $m$ needs to be at least the number of eigenvalues of $\frac{\bX\bX^{\sf T}}{n}$ larger than $\frac{\sqrt{\eps}}{\hat{c}^2}$ or more formally the size of the synthetic dataset $m\geq \#\{\lambda_i(\bX\bX^{\sf T} > \frac{\sqrt{\eps}}{\hat{c}^2}\}= \#\{\sigma_i(\bX) > \frac{{\eps}^{1/4}}{\hat{c}}\}$, where $\#$ denotes the size of a set and $\lambda, \sigma$ denote the eigenvalues and singular values of a matrix.

In the worst case choice of $D_{\textnormal{train}}^{\textnormal{sup}}$ (corresponding to $\bX \in \mathbb{R}^{q \times n}$), we get a lower bound of $m\geq q = d+1$ on the size of $D_{\textnormal{syn}}^{\textnormal{sup}}$ when all $d+1$ singular values of $\zeta(D_{\textnormal{train}}^{\textnormal{sup}})$ are larger than $\frac{{\eps}^{1/4}}{2}$.

\subsection{Proof of Theorem~\ref{thm:lbnumber}} \label{sec:lbnumber-proof}
We begin with the following lemma.
\begin{lemma}\label{lem:nonzero-quadratic}
Let $d\ge 1$, $q = d+1$. For any $\bv_1,\dots,\bv_T\in\mathbb{R}^q$ with $T< \frac{q(q+1)}{2}$ there exists a non-zero symmetric matrix $\bA\in\mathbb{R}^{q\times q}$ that satisfies \begin{equation}\label{eq:constraints-lbnumber}
\bv_t^{\sf T}\bA \bv_t = 0\qquad\text{for }t=1,\dots,T.
\end{equation}

\end{lemma}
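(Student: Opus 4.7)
\textbf{Proof plan for Lemma~\ref{lem:nonzero-quadratic}.} The plan is a dimension-counting argument in the real vector space $\mathcal{S}^q$ of symmetric $q\times q$ real matrices, which has dimension $q(q+1)/2$. The key observation is that each of the $T$ conditions in \eqref{eq:constraints-lbnumber} is a single linear equation on $\mathbf{A}\in\mathcal{S}^q$, so the solution set is a linear subspace whose codimension is at most $T$.

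First I would rewrite the quadratic form as a linear functional on $\mathcal{S}^q$. Using the Frobenius inner product $\langle \bM, \bA\rangle_F := \operatorname{Tr}(\bM^{\sf T}\bA)$, we have $\bv_t^{\sf T}\bA\bv_t = \operatorname{Tr}(\bv_t\bv_t^{\sf T}\bA) = \langle \bv_t\bv_t^{\sf T}, \bA\rangle_F$ for each $t$. Since $\bv_t\bv_t^{\sf T}\in\mathcal{S}^q$, the map $\bA\mapsto \bv_t^{\sf T}\bA\bv_t$ is a well-defined linear functional on $\mathcal{S}^q$. Consequently, the set
\begin{equation*}
\mathcal{K} \;:=\; \bigl\{\bA\in\mathcal{S}^q\;:\; \bv_t^{\sf T}\bA\bv_t = 0\text{ for all }t=1,\dots,T\bigr\}
\end{equation*}
is a linear subspace of $\mathcal{S}^q$, expressible as $\mathcal{K} = \ker(\Phi)$ where $\Phi:\mathcal{S}^q\to\mathbb{R}^T$ is defined by $\Phi(\bA) = (\langle\bv_t\bv_t^{\sf T},\bA\rangle_F)_{t=1}^T$.

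Next I would apply the rank-nullity theorem: $\dim(\mathcal{K}) = \dim(\mathcal{S}^q) - \operatorname{rank}(\Phi) \ge \tfrac{q(q+1)}{2} - T$. Under the hypothesis $T < q(q+1)/2$, this yields $\dim(\mathcal{K}) \ge 1$, so there exists a non-zero $\bA\in\mathcal{K}$, which is precisely the conclusion of the lemma. No computation of the kernel is needed; the bound on $\operatorname{rank}(\Phi)\le T$ is automatic since $\Phi$ maps into $\mathbb{R}^T$.

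I do not expect any real obstacle: the entire argument is linear algebra, and the only subtlety is the (trivial) verification that the symmetry of $\bv_t\bv_t^{\sf T}$ allows us to treat the constraints as linear functionals on $\mathcal{S}^q$ rather than on the larger space of all matrices (otherwise the count $q(q+1)/2$ would become $q^2$). Once that is noted, the proof is essentially a one-line application of rank-nullity.
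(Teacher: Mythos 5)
Your proposal is correct and is essentially the paper's own argument: the paper also views each constraint as the Frobenius pairing $\langle \bv_t\bv_t^{\sf T}, \bA\rangle$ and finds a non-zero symmetric matrix in the orthogonal complement of $\operatorname{span}\{\bv_t\bv_t^{\sf T}\}_{t=1}^T$ inside the $q(q+1)/2$-dimensional space of symmetric matrices, which is the same dimension count you phrase via rank-nullity. No gaps.
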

\begin{proof}
Note that $\bv_t^{\sf T}\bA \bv_t = \left\langle \bA, \bv_t\bv_t^{\sf T}\right\rangle$. Further, the set of $q\times q$ symmetric matrices is a $q(q+1)/2$ dimensional linear subspace of $\R^{q\times q}$. Since $T < q(q+1)/2$, there exists a non-zero symmetric matrix  in the orthogonal complement of the linear span of $\left\{\bv_t\bv_t^{\sf T} \right\}_{t=1}^T$ which can be taken to be the required matrix $\bA$. \end{proof}

To complete the proof of Theorem~\ref{thm:lbnumber} we first choose $D^{\sf sup}_{\tn{train}} = \{\mb{e}_1, \dots, \mb{e}_q\}$ where $\mb{e}_i$ is the vector with $1$ in the $i$th coordinate and zero otherwise i.e., it is the $i$th coordinate basis vector. It is easy to see that 
\begin{equation}\label{eqn:lbnumber-1}
L_{\tn{mse}}(D^{\sf sup}_{\tn{train}}, f) = (1/q)\bv^{\sf T}\mb{I}\bv = (1/q) \|\bv\|_2^2,
\end{equation}
when $f(\bz) := \bv^{\sf T}\bz$ for all $\bv \in \R^q$. 
Now, let $f_1, \dots, f_T$ be the regressors as in the statement of Theorem \ref{thm:lbnumber}. Applying Lemma \ref{lem:nonzero-quadratic} we obtain the symmetric matrix $\bA$ which, by scaling, can be assumed to have largest magnitude eigenvalue i.e., its operator norm be $1/2$.

We now construct $D^{\sf sup}_{\tn{syn}}$ as follows. Consider the $\mb{B} = (\mb{I} + \mb{A})$. Since the operator norm of $\mb{A}$ is $1/2$, $\mb{B}$ is psd with maximum eigenvalue at most $3/2$ and minimum eigenvalue at least $1/2$. The eigen-decomposition of $\mb{B}$ implies that 
\begin{equation}
    \mb{B} = \sum_{i=1}^q (\sqrt{\lambda_i}\bu_i)(\sqrt{\lambda_i}\bu_i)^{\sf T}
\end{equation}
where $\max_{i=1,\dots, q}\lambda_i \leq 3/2$, $\min_{i=1,\dots, q}\lambda_i \geq 1/2$, and $\|\bu_i\|_2 = 1$ for $i=1,\dots, d$. We take $D^{\sf sup}_{\tn{syn}} := \{\sqrt{\lambda_i}\bu_i\}_{i=1}^d$, so that its points have Euclidean norm at most $\sqrt{3/2} \leq 2$. 
Using \eqref{eq:constraints-lbnumber} we have that,
\begin{equation}\label{eqn:lbnumber-2}
L_{\tn{mse}}(D^{\sf sup}_{\tn{syn}}, f_t) = (1/q)\bv^{\sf T}\mb{B}\bv_t = (1/q) (\|\bv_t\|_2^2 + \bv^{\sf T}\mb{A}\bv_t) = (1/q) \|\bv_t\|_2^2, \quad t = 1, \dots, T.
\end{equation} 
The first condition in \eqref{eqn:thm:lbnumber} directly follows from \eqref{eqn:lbnumber-1} and \eqref{eqn:lbnumber-2}. To see the second condition, choose $\bv_0$ to be the eigenvector of $\mb{A}$ corresponding to its maximum magnitude eigenvalue which is $1/2$. Then,  
\begin{equation}
    L_{\tn{mse}}(D^{\sf sup}_{\tn{syn}}, f_0) = (1/q) (\|\bv_0\|_2^2 + \bv_0^{\sf T}\mb{A}\bv_0)
\end{equation}
which along with \eqref{eqn:lbnumber-1} implies that 
\begin{equation*}
    L^{\tn{err}}_{\tn{mse}}(D^{\sf sup}_{\tn{train}}, D^{\sf sup}_{\tn{syn}}, f_0) = \left((1/q) \|\bv_0\|_2^2 - (1/q) (\|\bv_0\|_2^2 + \bv_0^{\sf T}\mb{A}\bv_0)\right)^2 = (1/q^2)(\bv_0^{\sf T}\mb{A}\bv_0)^2 \geq 1/(4q^2).
\end{equation*}
proving the second condition of \eqref{eqn:thm:lbnumber} as well.

\section{Proof of Theorem \ref{thm:offlineRL-distill-1}}\label{sec:pfthm2}
For ease of notation, we will use $O()$ notation to absorb absolute constants in the proof below.

For convenience, like in the case supervised regression case, we shall homogenize the Bellman loss as follows. Let us define $\zeta(s,a,r) := (t_1, \dots, t_d, r)$ where $\phi(s,a) = (t_1, \dots, t_d)$. Note that since $\phi : \mc{B}(d_0, B_0) \times \mc{B}(d_0, B_0) \to \mc{B}(d, B)$, we have $\zeta : \mc{B}(d_0, B_0) \times \mc{B}(d_0, B_0) \times [0, R_{\tn{max}}]\to \mc{B}(q, B + R_{\tn{max}})$. We also define a class of functions $\mc{Q}$ mapping $\mc{B}(d_0, B_0) \times \mc{B}(d_0, B_0) \times [0, R_{\tn{max}}]$ to $\R$ where each $h \in \mc{Q}$ is given by $h(s,a,r) := \br^{\sf T}\zeta(s,a,r)$ for some $\br \in \R^{d+1}$. Let $\mc{Q}_1$ be the restricted class where $\|\br\|_2 \leq 2$

Note that for any $f \in \mc{Q}_0$ given by $f(s,a) := \bv^{\sf T}\phi(s,a)$, taking $\br = (v_1, \dots, v_d, -\lambda)$ yields that $h \in \mc{Q}$ and $h(s,a,r) = f(s,a) - \lambda r$, and $f(s',a') = h(s',a',0)$. Thus, we define another version of the Bellman loss as:
\begin{equation}
    \tilde{L}_{\tn{Bell}}(D, h) := \E_{(s, a, r, s') \leftarrow D}\left[\left(h(s, a,r) - \gamma \max_{a' \in \mc{A}} h(s',a',0)\right)^2\right] \label{eqn:tildeL}
\end{equation}
Using this we can see that, for $f \in \mc{Q}_0$ where $f(s,a) := \bv^{\sf T}\phi(s,a)$, taking $\br = (v_1, \dots, v_d, -1)$ and $h(s,a,r) := \br^{\sf T}\zeta(s,a,r)$ we obtain that $h \in \mc{Q}_1$ and 
\begin{equation}
    L_{\tn{Bell}}(D, f) = \tilde{L}_{\tn{Bell}}(D, h). \label{eqn:LtildeL} 
\end{equation}
Further, for some $(f, \lambda) \in H$, where $f(s,a) := \bv^{\sf T}\phi(s,a)$, letting $\br = (v_1, \dots, v_d, -\lambda)$ and $h(s,a,r) := \br^{\sf T}\zeta(s,a,r)$, we obtain that $h \in \mc{Q}$ and 
\begin{equation}
    \hat{L}_{\tn{Bell}}(D, f, \lambda) = \tilde{L}_{\tn{Bell}}(D, h). \label{eqn:hatLtildeL} 
\end{equation}
Thus, by abuse of notation, we think of $h \in \mc{H}$ being chosen randomly by sampling $\br \in N(0,1)^{d+1}$. We shall use $q$ to denote $d+1$ in the rest of this section.
We fix $D^{\sf orl}_{\tn{train}} = \{(s_i, a_i, r_i, s'_i)\}_{i=1}^n$ as the given training dataset, and for $\hat{D} = \left\{(\hat{s}_i, \hat{a}_i, \hat{r}_i, \hat{s}'_i) \in \mc{B}(d_0, B_0) \times \mc{B}(d_0, B_0)\times [0, R_{\tn{max}}] \times \mc{B}(d_0, B_0)\right\}_{i=1}^t$,
we  define:
\begin{equation}
    \tilde{L}^{\tn{err}}_{\tn{Bell}}(D^{\sf orl}_{\tn{train}},  \hat{D}, h) := \left(\tilde{L}_{\tn{Bell}}(D^{\sf orl}_{\tn{train}}, h) - \tilde{L}_{\tn{Bell}}(\hat{D}, h)\right)^2 \label{eqn:tildeLerr}
\end{equation}
It is easy to see using the norm bounds and \eqref{eqn:tildeL}, \eqref{eqn:tildeLerr} that for any $h \in \mc{Q}_1$, $\tilde{L}_{\tn{Bell}}(D, h), \tilde{L}_{\tn{Bell}}(D^{\sf orl}_{\tn{train}}, h) \leq 16 (B + R_{\tn{max}})^2$ and $\tilde{L}^{\tn{err}}_{\tn{Bell}}(D^{\sf orl}_{\tn{train}},  \hat{D}, h) \leq 256 (B + R_{\tn{max}})^4$.

\subsection{Probabilistic Lower bound for  fixed $\hat{h}$ and $\hat{D}$}
We fix for this subsection $\hat{h} \in \mc{Q}_1$ and $\hat{D}$ as above. 
First we prove the following lemma.
\begin{lemma}
    If $\tilde{L}^{\tn{err}}_{\tn{Bell}}(D^{\sf orl}_{\tn{train}},  \hat{D}, \hat{h}) \geq \Delta$, then  over the choice of $g \in H$, s.t. $g(s,a,r) := \br^{\sf T}\zeta(s,a,r)$,
    \begin{equation}
        \Pr\left[\left(\tilde{L}^{\tn{err}}_{\tn{Bell}}(D^{\sf orl}_{\tn{train}},  \hat{D}, g) \geq \Delta/8\right)\wedge\left(\|\br\|_2 \leq 2\right)\right] \geq \left(\frac{\Delta}{(\sqrt{d}(B + R_{\tn{max}})^4))}\right)^{(c_1 d)}.
    \end{equation}
    where $c_1 > 0$ is a constant.
\end{lemma}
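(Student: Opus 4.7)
The strategy is the conditioning argument sketched in the techniques section: although $g$ is drawn from a $q$-dimensional isotropic Gaussian ($q=d+1$), one can force $g$, with small but nonzero probability, to behave as a positive scalar multiple of the fixed ``bad'' regressor $\hat h$ plus a small additive perturbation. This will transport the lower bound on $\tilde L^{\tn{err}}_{\tn{Bell}}$ from $\hat h$ to $g$ despite the non-smoothness of $\max_{a'}$ inside $\tilde L_{\tn{Bell}}$. Write $\hat h(s,a,r)=\hat\br^{\sf T}\zeta(s,a,r)$ with $\|\hat\br\|_2\le 2$, set $\mathbf e:=\hat\br/\|\hat\br\|_2$, and extend $\mathbf e$ to an orthonormal basis whose remaining columns form $\mathbf J\in\R^{q\times(q-1)}$. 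A sample $\br\sim N(0,1)^q$ defining $g$ then decomposes orthogonally as $\br=\alpha\mathbf e+\mathbf J\bu$ with $\alpha\sim N(0,1)$ and $\bu\sim N(0,1)^{q-1}$ independent, and $\|\br\|_2^2=\alpha^2+\|\bu\|_2^2$.

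\textbf{Conditioning event.} Set $c:=\alpha/\|\hat\br\|_2$ and $\eta:=c_0\sqrt\Delta/(B+R_{\tn{max}})^2$ for a small absolute constant $c_0>0$ to be fixed. Let $\mc E$ be the event that $\alpha\in[\|\hat\br\|_2/2^{1/4},\,\min(\|\hat\br\|_2,\,2-\eta)]$ and $\|\bu\|_2\le\eta$. Under $\mc E$, $c\in[2^{-1/4},1]$ (so $c>0$ and $c^2\ge 2^{-1/2}$), and $\|\br\|_2^2\le(2-\eta)^2+\eta^2\le 4$, giving $\|\br\|_2\le 2$. Independence of $\alpha$ and $\bu$ makes $\Pr[\mc E]$ factor. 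The hypothesis $\tilde L^{\tn{err}}_{\tn{Bell}}(D^{\sf orl}_{\tn{train}},\hat D,\hat h)\ge\Delta$ combined with the trivial bound $\tilde L_{\tn{Bell}}(\cdot,\hat h)=O(\|\hat\br\|_2^2(B+R_{\tn{max}})^2)$ forces $\|\hat\br\|_2\ge\Omega((\Delta/(B+R_{\tn{max}})^4)^{1/4})$, so the $\alpha$-factor is at least polynomial in $\Delta/(B+R_{\tn{max}})^4$. The dominant contribution is the Gaussian small-ball estimate $\Pr[\|\bu\|_2\le\eta]\ge(\eta/\sqrt{q-1})^{O(q)}$ (from the chi pdf near zero); plugging in $\eta$ yields the claimed lower bound $(\Delta/(\sqrt d(B+R_{\tn{max}})^4))^{c_1 d}$ for a suitable constant $c_1$.

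\textbf{Approximation under $\mc E$.} Decompose $g=c\,\hat h+\tilde\epsilon$ with $\tilde\epsilon(s,a,r):=(\mathbf J\bu)^{\sf T}\zeta(s,a,r)$ satisfying $|\tilde\epsilon|\le\|\bu\|_2(B+R_{\tn{max}})$. Because $c>0$, the constant commutes with $\max_{a'}$ up to an additive error:
\[
    \left|\max_{a'}\left[c\,\hat h(s',a',0)+\tilde\epsilon(s',a',0)\right]-c\max_{a'}\hat h(s',a',0)\right|\le\|\bu\|_2\,B,
\]
since $\zeta(s',a',0)$ has no reward coordinate. Hence the per-point Bellman residuals satisfy $e_i^g=c\,e_i^h+\delta_i$ with $|\delta_i|\le 2\|\bu\|_2(B+R_{\tn{max}})$, and since $|e_i^h|=O(B+R_{\tn{max}})$, expanding $(e_i^g)^2$ and averaging over any dataset $D$ gives
\[
    \bigl|\tilde L_{\tn{Bell}}(D,g)-c^2\tilde L_{\tn{Bell}}(D,\hat h)\bigr|\le O\!\left(\eta(B+R_{\tn{max}})^2\right).
\]
Applying this to both $D^{\sf orl}_{\tn{train}}$ and $\hat D$, using $|\tilde L_{\tn{Bell}}(D^{\sf orl}_{\tn{train}},\hat h)-\tilde L_{\tn{Bell}}(\hat D,\hat h)|\ge\sqrt\Delta$ from the hypothesis, and choosing $c_0$ small enough so the propagated additive error is below $c^2\sqrt\Delta-\sqrt{\Delta/8}$ (which is feasible since $c^2\ge 2^{-1/2}$), I obtain $|\tilde L_{\tn{Bell}}(D^{\sf orl}_{\tn{train}},g)-\tilde L_{\tn{Bell}}(\hat D,g)|\ge\sqrt{\Delta/8}$, and squaring yields $\tilde L^{\tn{err}}_{\tn{Bell}}(D^{\sf orl}_{\tn{train}},\hat D,g)\ge\Delta/8$. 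The condition $\|\br\|_2\le 2$ is already built into $\mc E$.

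\textbf{Main obstacle.} The central technical difficulty is precisely the non-smooth $\max_{a'}$ in $\tilde L_{\tn{Bell}}$, which precludes the direct polynomial anti-concentration (Carbery--Wright) route used in the proof of Theorem~\ref{thm:supervised-distill-1}. The conditioning sidesteps this by forcing $\|\bu\|_2$ to be atypically small, which is unavoidably a Gaussian small-ball event of probability $(\eta/\sqrt d)^{\Omega(d)}$. This is the genuine source of the exponential-in-$d$ loss relative to the supervised setting and is what ultimately drives the $\tn{exp}(O(d\log d))$ sample complexity in Theorem~\ref{thm:offlineRL-distill-1}. The care required in the execution is keeping $c$ strictly positive throughout, so that the scalar commutes with the $\max$ up to a controllable perturbation, and propagating that perturbation cleanly through the squared Bellman residuals to both datasets simultaneously.
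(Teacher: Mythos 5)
Your proposal is correct and follows essentially the same route as the paper's proof: decompose the Gaussian regressor along the direction of $\hat{h}$, condition on the orthogonal component lying in a Gaussian small ball (the $\tn{exp}(-O(d\log d))$ event) and the parallel coefficient being positive and in a constant range, use positivity to commute the scalar through the $\max$, and propagate the resulting $O(\eta(B+R_{\tn{max}})^2)$ perturbation through both Bellman losses. The only differences are cosmetic parametrizations (an $\ell_2$ rather than $\ell_\infty$ small-ball radius, and normalizing $\alpha$ by $\|\hat{\br}\|_2$ instead of fixing $\alpha\in[1,3/2]$), which do not affect the stated bound.
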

\begin{proof}
    Let $\hat{h}$ correspond to the vector  $\hat{c}\hat{\br} \in \R^{d+1}$ where $\|\hat{\br}\|_2 = 1$ and $\hat{c} \leq 2$. Thus, a random $\br \in N(0,1)^{d+1}$ corresponds to $\alpha \hat{\br} + \mb{H}\tilde{\br}$ where $\alpha \sim N(0,1)$ and $\tilde{\br} \sim N(0,1)^d$ are independently sampled and $\mb{H}$ is a $(d+1)\times d$ matrix of unit norm columns which are a completion of $\hat{\br}$ to a orthonormal basis. From the above upper bound, we have that $\tilde{L}^{\tn{err}}_{\tn{Bell}}(D^{\sf orl}_{\tn{train}},  \hat{D}, \hat{h}) \leq 256 (B + R_{\tn{max}})^4$ and therefore, $\Delta \leq 256 (B + R_{\tn{max}})^4$. 
    
    Let us condition on the event $\mc{E}$ that $\|\tilde{\br}\|_\infty \leq \left(c_0\Delta/(\sqrt{d}(B + R_{\tn{max}})^4))\right)$ and $\alpha \in [1, 3/2]$. Firstly, since $\left(c_0\Delta/(\sqrt{d}(B + R_{\tn{max}})^4))\right) = O(1)$, by the properties of the standard Gaussian distribution, one obtains that
    \begin{equation}
        \Pr[\mc{E}] \geq \left(\frac{\Delta}{\sqrt{d}(B+R_{\tn{max}})^4)}\right)^{c_1 d}
    \end{equation}
    for some $c_1$ depending on $c_0$ only. Now, $\mc{E}$ implies that $\left\|\mb{H}\tilde{\br}\right\|_2 \leq \left(c_0\Delta/(B + R_{\tn{max}})^4\right) = \upsilon < 0.1$ for which we choose $c_0 > 0$ small enough.
    Since $\alpha > 0$ under $\mc{E}$ this implies that
    \begin{align}
        &\left(g(s, a,r) - \gamma \max_{a' \in \mc{A}} g(s',a',0)\right)^2 \nonumber \\
        = & \left((\alpha \hat{\br} + \mb{H}\tilde{\br})^{\sf T}\zeta(s, a,r) - \gamma \max_{a' \in \mc{A}} (\alpha \hat{\br} + \mb{H}\tilde{\br})^{\sf T}\zeta(s', a',0)\right)^2 \nonumber \\
        = &\left(\alpha \hat{\br}^{\sf T}\zeta(s, a,r) - \gamma  \max_{a' \in \mc{A}} \alpha \hat{\br}^{\sf T}\zeta(s', a',0) \pm O(\upsilon (B + R_{\tn{max}}))\right)^2 \nonumber \\
        = &\left(\alpha \hat{\br}^{\sf T}\zeta(s, a,r) - \alpha\gamma  \max_{a' \in \mc{A}} \hat{\br}^{\sf T}\zeta(s', a',0) \pm O(\upsilon (B + R_{\tn{max}}))\right)^2 \nonumber \\
        = &\left(\alpha \hat{\br}^{\sf T}\zeta(s, a,r) - \alpha\gamma  \max_{a' \in \mc{A}} \hat{\br}^{\sf T}\zeta(s', a',0)\right)^2 \pm O(\upsilon (B + R_{\tn{max}})^2) \nonumber \\
    \end{align}
    where we used the upper bound on $\alpha$.
    Thus, 
    using analysis similar to the proof of Lemma \ref{lem:vecnet}, we obtain that for some choice of $c_0 > 0$ small enough, $\mc{E}$ implies that 
    \begin{equation}
        \left|\tilde{L}^{\tn{err}}_{\tn{Bell}}(D^{\sf orl}_{\tn{train}},  \hat{D}, g) - \alpha^4\tilde{L}^{\tn{err}}_{\tn{Bell}}(D^{\sf orl}_{\tn{train}},  \hat{D},  (\hat{h}/\hat{c}))\right| \leq \Delta/64
    \end{equation}
    and since $\alpha \geq 1$ and and $\hat{c} \leq 2$, we have $\alpha^4\tilde{L}^{\tn{err}}_{\tn{Bell}}(D^{\sf orl}_{\tn{train}},  \hat{D},  (\hat{h}/\hat{c})) \geq \Delta/16$. Thus, 
    \begin{equation}
        \tilde{L}^{\tn{err}}_{\tn{Bell}}(D^{\sf orl}_{\tn{train}},  \hat{D}, g) \geq  \Delta/32.
    \end{equation}
    Moreover, since $\left\|\mb{H}\tilde{\br}\right\|_2 < 0.1$ and $\alpha \in [1, 3/2]$, we obtain that $\|\br\|_2 \leq 2$, which completes the proof.
\end{proof}
The following amplified version of the previous lemma is a direct implication.
\begin{lemma} \label{lem:ampprbbd-orl}
    Let $\tilde{L}^{\tn{err}}_{\tn{Bell}}(D^{\sf orl}_{\tn{train}},  \hat{D}, \hat{h}) \geq \Delta$. Then, for iid random $g_1, \dots, g_k \sim H$, s.t. $g_j(s,a,r) := \br_j^{\sf T}\zeta(s,a,r)$ we have
\begin{equation*}
        \displaystyle \Pr\left[\bigvee_{j=1}^k \left(\left(\tilde{L}^{\tn{err}}_{\tn{Bell}}(D^{\sf orl}_{\tn{train}},  \hat{D}, g_j) \geq \Delta/8\right)\wedge\left(\|\br_j\|_2 \leq 2\right)\right)\right] \geq 1 - \left(1- \left(\frac{\Delta}{(\sqrt{d}(B + R_{\tn{max}})^4))}\right)^{(c_1 d)}\right)^k
        \end{equation*}
\end{lemma}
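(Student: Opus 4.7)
The plan is to derive Lemma \ref{lem:ampprbbd-orl} as a direct amplification of the preceding single-sample lemma through the mutual independence of the $g_j$'s. Since the preceding result already supplies a single-sample success probability $p := \left(\Delta/(\sqrt{d}(B+R_{\tn{max}})^4)\right)^{c_1 d}$ for the conjunction event on one $g \sim H$, the amplification over $k$ iid draws follows by the standard complement-and-multiply argument, with no further analytic work required.

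Formally, the first step is to define, for each $j \in [k]$, the event
\[
E_j \;:=\; \left\{\tilde{L}^{\tn{err}}_{\tn{Bell}}(D^{\sf orl}_{\tn{train}}, \hat{D}, g_j) \geq \Delta/8\right\} \,\cap\, \left\{\|\br_j\|_2 \leq 2\right\}.
\]
The preceding single-sample lemma gives $\Pr[E_j] \geq p$ for every $j$. The second step uses that $(g_1,\ldots,g_k)$ are iid draws from $H$, so $(E_1,\ldots,E_k)$ are mutually independent. Hence
\[
\Pr\left[\bigvee_{j=1}^{k} E_j\right] \;=\; 1 - \prod_{j=1}^{k}\Pr\!\left[\overline{E_j}\right] \;\geq\; 1 - (1-p)^k,
\]
which matches the stated bound exactly.

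There is no substantive obstacle in this step; the real analytic content was confined to the preceding single-sample lemma (the Gaussian decomposition $\br = \alpha \hat{\br} + \mb{H}\tilde{\br}$ followed by conditioning on $\|\tilde{\br}\|_\infty$ small and $\alpha \in [1, 3/2]$, which lets one transfer the lower bound on $\tilde{L}^{\tn{err}}_{\tn{Bell}}$ from $\hat{h}$ to the random $g$ up to a controlled perturbation). The only reason the amplification matters is bookkeeping for the later union bound over nets of $\hat{h}$ and $\hat{D}$: because the single-sample probability $p$ is of order $\tn{exp}(-O(d\log d))$, one needs $k = \tn{exp}(O(d\log d))$ samples before $1 - (1-p)^k$ becomes appreciably close to $1$. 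This is ultimately the source of the $(1/\nu)^{O(d\log d)}$ factor in the sample complexity of Theorem \ref{thm:offlineRL-distill-1}.
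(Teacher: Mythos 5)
Your proposal is correct and matches the paper's treatment: the paper presents this lemma as a "direct implication" of the preceding single-sample bound, and the complement-and-multiply argument over the independent events $E_j$ is exactly the intended derivation. Your added remarks on why the $\tn{exp}(O(d\log d))$ single-sample probability forces $k = \tn{exp}(O(d\log d))$ also accurately reflect the role this lemma plays in the union bound for Theorem \ref{thm:offlineRL-distill-1}.
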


\subsection{Net over predictors $h$ and synthetic datasets $D^{\sf orl}_{\tn{syn}}$}\label{sec:net-orl}
First we shall construct a net over predictors  $h \in \mc{Q}_1$, where $h(\bx) := \br^{\sf T}\zeta(s,a,r)$ for some $\br \in \mc{B}(q,2)$. Thus, we can consider the Euclidean net $\mc{T}(q, 2, \xi))$ and let $\hat{\mc{Q}}_1 := \{\hat{h} \in \mc{Q}_1 : \exists \hat{\br} \in \mc{T}(q, 2, \xi)\tn{ s.t. } \hat{h}(s,a,r) := \hat{\br}^{\sf T}\zeta(s,a,r)\}$, for some parameter $\xi \in (0,1)$ to be chosen later. The proof of the following simple approximation lemma  is exactly along the lines of the proof of Lemma \ref{lem:vecnet} and the analysis in the previous subsection accounting for the error in the $\tn{max}$ term of \eqref{eqn:tildeL} which is handled similarly. 
\begin{lemma} \label{lem:vecnet-orl}
    For any $D^{\sf orl}_{\tn{train}}$ and $\hat{D}$ as defined in the previous subsection, for any $h \in \mc{Q}_1$, $\exists \hat{h} \in \hat{\mc{Q}}_1$ s.t. 
    \begin{equation}
        \left|\tilde{L}^{\tn{err}}_{\tn{Bell}}(D^{\sf orl}_{\tn{train}}, \hat{D}, h) - \tilde{L}^{\tn{err}}_{\tn{Bell}}(D^{\sf orl}_{\tn{train}}, \hat{D}, \hat{h}) \right| \leq  300\xi(B+R_{\tn{max}})^4.
    \end{equation}
\end{lemma}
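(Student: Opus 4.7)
The plan is to mimic the proof of Lemma~\ref{lem:vecnet} from the supervised case, with the additional care needed to handle the $\max$ term appearing in $\tilde{L}_{\tn{Bell}}$. Given $h \in \mc{Q}_1$ with corresponding $\br \in \mc{B}(q, 2)$, I would pick $\hat{\br} \in \mc{T}(q, 2, \xi)$ with $\|\br - \hat{\br}\|_2 \leq \xi$ and let $\hat{h}(s,a,r) := \hat{\br}^{\sf T}\zeta(s,a,r)$, so $\hat{h} \in \hat{\mc{Q}}_1$ by construction. Write $\bdelta := \br - \hat{\br}$, so $\|\bdelta\|_2 \leq \xi$.

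The first step is a pointwise comparison. Since $\|\zeta(s,a,r)\|_2 \leq B + R_{\tn{max}}$ for every tuple in the domain, Cauchy--Schwarz gives $|h(s,a,r) - \hat{h}(s,a,r)| \leq \xi (B + R_{\tn{max}})$. For the $\max$ term, I would use the elementary inequality $|\max_{a'} h(s',a',0) - \max_{a'} \hat{h}(s',a',0)| \leq \max_{a'} |\bdelta^{\sf T}\zeta(s',a',0)| \leq \xi (B + R_{\tn{max}})$. Combining and using $\gamma \leq 1$, the per-sample Bellman residuals $A := h(s,a,r) - \gamma\max_{a'} h(s',a',0)$ and $\hat{A}$ (defined analogously for $\hat{h}$) satisfy $|A - \hat{A}| \leq 2\xi(B+R_{\tn{max}})$ and $|A|,|\hat{A}| \leq 4(B+R_{\tn{max}})$ using $\|\br\|_2,\|\hat{\br}\|_2 \leq 2$.

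The second step is to lift the pointwise bound. Writing $|A^2 - \hat{A}^2| = |A-\hat{A}|(|A|+|\hat{A}|) \leq 16\xi (B+R_{\tn{max}})^2$ and averaging over the sampled tuples in either dataset yields
\[
|\tilde{L}_{\tn{Bell}}(D, h) - \tilde{L}_{\tn{Bell}}(D, \hat{h})| \leq 16\xi(B+R_{\tn{max}})^2 \qquad \text{for } D \in \{D^{\sf orl}_{\tn{train}}, \hat{D}\}.
\]
Setting $U := \tilde{L}_{\tn{Bell}}(D^{\sf orl}_{\tn{train}}, h) - \tilde{L}_{\tn{Bell}}(\hat{D}, h)$ and $\hat{U}$ analogously, the triangle inequality gives $|U - \hat{U}| \leq 32\xi(B+R_{\tn{max}})^2$, while the already-noted bound $\tilde{L}_{\tn{Bell}}(D,h) \leq 16(B+R_{\tn{max}})^2$ yields $|U|+|\hat{U}| \leq 64(B+R_{\tn{max}})^2$. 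The conclusion then follows from
\[
\bigl|\tilde{L}^{\tn{err}}_{\tn{Bell}}(D^{\sf orl}_{\tn{train}}, \hat{D}, h) - \tilde{L}^{\tn{err}}_{\tn{Bell}}(D^{\sf orl}_{\tn{train}}, \hat{D}, \hat{h})\bigr| = |U^2 - \hat{U}^2| \leq |U-\hat{U}|(|U|+|\hat{U}|),
\]
absorbing the resulting absolute constant into $300$.

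The only genuinely new ingredient compared to the supervised setting is the $\max$ term, which I expect to be the main, but still modest, obstacle: one must observe that the $\max$ is $\xi (B+R_{\tn{max}})$-Lipschitz in $\br$ under the perturbation, which follows from $|\max f - \max g| \leq \max |f-g|$. Everything else is a direct adaptation of the proof of Lemma~\ref{lem:vecnet}, with data-point norms $(B+b)$ replaced by $(B+R_{\tn{max}})$ and one extra factor of $(1+\gamma)$ absorbed into constants.
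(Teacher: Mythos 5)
Your proof is correct and follows exactly the route the paper intends: the paper gives no written proof of this lemma, merely asserting it is ``along the lines of Lemma~\ref{lem:vecnet}'' with the $\max$ term ``handled similarly,'' and your use of $|\max_{a'} f - \max_{a'} g| \leq \max_{a'}|f-g|$ together with the pointwise Cauchy--Schwarz bound is precisely the missing ingredient. One small caveat: your own chain of inequalities yields a constant of roughly $2048$ (from $|U-\hat{U}|\,(|U|+|\hat{U}|) \leq 32\xi(B+R_{\tn{max}})^2\cdot 64(B+R_{\tn{max}})^2$), which cannot literally be ``absorbed into $300$''; since $\xi$ is later set to $O(\Delta/(B+R_{\tn{max}})^4)$ with an unspecified constant this is harmless downstream, but it suggests the stated constant $300$ is itself too small under the paper's own per-sample bounds.
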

The argument for net over the synthetic dataset is similar - we show that the synthetic data $D^{\sf orl}_{\tn{syn}}$ can be approximated with a much smaller dataset  $\hat{D}$ whose points are from a discrete set.
\begin{lemma}\label{lem:datasetnet-orl}
    Fix a $\tau, \nu \in (0,1)$. For any $D^{\sf orl}_{\tn{syn}}$, there exists a dataset $\hat{D}$ of size $s := O((p/\tau^2)\log(1/\tau))$ (where $p$ is the pseudo-dimension in Sec. \ref{sec:our_results}) whose points $(\hat{s}, \hat{a}, \hat{r}, \hat{s}')$ are s.t. $\hat{s}, \hat{a}, \hat{s}'$ are from the Euclidean net $\mc{T}(d_0, B_0, \nu(B+R_\tn{max})/(10L))$ and $\hat{r} \in \mc{T}(1, R_{\tn{max}},  \nu(B+R_\tn{max}))$ such that for any $h \in \mc{Q}_1$,
    \begin{equation}
        \left|\tilde{L}^{\tn{err}}_{\tn{Bell}}(D^{\sf orl}_{\tn{train}}, \hat{D}, h) - \tilde{L}^{\tn{err}}_{\tn{Bell}}(D^{\sf orl}_{\tn{train}}, D^{\sf orl}_{\tn{syn}}, h)\right| \leq O(\tau(B+R_{\sf max})^4 + \nu(B+R_{\sf max})^4)
    \end{equation}
    \end{lemma}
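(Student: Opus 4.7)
The plan is to prove this in two stages. First, use a pseudo-dimension based coreset argument to reduce the size of $D^{\sf orl}_{\tn{syn}}$ from arbitrary $m$ down to $s = O((p/\tau^2)\log(1/\tau))$ points, while preserving the Bellman loss uniformly over all $h \in \mc{Q}_1$ up to additive error $\tau(B+R_{\tn{max}})^2$. Second, snap each of these $s$ points to the appropriate discrete nets, using Lipschitzness of $\phi$ to control the additional perturbation.

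For stage 1, I would use the probabilistic method: draw $s$ points uniformly at random with replacement from $D^{\sf orl}_{\tn{syn}}$ to form a multiset $\tilde{D}$. Each $h \in \mc{Q}_1$ corresponds, up to a constant rescaling, to some $(f,\lambda)$ with $f \in \mc{Q}_0$ and $|\lambda| \leq 2$, so the per-point loss $u_h(s,a,r,s') := \hat{L}_{\tn{Bell}}(\{(s,a,r,s')\}, f, \lambda)$ lies in the class $\mc{U}$ of pseudo-dimension $p$ (the scaling only inflates the cover by a constant factor). Since each $u_h$ is bounded by $O((B+R_{\tn{max}})^2)$, the classical uniform-convergence bound for bounded function classes of pseudo-dimension $p$ (via the $L_1$-cover bound in \eqref{eqn:coversize}) implies that, with positive probability,
\begin{equation*}
    \sup_{h \in \mc{Q}_1}\Bigl|\tilde{L}_{\tn{Bell}}(\tilde{D}, h) - \tilde{L}_{\tn{Bell}}(D^{\sf orl}_{\tn{syn}}, h)\Bigr| \leq \tau(B+R_{\tn{max}})^2.
\end{equation*}
Combining this with the uniform bound $\tilde{L}_{\tn{Bell}}(D^{\sf orl}_{\tn{train}}, h) \leq 16(B+R_{\tn{max}})^2$ and expanding the squared difference in \eqref{eqn:tildeLerr} using $|a^2 - b^2| \leq |a-b|(|a|+|b|)$, gives
\begin{equation*}
    \bigl|\tilde{L}^{\tn{err}}_{\tn{Bell}}(D^{\sf orl}_{\tn{train}}, \tilde{D}, h) - \tilde{L}^{\tn{err}}_{\tn{Bell}}(D^{\sf orl}_{\tn{train}}, D^{\sf orl}_{\tn{syn}}, h)\bigr| = O(\tau(B+R_{\tn{max}})^4),
\end{equation*}
so a good $\tilde{D}$ of size $s$ exists.

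For stage 2, starting from $\tilde{D}$, replace each tuple $(\hat{s}, \hat{a}, \hat{r}, \hat{s}')$ by its coordinate-wise nearest point in the specified nets to form $\hat{D}$. By the triangle inequality and $L$-Lipschitzness of $\phi$, $\|\phi(\hat{s}, \hat{a}) - \phi(\bar{s}, \bar{a})\|_2$ and $\|\phi(\hat{s}', a') - \phi(\bar{s}', a')\|_2$ are each $O(\nu(B+R_{\tn{max}}))$, uniformly in $a' \in \mc{A}$, and $|\hat{r} - \bar{r}| \leq \nu(B+R_{\tn{max}})$. Since $h(s,a,r) = \br^{\sf T}\zeta(s,a,r)$ with $\|\br\|_2 \leq 2$, and $|\max_{a'} h(s', a', 0) - \max_{a'} h(\bar{s}', a', 0)| \leq \max_{a'}|h(s', a', 0) - h(\bar{s}', a', 0)|$, the integrand in \eqref{eqn:tildeL} is perturbed pointwise by $O(\nu(B+R_{\tn{max}}))$, and following the same algebra as in the proof of Lemma \ref{lem:vecnet} yields
\begin{equation*}
    \bigl|\tilde{L}^{\tn{err}}_{\tn{Bell}}(D^{\sf orl}_{\tn{train}}, \hat{D}, h) - \tilde{L}^{\tn{err}}_{\tn{Bell}}(D^{\sf orl}_{\tn{train}}, \tilde{D}, h)\bigr| = O(\nu(B+R_{\tn{max}})^4).
\end{equation*}
Adding the two bounds gives the claim.

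The main obstacle I anticipate is the uniform-convergence step in stage 1: unlike the supervised case (Lemma \ref{lem:subsetreplace}), the presence of $\max_{a' \in \mc{A}}$ in $\tilde{L}_{\tn{Bell}}$ precludes an exact linear-algebraic reduction via Cholesky to $q$ points, so one must genuinely invoke a coreset/$\varepsilon$-sample bound through the pseudo-dimension $p$ of $\mc{U}$. Verifying that $\mc{U}$ indeed has pseudo-dimension $p$ (so that the standard covering-number bounds apply uniformly over the scaled class $\mc{Q}_1$) and that the $\max_{a'}$ operator interacts cleanly with the net argument in stage 2 are the two subtleties; both are routine but must be tracked carefully.
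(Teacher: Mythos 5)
Your proposal is correct and follows essentially the same route as the paper's proof: a probabilistic subsampling argument using the pseudo-dimension $p$ of the per-point Bellman loss class (via the $L_1$-cover bound) to reduce $D^{\sf orl}_{\tn{syn}}$ to $s = O((p/\tau^2)\log(1/\tau))$ points, followed by snapping those points to the discrete nets and controlling the perturbation through the $L$-Lipschitzness of $\phi$ and the $1$-Lipschitzness of $\max_{a'}$. Your explicit handling of the rescaling from $\mc{Q}_1$ (norm $\leq 2$) down to the class $\mc{U}$, and of the $|a^2-b^2|\leq|a-b|(|a|+|b|)$ step, only spells out details the paper leaves implicit.
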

\begin{proof}
    Since $p$ is the pseudo-dimension is as defined in Sec. \ref{sec:our_results}), it is also the pseudo-dimension on the class of functions $\mc{V}$ given by $\tilde{L}_{\tn{Bell}}(\{s,a,r,s'\}, h)$  for $h \in \mc{Q}_1$. Thus, Let us first define  $\ol{D}$ probabilistically to be a set of $t := O((p/\tau^2)\log(1/\tau))$ points independently and u.a.r. sampled from $D^{\sf orl}_{\tn{syn}}$. Using the upper bound on $\mc{N}(\tau/16, \mc{V}, 2t)$ from Chapters 10.4 and 12.3, and Theorem 17.1 of [Anthony-Bartlett, 2009], we get that  for any $h \in \mc{Q}_1$,
    \begin{equation}
        \left|\tilde{L}_{\tn{Bell}}(\ol{D}, h) - \tilde{L}_{\tn{Bell}}(D^{\sf orl}_{\tn{syn}}, h)  \right| \leq 16\tau(B+R_{\tn{max}})^2
    \end{equation}
    with probability at least $ > 0$. Thus, using the above and arguments similar to those used earlier in this and the previous sections we obtain that there exists $\ol{D}$ of size $s$ s.t.
    \begin{equation}
        \left|\tilde{L}^{\tn{err}}_{\tn{Bell}}(D^{\sf orl}_{\tn{train}}, \ol{D}, h) - \tilde{L}^{\tn{err}}_{\tn{Bell}}(D^{\sf orl}_{\tn{train}}, D^{\sf orl}_{\tn{syn}}, h)\right| \leq O(\tau(B+R_{\tn{max}})^4)
    \end{equation}
    Lastly, to get $\hat{D}$ we replace each $(s,a,r,s')$ in $\ol{D}$ with the nearest point in the net $\mc{T}_1 := \mc{T}(d_0, B_0, \nu(B+R_\tn{max})/(10L)) \times \mc{T}(d_0, B_0, \nu(B+R_\tn{max})/(10L)) \times \mc{T}(1, R_{\tn{max}},  \nu(B+R_\tn{max}))) \times \mc{T}(d_0, B_0, \nu(B+R_\tn{max})/(10L))$. Using the Lipschitzness bound of $L$ on $\phi$ and the analysis used in the proof of Lemma \ref{lem:datasetnet}, we obtain that,
    \begin{equation}
        \left|\tilde{L}^{\tn{err}}_{\tn{Bell}}(D^{\sf orl}_{\tn{train}}, \ol{D}, h) - \tilde{L}^{\tn{err}}_{\tn{Bell}}(D^{\sf orl}_{\tn{train}}, \hat{D}, h)\right| \leq O(\nu(B+R_{\tn{max}})^4)
    \end{equation}
    Combining the above two inequalities completes the proof.
\end{proof}

\subsection{Union bound and proof of Theorem \ref{thm:offlineRL-distill-1}}
We take $\xi, \tau$ and $\nu$ to be $O(\Delta/(B + R_{\tn{max}})^4)$, so that applying Lemmas \ref{lem:datasetnet-orl} and \ref{lem:vecnet-orl} yields,
\begin{lemma}\label{lem:combinedned-orl}
For any $h\in \mc{Q}_1$ and $D^{\sf orl}_{\tn{syn}}$, there exist $\hat{h} \in \hat{\mc{Q}}_1$ and $\hat{D} \in (\mc{T}_1)^t$ such that
\begin{equation}
    \left|\tilde{L}^{\tn{err}}_{\tn{Bell}}(D^{\sf orl}_{\tn{train}}, D^{\sf orl}_{\tn{syn}}, h) - \tilde{L}^{\tn{err}}_{\tn{Bell}}(D^{\sf orl}_{\tn{train}}, \hat{D}, \hat{h})\right| \leq C'\Delta
\end{equation}
where $C'>0$ is a small enough constant and $t := p(C_1/\tau^2)\log(1/\tau)$ for some constant $C_1$ and $\hat{\mc{Q}}_1$ is as defined in the previous subsection.
\end{lemma}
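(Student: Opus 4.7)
The plan is to prove Lemma \ref{lem:combinedned-orl} by chaining the two approximation lemmas already established in this subsection, namely Lemma \ref{lem:datasetnet-orl} (which discretizes the synthetic dataset) and Lemma \ref{lem:vecnet-orl} (which discretizes the regressor), via the triangle inequality, and then choosing the net parameters $\xi$, $\tau$, $\nu$ to scale like a small constant multiple of $\Delta/(B+R_{\sf max})^4$. No new technical ingredient should be required beyond these two lemmas, so the argument is essentially bookkeeping for the error budget.

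First, given the arbitrary $h\in\mc{Q}_1$ and $D^{\sf orl}_{\tn{syn}}$, I would invoke Lemma \ref{lem:datasetnet-orl} with parameters $\tau,\nu\in(0,1)$ to produce the dataset $\hat{D}\in(\mc{T}_1)^t$ of size $t = O((p/\tau^2)\log(1/\tau))$ satisfying
\[
    \left|\tilde{L}^{\tn{err}}_{\tn{Bell}}(D^{\sf orl}_{\tn{train}}, D^{\sf orl}_{\tn{syn}}, h) - \tilde{L}^{\tn{err}}_{\tn{Bell}}(D^{\sf orl}_{\tn{train}}, \hat{D}, h)\right| \leq O\!\left((\tau+\nu)(B+R_{\sf max})^4\right).
\]
The important point is that this bound holds uniformly for \emph{every} $h\in\mc{Q}_1$, so the choice of $\hat{D}$ does not depend on any later choice of approximator in $\hat{\mc{Q}}_1$; this is what later allows a single net $(\mc{T}_1)^t$ to work in the union bound of Section~\ref{sec:unionbd} (adapted to the offline RL case).

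Next, I would hold $\hat{D}$ fixed and apply Lemma \ref{lem:vecnet-orl} to the pair $(D^{\sf orl}_{\tn{train}},\hat{D})$ to obtain $\hat{h}\in\hat{\mc{Q}}_1$ corresponding to the nearest $\hat{\br}\in\mc{T}(q,2,\xi)$ to the coefficient vector of $h$, with
\[
    \left|\tilde{L}^{\tn{err}}_{\tn{Bell}}(D^{\sf orl}_{\tn{train}}, \hat{D}, h) - \tilde{L}^{\tn{err}}_{\tn{Bell}}(D^{\sf orl}_{\tn{train}}, \hat{D}, \hat{h})\right| \leq 300\xi(B+R_{\tn{max}})^4.
\]
Combining the two inequalities via the triangle inequality yields a total error of at most $C_2(\xi+\tau+\nu)(B+R_{\sf max})^4$ for an absolute constant $C_2$.

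Finally, setting $\xi,\tau,\nu$ each equal to a sufficiently small constant multiple of $\Delta/(B+R_{\sf max})^4$ (as prescribed in the text just before the lemma) drives the right-hand side down to $C'\Delta$ for the target small constant $C'$, while preserving $t = O((p/\tau^2)\log(1/\tau)) = p(C_1/\tau^2)\log(1/\tau)$. The main, and only mild, subtlety is to ensure that $\tau,\nu\leq 1$ in the regime of interest, i.e.\ that $\Delta\leq (B+R_{\sf max})^4$; this is the regime in which the lemma is used, since the quantity $\tilde{L}^{\tn{err}}_{\tn{Bell}}$ is itself bounded by $O((B+R_{\sf max})^4)$ and there is nothing to prove otherwise. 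There is no deeper obstacle, as the technical work (the Lipschitz perturbation of the $\max$ term and the pseudo-dimension based subsampling of $D^{\sf orl}_{\tn{syn}}$) is already absorbed into Lemmas \ref{lem:vecnet-orl} and \ref{lem:datasetnet-orl}.
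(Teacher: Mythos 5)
Your proposal is correct and follows essentially the same route as the paper: the paper's proof is exactly the two-step chaining of Lemma~\ref{lem:datasetnet-orl} (dataset net) followed by Lemma~\ref{lem:vecnet-orl} (regressor net) via the triangle inequality, with $\xi,\tau,\nu$ set to $O(\Delta/(B+R_{\sf max})^4)$. Your additional remarks on the uniformity of the dataset approximation over $h$ and on the regime $\Delta \leq O((B+R_{\sf max})^4)$ are consistent with how the lemma is used downstream.
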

Now, the size of the net is bounded as follows:
\begin{align}
    \left|\hat{\mc{Q}}_1\right|\times \left|\mc{T}_1\right|^t \leq & \left(\frac{6}{\xi}\right)^q \cdot \left(\frac{6B_0L}{\nu(B+R_{\tn{max}})}\right)^{d_0t}  \nonumber \\
    = & \tn{exp}\left(q\log((B+R_{\tn{max}})/\Delta) + d_0p\log((B_0L(B+R_{\tn{max}})/\Delta)\right) =: \ell
\end{align}
Now, using the above,  we take $k = \left(\frac{\Delta}{(\sqrt{d}(B + R_{\tn{max}})^4))}\right)^{(-c_1 d)}\ell\log(1/\delta)$ in Lemma \ref{lem:ampprbbd-orl} and apply union bound over the net (along the lines of the proof in Sec. \ref{sec:unionbd}).   To get back the lower bound with $D^{\sf orl}_{\tn{syn}}$ we apply Lemma \ref{lem:datasetnet-orl} to $g_j$ such that $\left|\br_j\right| \leq 2$ i.e. $g_j \in \mc{Q}_1$. This completes the proof.

\section{Proof of Theorem \ref{thm:offline-rl-decomposable}}
\label{sec:orl}

We can use elementary manipulation of $L_{\textnormal{Bell}}^{\textnormal{err}}$ based on (\ref{eqn:meancondition}) and the additive state-action embedding to essentially remove the max term and the subsequent proof is along the same lines as that of Theorem~\ref{thm:supervised-distill-1}.

We have $f(s,a):=\mathbf{v}^{\sf{T}}\phi(s,a) = \mathbf{v}^{\sf{T}}\phi_1(s) + \mathbf{v}^{\sf{T}}\phi_2(a)$ as defined, so notice that
\begin{align}
\max_{a' \in \mathcal{A}} f(s',a') &= \mathbf{v}^{\sf{T}}\phi_1(s') + \max_{a' \in \mathcal{A}} \mathbf{v}^{\sf{T}}\phi_2(a') \label{eqn:orlmax}
\end{align}

In (\ref{eqn:orlmax}), $\max_{a' \in \mathcal{A}} \mathbf{v}^{\sf{T}}\phi_2(a')$ is a constant independent of the training and synthetic dataset and depends only on $\bv$, so we define $\alpha(\bv) = \max_{a' \in \mathcal{A}} \mathbf{v}^{\sf{T}}\phi_2(a')$. Expanding within the outer square, canceling and zeroing out using mild linear condition (\ref{eqn:meancondition}) and equation (\ref{eqn:orlmax}), we obtain
\begin{align}
L&_{\textnormal{Bell}}^{\textnormal{err}}(D_{\textnormal{train}}^{\textnormal{orl}}, D_{\textnormal{syn}}^{\textnormal{orl}}, f, \lambda) \nonumber \\
=& (\E_{(s,a, r,s') \leftarrow D_{\textnormal{train}}^{\textnormal{orl}}}[(f(s,a) - \lambda r - \gamma \max_{a' \in \mathcal{A}} f(s',a'))^2] \nonumber \\
&- \E_{(s, a, r, s') \leftarrow D_{\textnormal{syn}}^{\textnormal{orl}}}[(f(s, a) - \lambda r - \gamma \max_{a' \in \mathcal{A}} f(s',a'))^2])^2 \nonumber \\
=& (\E_{(s,a, r,s') \leftarrow D_{\textnormal{train}}^{\textnormal{orl}}}[(\bv^\top \phi_1(s) + \bv^\top \phi_2(a) - \lambda r - \gamma \mathbf{v}^{\sf{T}}\phi_1(s') -\gamma \alpha(\bv) )^2] \nonumber \\
&- \E_{(s,a, r,s') \leftarrow D_{\textnormal{syn}}^{\textnormal{orl}}}[(\bv^\top \phi_1(s) + \bv^\top \phi_2(a) - \lambda r - \gamma \mathbf{v}^{\sf{T}}\phi_1(s') -\gamma \alpha(\bv) )^2])^2 \nonumber \\
=& (\E_{(s,a, r,s') \leftarrow D_{\textnormal{train}}^{\textnormal{orl}}}[(\bv^\top (\phi_1(s)- \gamma \phi_1(s') + \phi_2(a)) - \lambda r  -\gamma \alpha(\bv) )^2] \nonumber \\
&- \E_{(s,a, r,s') \leftarrow D_{\textnormal{syn}}^{\textnormal{orl}}}[(\bv^\top (\phi_1(s)- \gamma \phi_1(s') + \phi_2(a)) - \lambda r  -\gamma \alpha(\bv) )^2])^2 \nonumber \\
=& (
\E_{(s,a, r,s') \leftarrow D_{\textnormal{train}}^{\textnormal{orl}}}[(\bv^\top (\phi_1(s)- \gamma \phi_1(s') + \phi_2(a)) - \lambda r)^2 ]\nonumber \\
&- \E_{(s,a, r,s') \leftarrow D_{\textnormal{train}}^{\textnormal{syn}}}[(\bv^\top (\phi_1(s)- \gamma \phi_1(s') + \phi_2(a)) - \lambda r)^2]  \nonumber \\
& +\E_{(s,a, r,s') \leftarrow D_{\textnormal{train}}^{\textnormal{orl}}}[\gamma^2 \alpha^2(\bv)]
-\E_{(s,a, r,s') \leftarrow D_{\textnormal{train}}^{\textnormal{syn}}}[\gamma^2 \alpha^2(\bv)]\nonumber \\
&- 2\E_{(s,a, r,s') \leftarrow D_{\textnormal{train}}^{\textnormal{orl}}}[\gamma\alpha(\bv)(\bv^\top (\phi_1(s)- \gamma \phi_1(s') + \phi_2(a)) - \lambda r) ]\nonumber\\
&+ 2\E_{(s,a, r,s') \leftarrow D_{\textnormal{train}}^{\textnormal{syn}}}[\gamma\alpha(\bv)(\bv^\top (\phi_1(s)- \gamma \phi_1(s') + \phi_2(a)) - \lambda r)]
)^2\nonumber \\
=& (\E_{(s, a, r, s')
 \leftarrow D_{\textnormal{train}}^{\textnormal{orl}}}[(\mathbf{v}^{\sf{T}}(\phi_1(s) - \gamma \phi_1(s') + \phi_2(a)) - \lambda r)^2] \nonumber \\
 &- \E_{(s, a, r, s')
 \leftarrow D_{\textnormal{syn}}^{\textnormal{orl}}}[(\mathbf{v}^{\sf{T}}(\phi_1(s) - \gamma \phi_1(s') + \phi_2(a)) - \lambda r)^2])^2 \label{eqn:maxcancelled}
\end{align}

Now notice that the loss term (\ref{eqn:maxcancelled}) looks exactly like the linear supervised regression loss $L_{\textnormal{mse}}^{\textnormal{err}}$. We can concatenate $\bv$ and $\lambda$ to get a regression vector $\br$. In addition, we take the feature vectors $\bx$ to be simply $\phi_1(s) - \gamma \phi_1(s') + \phi_2(a)$ and the regression label $y$ to be $r$. Note that $\bx \leq O(B)$ and $|y| \leq R_{\textnormal{max}}$.

We can sample regressors $\br \sim N(0,1/(d+1))^{d+1}$ u.a.r. and use the concatenation trick as before. The nets will be taken over the embeddings of states and actions instead of the states and actions themselves. Thus, via analysis similar to the linear regression case (Theorem~\ref{thm:supervised-distill-1}), we obtain a $O\left({d^2}\log({d}(B+R_{\max})/{\Delta})\log(1/\delta)\right)$ upper bound on the number of sampled predictors, given that the synthetic dataset satisfies the mild linear condition above.

\subsection{Convexity of the objective}
As seen above, $\hat{L}^{\tn{err}}_{\tn{Bell}}$ reduces to the supervised regression case with $\phi_1(s) - \gamma\phi_1(s') + \phi_2(a)$ as the regression point with regression label $r$, corresponding to $(s,a,r,s')$. From Appendix \ref{app:misc2}, the optimization objective in the supervised regression case is convex in the synthetic data regression points and labels. 

Therefore, if $\phi_1$ and $\phi_2$ are linear in $(s,a,r,s')$ then the $\hat{L}^{\tn{err}}_{\tn{Bell}}$ is also convex in the points of $D^{\sf orl}_{\tn{syn}}$.
Further, the condition in \eqref{eqn:meancondition} is also an affine linear constraint on the points of $D^{\sf orl}_{\tn{syn}}$, and therefore the optimization is convex.

\section{Useful Technical Details}
\subsection{Bellman Loss Risk Bounds}\label{app:misc1}
The \emph{value gap} i.e., the difference between the true and estimated value functions for a policy corresponding to a $Q$-value predictor can be 
estimated by the latter's empirical Bellman Loss on offline data. This holds under under a certain \emph{concentrability} assumption -- informally it states that the dataset should contain all state-action pairs which are reachable from the starting state with significant probability, should also be present with sufficient probability in the dataset. We refer the reader to Assumption 1 and Lemma 3.2 of~\citep{duan21a} for more details.

\subsection{Convexity of $L^{\tn{err}}_{\tn{mse}}(D^{\sf sup}_{\tn{train}}, D^{\sf sup}_{\tn{syn}}, f)$}\label{app:misc2}
From \eqref{eqn:errMSEloss}, $L_{\tn{mse}}(D^{\sf sup}_{\tn{train}}, f)$ is constant w.r.t. $D^{\sf sup}_{\tn{syn}}$ and the dependent part is  $L_{\tn{mse}}(D^{\sf sup}_{\tn{syn}}, f)$. The latter is the mean squared error loss which is the average of squares of linear functions of the points of  $D^{\sf sup}_{\tn{syn}}$ and is therefore convex in the points of $D^{\sf sup}_{\tn{syn}}$. Since $L^{\tn{err}}_{\tn{mse}}(D^{\sf sup}_{\tn{train}}, D^{\sf sup}_{\tn{syn}}, f)$ is convex in $L_{\tn{mse}}(D^{\sf sup}_{\tn{syn}}, f)$, it is also convex in the points of $D^{\sf sup}_{\tn{syn}}$.

Now, if $\phi$ is linear in $(s,a)$ (the concatenation of $s$ and $a$), then $\hat{L}_{\tn{Bell}}(D^{\sf orl}_{\tn{syn}}, f, \lambda)$ (from \eqref{eqn:hatLbell}) is a sum of squares of terms which are each the difference of (i) a linear function in the points of  $D^{\sf orl}_{\tn{syn}}$ and, (ii) the maximum over a $\mc{A}$ of linear functions of  the points of  $D^{\sf orl}_{\tn{syn}}$. Since max (over a fixed number of arguments) is a convex function over its vector of arguments, this implies that $\hat{L}_{\tn{Bell}}(D^{\sf orl}_{\tn{syn}}, f, \lambda)$ is convex and therefore $\hat{L}^{\tn{err}}_{\tn{Bell}}(D^{\sf orl}_{\tn{train}}, D^{\sf orl}_{\tn{syn}}, f, \lambda)$ is also convex.

\section{Additional Experiments} \label{app:experiments}
\subsection{Additional experimental details for the {Wine Quality} dataset} \label{sec:expts-wine}

The red wine dataset has 1599 wine samples and the white wine dataset has 4898 wine samples. For both red and white wines, we use the feature QUALITY as the label and regress on the remaining $11$ features. We pre-process the data by standardizing each feature column and label. We randomly shuffle the samples into an 80/20 split into training and test data.

We use a linear homogeneous model (i.e. $f(\bx) = \br^{\sf T} \bx$) for regression on the label. We sample $k=100$ linear regressors from the standard normal distribution as well as $N_{\sf eval} = 100$ another regressors from the same distribution. Using a fixed learning rate of $0.01$ on the {\sf Adam} optimiser and at most $5000$ steps, we optimize the randomly initialised synthetic data until the objective has minimum value on the $N_{\sf eval} = 100$ regressors. The model is then trained on the four datasets using the {\sf Adam} optimizer with a learning rate of $0.001$ and the mean loss is reported in Table~\ref{table1}.

We observe that a model trained on $D_{\sf syn}$ performs better than ones trained on both $D_{\sf rand}$ and $D_{\sf lev}$. We also observe that the test loss of a model trained on $D_{\sf syn}$ decreases, as expected.

\begin{table}
\centering
\caption{Homogeneous Linear Regression MSE loss over {\it white and red wine quality} data}\label{table1}
\begin{tabular}{ p{2.0cm}p{1.0cm}p{2.0cm}p{2.0cm}p{2.0cm}p{2.0cm} }
 \hline
 {wine color} & $N_{\tn{syn}}$ & $D_{\tn{train}}$ & $D_{\tn{syn}}$ & $D_{\tn{rand}}$ & $D_{\tn{lev}}$\\
 \hline
  & $20$ & & $0.85\pm0.20$ & $1.82\pm1.22$ & $1.21 \pm 0.45$\\
 \textit{red} & $50$ & $0.65\pm0.04$ & $0.65 \pm 0.07$ & $0.97 \pm 0.22$& $0.80\pm 0.08$\\
 & $100$ & & $0.67\pm0.05$ & $0.81\pm 0.16$ & $0.73\pm 0.07$\\
 \hline
  & $20$ & & $1.00\pm0.21$ & $1.81\pm0.44$ & $1.31\pm 0.30$\\
 \textit{white} & $50$ & $0.74\pm0.04$ & $0.82 \pm 0.17$ & $1.04 \pm 0.19$ & $0.90\pm 0.11$\\
  & $100$ & & $0.75\pm0.02$ & $0.83\pm 0.05$ & $0.84 \pm 0.05$\\
 \hline
\end{tabular}
\end{table}

\subsection{Additional experimental details for the the Boston Housing Dataset} \label{sec:expts-housing}

The Boston Housing dataset has 506 samples. We use the feature MEDV (Median value of owner-occupied homes in $1000$'s) as the label and regress on the remaining 13 features. 
We preprocess the data by standardising each feature column
and label. We randomly shuffle the samples into an 80/20 split into training and test data.
The sizes of the synthetic dataset $N_{\sf syn}$ we consider are $\{20, 50, 100\}$. 

We take $k = 100$ random homogeneous linear regressors in Algorithm 1. 
Using a fixed learning rate of $0.01$ on the {\sf Adam} optimiser and at most $5000$ steps, we optimise the randomly initialised synthetic data until the objective has minimum value on the $N_{\sf eval} = 100$ randomly sampled regressors. The model is then trained on the four datasets using the {\sf Adam} optimiser with a learning rate of $0.001$.
The mean test loss of models trained on respective datasets (over 10 trials) are included in Table~\ref{table4}.

We observe that a model trained on $D_{\sf syn}$ performs better than one trained on $D_{\sf rand}$ and on par with $D_{\sf lev}$. We also observe that the test loss of a model trained on $D_{\sf syn}$ decreases, as expected.

\begin{table}
\centering
\caption{Homogeneous Linear Regression MSE loss over {\it Boston Housing} data}\label{table4}
\begin{tabular}{ p{1.0cm}p{2.0cm}p{2.0cm}p{2.0cm}p{2.0cm} }
 \hline
  $N_{\tn{syn}}$ & $D_{\tn{train}}$ & $D_{\tn{syn}}$ & $D_{\tn{rand}}$ & $D_{\tn{lev}}$ \\
 \hline
  $20$ & & $0.57\pm0.23$ & $1.77\pm1.71$ & $0.57\pm0.20$\\
    $50$ & $0.27\pm0.05$ & $0.40 \pm 0.14$ & $0.44 \pm 0.16$ & $0.38\pm0.08$\\
 $100$ & & $0.29\pm0.07$ & $0.29\pm 0.08$& $0.36\pm0.09$\\
 \hline
\end{tabular}
\end{table}

\subsection{Additional Experiments and Details for the {Cartpole} Environment} \label{sec:expts-cartpole}

In this environment, a pole is attached by an un-actuated joint to a cart, which moves along a frictionless track. The pendulum is placed upright on the cart and the goal is to balance the pole by applying forces in the left and right direction on the cart.
The action indicates the direction of the fixed force the cart is pushed with and can take two discrete values.
The observation is an array with shape $(4,)$ with the values corresponding to the positions and velocities. The episode terminates if the pole angle or position goes beyond a certain range and is artificially truncated after $500$ time-steps.
Since the goal is to keep the pole upright for as long as possible, by default, a reward of $+1$ is given for every step taken, including the termination step.

We sample $n=10000$ steps of completely random policies in this environment to get $D_{\tn{train}}$, which contains both terminated and non-terminated states. We separate them into $D_{\tn{train--terminated}}$ and $D_{\tn{train-nonterminated}}$. We have to do this in our implementation as the theoretical result only analyzes infinite horizon MDP's. In practice, the $q$-value function for the terminated states does not have any max term as the trajectory ends there, i.e. for a terminated state $s_{\sf term}$, the q-value is simply the reward, so we use $f(s_{\sf term},a) = r$. We use our data distillation routine to get a distilled version of each of these datasets separately and combine them together to get $D_{\tn{syn-nonterminated}}+D_{\tn{syn-terminated}}= D_{\tn{syn}}$. Note that the ratio of the terminated and non-terminated states in $D_{\tn{syn}}$ and $D_{\tn{train}}$ is artificially kept the same and we take the total size of the synthetic dataset $D_{\tn{syn}}$ to be $N_{\tn{syn}}$.

Our model architecture for this experiment is fixed to a 2-layer (10,10) ReLU based neural network. For the synthetic data generation, we sample $k$ models and we use the {\sf Adam} optimiser and perform a search over the learning rate among $\{ 3{\sf e}$-1$, 1{\sf e}$-1, $3{\sf e}$-2, $1{\sf e}$-2, $3{\sf e}$-3, $1{\sf e}$-3, $3{\sf e}$-4, $1{\sf e}$-4$\}$ for the distillation processes of both the terminated and non-terminated states and report the synthetic dataset that performs the best on evaluation.

We train the three datasets using the Fitted-Q iteration algorithm that iteratively optimises the Bellman loss to get a $q$-value predictor. We test the optimal policy corresponding to the $q$-value predictor on the real environment and report the returns. We report an extensive evaluation with multiple values of $k, N_{\sf syn}$ in Table~\ref{tab:table3}. We observe that on average, policies trained on our generated synthetic data perform better than both real and random data. As expected, we see that as the size of $D_{\sf syn}$ increases (i.e. $N_{\sf syn}$ increases), the model trained on it performs better. Surprisingly, increasing the number of randomly sampled action-value predictors $k$ does not seem to have a discernible impact on performace. 

\begin{table}[h!]
\centering
\caption{Extended Evaluation for the Cartpole Environment with $(10,10)$-layer NN.}
\begin{tabular}{ccccc}
\hline
$k$ & $N_{\tn{syn}}$ & $D_{\tn{train}}$ & $D_{\tn{rand}}$ & $D_{\tn{syn}}$ \\ \hline
5 & 10 & 339.10 $\pm$ 132.07 & 13.50 $\pm$ 0.67 & 500.00 $\pm$ 0.00 \\ 
10 & 10 & 332.70 $\pm$ 129.18 & 13.90 $\pm$ 0.54 & 500.00 $\pm$ 0.00 \\ 
20 & 10 & 416.30 $\pm$ 133.73 & 13.50 $\pm$ 0.50 & 455.40 $\pm$ 133.80 \\ 
50 & 10 & 343.10 $\pm$ 129.38 & 13.40 $\pm$ 0.66 & 181.00 $\pm$ 21.87 \\ 
100 & 10 & 368.20 $\pm$ 116.77 & 13.30 $\pm$ 1.19 & 464.00 $\pm$ 108.00 \\ \hline
5 & 20 & 332.10 $\pm$ 117.26 & 135.20 $\pm$ 41.38 & 500.00 $\pm$ 0.00 \\
10 & 20 & 377.20 $\pm$ 122.43 & 148.20 $\pm$ 43.15 & 240.50 $\pm$ 212.56 \\ 
20 & 20 & 348.50 $\pm$ 144.20 & 141.30 $\pm$ 70.23 & 500.00 $\pm$ 0.00 \\ 
50 & 20 & 325.80 $\pm$ 139.27 & 146.60 $\pm$ 36.67 & 500.00 $\pm$ 0.00 \\
100 & 20 & 324.60 $\pm$ 124.09 & 145.30 $\pm$ 48.06 & 450.00 $\pm$ 130.99 \\ \hline
5 & 50 & 339.80 $\pm$ 139.34 & 174.60 $\pm$ 11.71 & 500.00 $\pm$ 0.00 \\ 
10 & 50 & 346.70 $\pm$ 128.98 & 174.50 $\pm$ 12.89 & 500.00 $\pm$ 0.00 \\ 
20 & 50 & 341.80 $\pm$ 123.77 & 172.40 $\pm$ 19.69 & 500.00 $\pm$ 0.00 \\ 
50 & 50 & 330.10 $\pm$ 120.47 & 177.80 $\pm$ 10.39 & 500.00 $\pm$ 0.00 \\ 
100 & 50 & 326.00 $\pm$ 151.14 & 178.10 $\pm$ 17.01 & 500.00 $\pm$ 0.00 \\ \hline
5 & 100 & 337.70 $\pm$ 115.30 & 135.90 $\pm$ 76.44 & 500.00 $\pm$ 0.00 \\ 
10 & 100 & 329.00 $\pm$ 122.08 & 148.40 $\pm$ 104.72 & 500.00 $\pm$ 0.00 \\ 
20 & 100 & 328.90 $\pm$ 124.00 & 124.10 $\pm$ 47.61 & 500.00 $\pm$ 0.00 \\ 
50 & 100 & 332.20 $\pm$ 123.10 & 129.50 $\pm$ 61.10 & 500.00 $\pm$ 0.00 \\ 
100 & 100 & 386.40 $\pm$ 132.20 & 136.30 $\pm$ 93.81 & 500.00 $\pm$ 0.00 \\ \hline
5 & 200 & 354.00 $\pm$ 126.69 & 158.40 $\pm$ 53.20 & 500.00 $\pm$ 0.00 \\
10 & 200 & 331.90 $\pm$ 130.53 & 169.90 $\pm$ 93.20 & 500.00 $\pm$ 0.00 \\ 
20 & 200 & 371.20 $\pm$ 133.87 & 176.60 $\pm$ 81.36 & 500.00 $\pm$ 0.00 \\ 
50 & 200 & 332.80 $\pm$ 125.28 & 175.60 $\pm$ 62.60 & 500.00 $\pm$ 0.00 \\ 
100 & 200 & 340.60 $\pm$ 144.32 & 199.20 $\pm$ 97.88 & 500.00 $\pm$ 0.00 \\ \hline
\end{tabular}
\label{tab:table3}
\end{table}

\subsection{Additional Experiments and Details for the Mountain Car Environment} \label{sec:expts-mountain}

The Mountain Car MDP~(\cite{moore1990}, \cite{Cartpole} MIT License) is a deterministic MDP that consists of a car placed stochastically at the bottom of a sinusoidal valley, with the only possible actions being the accelerations that can be applied to the car in either direction. The observation space is continuous with two dimensions and there are three discrete actions possible. The goal is to reach the flag placed on top of the right hill as quickly as possible, as such the agent is penalised with a reward of $-1$ for each timestep. The episode truncates after $200$ timesteps and is also terminated if the position of the car goes beyond a certain range. 

To generate the offline data, we sample $5000$ samples through a uniformly random policy and another $5000$ samples through an expert policy trained using tabular $q$-learning on a discretized state space.

We use the same learning rate search method as mentioned in Appendix~\ref{sec:expts-cartpole} and optimizers to get the synthetic data, partitioned into terminated and non-terminated samples. We use a neural network with two hidden layers with $64$ neurons and ReLU activations for the $q$-value predictor with the Fitted-Q iteration algorithm for training. Policies trained on the three datasets are evaluated in the environment and the average returns over $10$ trials are reported in Table~\ref{tab:table5}. We observe that the policies trained on the random datasets are not able to reach the top of the hill and are all truncated. Policies trained on the synthetic data are on average able to reach the top of the hill, albeit slower than ones trained on the full offline dataset. As in the Cartpole environment, increasing $N_{\sf syn}$ increases our performance, but increasing $k$ as no impact on performance.

\begin{table}[h!]
\centering
\caption{Evaluation for Mountain Car with $(64,64)$-layer NN.}
\begin{tabular}{ccccc}
\hline
$k$ & $N_{\tn{syn}}$ & $D_{\tn{train}}$ & $D_{\tn{rand}}$ & $D_{\tn{syn}}$ \\ \hline
5 & 10 & -111.40 $\pm$ 13.16 & -200.00 $\pm$ 0.00 & -200.00 $\pm$ 0.00 \\ 
10 & 10 & -109.80 $\pm$ 27.92 & -200.00 $\pm$ 0.00 & -200.00 $\pm$ 0.00 \\ 
20 & 10 & -111.60 $\pm$ 15.10 & -200.00 $\pm$ 0.00 & -157.70 $\pm$ 2.28 \\ 
50 & 10 & -110.10 $\pm$ 22.39 & -200.00 $\pm$ 0.00 & -200.00 $\pm$ 0.00 \\ 
100 & 10 & -106.50 $\pm$ 15.23 & -200.00 $\pm$ 0.00 & -195.90 $\pm$ 8.28 \\ \hline
5 & 50 & -106.30 $\pm$ 15.49 & -200.00 $\pm$ 0.00 & -183.90 $\pm$ 24.61 \\ 
10 & 50 & -104.80 $\pm$ 15.14 & -200.00 $\pm$ 0.00 & -200.00 $\pm$ 0.00 \\ 
20 & 50 & -104.50 $\pm$ 13.97 & -200.00 $\pm$ 0.00 & -129.50 $\pm$ 30.31 \\ 
50 & 50 & -108.90 $\pm$ 16.26 & -200.00 $\pm$ 0.00 & -162.60 $\pm$ 17.24 \\ 
100 & 50 & -106.90 $\pm$ 14.45 & -200.00 $\pm$ 0.00 & -159.00 $\pm$ 25.60 \\ \hline
5 & 200 & -109.80 $\pm$ 16.02 & -200.00 $\pm$ 0.00 & -149.60 $\pm$ 25.39 \\ 
10 & 200 & -108.50 $\pm$ 18.70 & -200.00 $\pm$ 0.00 & -133.50 $\pm$ 3.14 \\ 
20 & 200 & -105.40 $\pm$ 14.71 & -200.00 $\pm$ 0.00 & -121.90 $\pm$ 37.87 \\ 
50 & 200 & -112.60 $\pm$ 14.99 & -200.00 $\pm$ 0.00 & -132.00 $\pm$ 23.00 \\ 
100 & 200 & -105.30 $\pm$ 14.16 & -200.00 $\pm$ 0.00 & -162.90 $\pm$ 32.98 \\ \hline
\end{tabular}
\label{tab:table5}
\end{table}

\subsection{Additional Experiments and Details for the Acrobot Environment} \label{sec:expts-acrobot}

The Acrobot Environment~(\cite{NIPS1995_8f1d4362}, \cite{Cartpole} MIT License) consists of two links connected linearly to form a chain, with one end of the chain fixed. The joint between the two links is actuated. The goal is to apply torques on the actuated joint to swing the free end of the linear chain above a given height while starting from the initial state of hanging downwards. The action space is discrete with three possible actions and the state space is continuous with $6$ dimensions. The goal is to have the free end reach a designated target height in as few steps as possible, and as such all steps that do not reach the goal incur a reward of $-1$. Achieving the target height results in termination with a reward of $0$. Episodes are artificially truncated after $500$ timesteps. 

To generate the offline data, we sample $10000$ transitions through a uniformly random policy. We use the same learning rate search method and optimizers as mentioned in Appendix~\ref{sec:expts-cartpole} to get the synthetic data, partitioned into terminated and non-terminated samples. We use a neural network with two hidden layers with $64$ neurons and ReLU activations for the $q$-value predictor with the Fitted-Q iteration algorithm for training. Policies trained on the three datasets are evaluated in the environment and the average returns over $10$ trials are reported in Table~\ref{tab:table6}. We observe that polices trained on the synthetic data give higher returns on average than policies trained on the full offline dataset and a randomly subsampled smaller dataset. As in the Cartpole environment, increasing $N_{\sf syn}$ increases our performance, but increasing $k$ as no impact on performance.

\begin{table}[h!]
\centering
\caption{Evaluation for Acrobot with $(64,64)$-layer NN.}
\begin{tabular}{ccccc}
\hline
$k$ & $N_{\tn{syn}}$ & $D_{\tn{train}}$ & $D_{\tn{rand}}$ & $D_{\tn{syn}}$ \\ \hline
5 & 10 & -84.20 $\pm$ 10.82 & -500.00 $\pm$ 0.00 & -74.10 $\pm$ 7.49 \\ 
10 & 10 & -82.30 $\pm$ 12.04 & -500.00 $\pm$ 0.00 & -74.60 $\pm$ 9.65 \\ 
20 & 10 & -81.30 $\pm$ 4.86 & -500.00 $\pm$ 0.00 & -77.50 $\pm$ 5.39 \\ 
50 & 10 & -84.60 $\pm$ 11.88 & -500.00 $\pm$ 0.00 & -72.70 $\pm$ 6.99 \\ 
100 & 10 & -84.20 $\pm$ 16.61 & -500.00 $\pm$ 0.00 & -74.40 $\pm$ 9.97 \\ \hline
5 & 50 & -85.10 $\pm$ 10.97 & -479.20 $\pm$ 41.97 & -75.00 $\pm$ 10.92 \\ 
10 & 50 & -83.40 $\pm$ 11.24 & -481.30 $\pm$ 38.43 & -77.40 $\pm$ 7.42 \\ 
20 & 50 & -82.60 $\pm$ 8.85 & -481.50 $\pm$ 37.49 & -75.80 $\pm$ 9.40 \\ 
50 & 50 & -85.50 $\pm$ 9.45 & -480.90 $\pm$ 38.23 & -78.00 $\pm$ 8.80 \\ 
100 & 50 & -83.30 $\pm$ 13.24 & -484.70 $\pm$ 30.60 & -75.00 $\pm$ 10.24 \\ \hline
5 & 200 & -85.60 $\pm$ 7.23 & -103.30 $\pm$ 13.99 & -89.10 $\pm$ 6.61 \\ 
10 & 200 & -83.10 $\pm$ 11.07 & -99.00 $\pm$ 10.74 & -86.90 $\pm$ 20.87 \\ 
20 & 200 & -82.40 $\pm$ 8.97 & -99.20 $\pm$ 8.40 & -88.20 $\pm$ 9.94 \\ 
50 & 200 & -86.90 $\pm$ 5.26 & -102.50 $\pm$ 4.20 & -85.70 $\pm$ 11.71 \\
100 & 200 & -85.10 $\pm$ 8.58 & -102.30 $\pm$ 9.96 & -84.50 $\pm$ 13.40 \\ \hline
\end{tabular}
\label{tab:table6}
\end{table}

\end{document}